\documentclass{article}

\usepackage{microtype}
\usepackage{graphicx}
\usepackage{subcaption}
\usepackage{booktabs} 

\usepackage{hyperref}

\usepackage[preprint]{icml2026}

\usepackage{amsmath}
\usepackage{amssymb}
\usepackage{mathtools}
\usepackage{amsthm}

\newif\ifanonymous
\anonymousfalse

\usepackage[capitalize,noabbrev]{cleveref}
\usepackage{thmtools, thm-restate}

\usepackage{thmtools}
\usepackage{enumitem}
\usepackage{cuted}
\usepackage{stfloats}

\theoremstyle{plain}
\newtheorem{theorem}{Theorem}[section]

\newtheorem{lemma}[theorem]{Lemma}
\newtheorem{corollary}[theorem]{Corollary}
\theoremstyle{definition}
\newtheorem{definition}[theorem]{Definition}

\theoremstyle{remark}
\newtheorem{remark}[theorem]{Remark}
\newtheorem{observation}[theorem]{Observation}
\newtheorem{fact}[theorem]{Fact}
\theoremstyle{definition}
\newtheorem{model}[theorem]{Model}

\providecommand{\argmin}{\operatornamewithlimits{\arg\min}}
\providecommand{\AlphaT}{\alpha_{\mathrm{alt}}}
\providecommand{\AlphaC}{{\alpha_{\mathrm{cur}}}}
\providecommand{\AT}{{\omega}_{\mathrm{alt}}}
\providecommand{\AC}{{\omega}_{\mathrm{cur}}}

\icmltitlerunning{$k$-Means in High Dimensions}

\begin{document}

\twocolumn[

  \icmltitle{The Catastrophic Failure of \emph{the} $k$-Means Algorithm in High Dimensions, \\ and How Hartigan's Algorithm Avoids It}



  \icmlsetsymbol{equal}{*}

  \begin{icmlauthorlist}
    \icmlauthor{Roy R. Lederman}{ysds}
    \icmlauthor{David Silva-Sánchez}{yamth}
    \icmlauthor{Ziling Chen}{ysds}
    \icmlauthor{Gilles Mordant}{yamth}
    \icmlauthor{Amnon Balanov}{tau}
    \icmlauthor{Tamir Bendory}{tau}
  \end{icmlauthorlist}

  \icmlaffiliation{ysds}{Department of Statistics and Data Science, Yale University, New Haven, CT}
  \icmlaffiliation{yamth}{Department of Applied and Computational Mathematics, Yale University, New Haven, CT}
  \icmlaffiliation{tau}{School of Electrical and Computer Engineering, Tel Aviv University, Tel Aviv, Israel}

  \icmlcorrespondingauthor{Roy R. Lederman}{}

  \icmlkeywords{k-means, Clustering, Lloyd's Algorithm, Hartigan's Algorithm, High-Dimensional Statistics, Fixed Points, Gaussian Mixture Model, Signal-to-Noise Ratio}

  \vskip 0.3in
]



\printAffiliationsAndNotice{}  

\begin{abstract}

  Lloyd's $k$-means algorithm is one of the most widely used clustering methods. We prove that in high-dimensional, high-noise settings, the algorithm exhibits catastrophic failure: with high probability, essentially every partition of the data is a fixed point. Consequently, Lloyd's algorithm simply returns its initial partition — even when the underlying clusters are trivially recoverable by other methods. In contrast, we prove that Hartigan's $k$-means algorithm does not exhibit this pathology. Our results show the stark difference between these algorithms and offer a theoretical explanation for the empirical difficulties often observed with $k$-means in high dimensions.
  
\end{abstract}

\begin{figure*}[b!]
    \centering
    \includegraphics[width=0.75\linewidth]{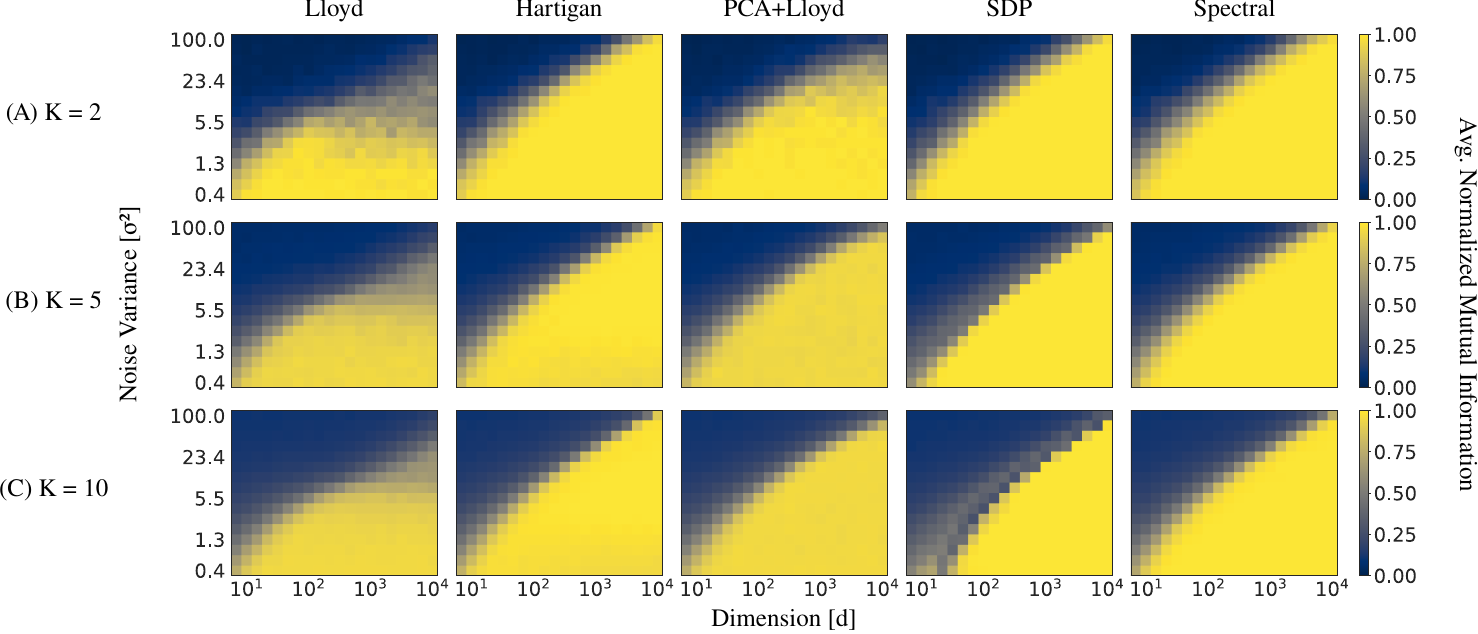}
    \caption{Normalized mutual information (NMI; see Definition~\ref{def:preliminaries:nmi}) between the ground-truth
    partition and the output of each clustering algorithm.
    Each entry reports the mean over $100$ independent trials: in each trial, we sample data from the Gaussian mixture model (GMM) in Model~\ref{model:gmm2} (generalized to $ K\geq 2$) with $\tau^2=1.0$ and $20$ samples per class, and run each algorithm until
    convergence. The results illustrate that in the high-noise, high-dimensional regime, Lloyd’s $k$-means performs poorly
    relative to the other methods. In contrast, Hartigan’s algorithm achieves performance comparable to spectral clustering and
    semidefinite-programming (SDP) based clustering.
    See Section~\ref{sec:results:gmm} for details. 
    }
    \label{fig:nmi_vs_k}
\end{figure*}

\section{Introduction}\label{sec:introduction}

Clustering is a core problem in statistics and machine learning. One of the most common formulations of this problem is $k$-means \citep{macqueen1967multivariate}, which aims to minimize intra-cluster variance; see \citet{bock2008origins} for a historical account. Solutions to the $k$-means problem are typically approximated using Lloyd's iterative $k$-means algorithm~\citep{lloyd1982least,forgy1965cluster}, which is so synonymous with the problem that it is referred to as {\emph{the}} $k$-means algorithm. To this day, the latter is still considered one of the most important algorithms in data analysis \citep{wu2008top}.

However, with the rise of high-dimensional data analysis applications, such as gene expression, text analysis, and imaging, it has been observed that Lloyd's $k$-means algorithm encounters difficulties in high-dimensional settings (e.g., \citealp{hartigan1975clustering,steinley2006k,zha2001spectral,ding2004k}). 
In this paper, we prove that these difficulties reflect a critical problem that leads to a \emph{catastrophic failure} of the algorithm \emph{even in very easy problems}.
To complete the picture, we also prove that Hartigan's $k$-means algorithm \citep{hartigan1975clustering}, a greedy variant of Lloyd's algorithm, succeeds where Lloyd's algorithm fails.

\subsection{Main Results}

The following is an abridged version of the results, omitting some of the nuances from the full statements in Section~\ref{sec:formal_setup}, Corollary~\ref{cor:maindiff:typical:all0}, and Corollary~\ref{cor:all_partitions:not_fixed:bound:main}.

\begin{theorem}[Informal: high-noise, high-dimensional, finite-sample behavior of Lloyd vs. Hartigan]
\label{thm:informal_no_spurious_fixed_points}
Consider $n \in \mathbb{N}$ observed samples $x_1,\dots,x_n$ from a two-cluster ($K=2$) Gaussian mixture model in $\mathbb{R}^d$, with standard normally distributed means $\mu_1^\star,\mu_2^\star\in\mathbb{R}^d$ and isotropic noise covariance $\sigma^2 I_d$.
Let $\mathcal{F}_{\mathrm{Lloyd}}$ denote the event that \emph{\underline{all (but exceptionally unbalanced) partitions}} are a fixed point of Lloyd's algorithm, and let $\mathcal{F}_{\mathrm{Hart}}$ denote the event that \emph{\underline{there exists an incorrect partition}} that is a fixed point of Hartigan's algorithm.
Then, for $\sigma^2 > n$,
\begin{align}
   1- \mathbb{P}(\mathcal{F}_{\mathrm{Lloyd}})
    &\ \lesssim \ 
    2^n\, n \left(1-\frac{1}{n^2}\right)^{d/4},
    \\
    \mathbb{P}(\mathcal{F}_{\mathrm{Hart}})
    &\ \lesssim\
    2^n \left(1 - \frac{1}{4\sigma^4}\right)^{d/4},
\end{align}
which yield the contrasting behaviors as $d,n\to\infty$:
\begin{align}
    \mathbb{P}(\mathcal{F}_{\mathrm{Lloyd}})\to 1
    \quad &\text{if}\quad d \gtrsim n^3,
    \\
    \mathbb{P}(\mathcal{F}_{\mathrm{Hart}})\to 0
    \quad &\text{if}\quad d \gtrsim n\sigma^4 ,
\end{align}
where $a\lesssim b$ means that there exists some $C>0$ such that $a\le Cb$.
\end{theorem}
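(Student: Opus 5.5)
The plan is to reduce both statements to a union bound over all $2^n$ partitions (hence the $2^n$ factors). For each fixed partition, we bound the probability of a single "bad" event through a concentration estimate for a quadratic form in Gaussian vectors. The starting point is to collapse the data distribution. Conditionally on the labels, each $x_i = \mu_{z_i}^\star + \sigma \xi_i$. Since $\mu_1^\star,\mu_2^\star \sim N(0,I_d)$, the stacked data matrix $X\in\mathbb{R}^{n\times d}$ has i.i.d.\ columns (coordinates) distributed as $N(0,\Sigma)$, with
\[
\Sigma = \sigma^2 I_n + Z Z^\top ,
\]
where $Z$ is the $n\times 2$ label-indicator matrix. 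Every quantity that decides a Lloyd or Hartigan move is a difference of squared distances between points and cluster means. Such a difference has the form $\sum_{j=1}^d g_j^\top A g_j$, where $g_j\sim N(0,\Sigma)$ is the $j$-th coordinate column and $A$ is a fixed symmetric $n\times n$ matrix determined by the partition, the point $i$, and the move. Fixed-point conditions therefore become sign conditions on sums of $d$ i.i.d.\ copies of a quadratic form.

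\textbf{Lloyd.} A partition $(S_1,S_2)$ is a Lloyd fixed point when, for every $i\in S_1$, we have $\|x_i-\bar x_{S_1}\|^2 \le \|x_i-\bar x_{S_2}\|^2$ (and symmetrically). The key observation is that $x_i$ is included in its own mean $\bar x_{S_1}$. The self-term therefore shrinks the current distance by a factor of roughly $(1-1/|S_1|)$. In the high-noise regime ($\sigma^2>n$) this bias of order $\sigma^2 d/|S_1|$ dominates the signal of order $d$. Concretely, the expectation of the difference is
\[
d\,\mathrm{tr}(A\Sigma) \approx d\,\sigma^2\Bigl(\tfrac{1}{|S_1|}+\tfrac{1}{|S_2|}\Bigr) - O(d),
\]
which is positive once $\sigma^2 \gtrsim n$ and the partition is not exceptionally unbalanced. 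For each pair (partition, $i$) I would bound $\mathbb{P}(\sum_j g_j^\top A g_j<0)$ by a Chernoff bound. Whitening gives $\sum_j \sum_k \lambda_k \eta_{jk}^2$ with $\lambda_k$ the eigenvalues of $\Sigma^{1/2}A\Sigma^{1/2}$. Optimizing $\mathbb{E} e^{-t(\cdot)} = \prod_k (1+2t\lambda_k)^{-d/2}$ yields a per-coordinate factor of the form $(1-c)^{d/4}$. Here $c$ is the squared ratio of the mean to the spread, which scales like $(1/n)^2$ because $A$ has entries of size $1/|S|$ and only a rank-$\le 3$ structure. A union bound over $n$ points and $2^n$ partitions then gives $2^n n(1-1/n^2)^{d/4}$.

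\textbf{Hartigan.} For Hartigan, the move criterion uses the corrected costs $\frac{|S_1|}{|S_1|-1}\|x_i-\bar x_{S_1}\|^2$ versus $\frac{|S_2|}{|S_2|+1}\|x_i-\bar x_{S_2}\|^2$. These correction factors exactly cancel the self-inclusion bias, so the noise terms balance in expectation. For an incorrect partition, the remaining expected margin is driven by the signal $\|\mu_1^\star-\mu_2^\star\|^2$ weighted by the misclassification fractions. I would choose one specific point $i$, for example a misplaced point in the cluster with the largest contamination, and show that it strictly prefers to move in expectation. An incorrect partition survives as a fixed point only if this single quadratic form has the wrong sign. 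The same Chernoff computation gives a per-coordinate factor $(1-1/(4\sigma^4))$: the mean is $O(1)$ and the fluctuation scale is $O(\sigma^2)$, with constants absorbed. Union bounding over partitions gives the second bound. The asymptotic claims follow by taking logarithms: $n\log 2 + \log n - d/(4n^2)\to-\infty$ when $d\gtrsim n^3$, and $n\log 2 - d/(16\sigma^4)\to -\infty$ when $d\gtrsim n\sigma^4$.

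\textbf{Main obstacle.} The hardest step is the Hartigan one. It requires proving that every incorrect partition has some point whose expected Hartigan gain is bounded below by a constant, uniformly over partition geometries, while the variance is controlled by $O(\sigma^4)$. The delicate cases are partitions that are nearly correct, or that split both true clusters evenly. For the latter I expect the signal to vanish for individual points, so the signal must instead come from $\|\mu_1^\star-\mu_2^\star\|^2$ being of order $d$ rather than zero. I would first try averaging the gain over all points of one cluster to exhibit a point with positive expected gain, and then apply the Chernoff bound to that point.
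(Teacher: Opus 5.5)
Your Lloyd half is viable and has the same skeleton as the paper: a per-point bound, then a union bound over $n$ points and $2^n$ partitions. The concentration step, however, is different. The paper never uses the joint law of the two distances. It needs only the marginals $\|x_i-\widehat{\mu}_j\|^2\sim\alpha_{\mathrm{cur}}\chi^2_d$ and $\|x_i-\widehat{\mu}_{\overline{j}}\|^2\sim\alpha_{\mathrm{alt}}\chi^2_d$, together with Lemma~\ref{lem:chibound} (Markov plus Cauchy--Schwarz, valid for any coupling). That lemma gives $\rho=1-\bigl((b_1-b_2)/(b_1+b_2)\bigr)^2$. This $\rho$ is monotone in $b_1,b_2$, so the purities can be replaced by the worst case $R_j^\ell=1/|C_j|$, $R_{\overline{j}}^\ell=1$, and the cluster sizes by $n/2+q\sqrt{n}/2$. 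The $n^{-2}$ comes from a margin $b_1-b_2=\Theta(1)$ over a spread $b_1+b_2=\Theta(\sigma^2)=\Theta(n)$. It does not come from the ``entries of size $1/|S|$, rank $\le 3$'' heuristic. Your whitened quadratic-form Chernoff bound could be sharper. But you would still have to carry out the eigenvalue optimization and make it uniform over all purity configurations, and that uniformity is exactly what the decoupling makes trivial.

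The Hartigan half has a genuine gap. It is false that every incorrect partition has a point whose expected gain per coordinate is bounded below by a constant. Take a partition in which both clusters carry the same class proportions as the sample, so that $R_j^\ell=R_{\overline{j}}^\ell=R^\ell$ for every $i$. By Corollaries~\ref{cor:distance_currentcluster}--\ref{cor:distance_diffcluster}, the expected gain of every point is $2\tau^2 d(1-R^\ell)^2\bigl(\tfrac{n_j}{n_j-1}-\tfrac{n_{\overline{j}}}{n_{\overline{j}}+1}\bigr)=2\tau^2 d(1-R^\ell)^2\,\tfrac{n}{(n_j-1)(n_{\overline{j}}+1)}=\Theta(\tau^2 d/n)$.

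The signal therefore does not vanish, and it is already proportional to $\|\mu_1^\star-\mu_2^\star\|^2$. It is small because it comes only from the Hartigan weights. Since all points look alike, averaging cannot produce a constant.

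The paper's move is different. It takes a mixed cluster $C_j$ and a class $\ell$ with $R_j^\ell\le R^\ell\le R_{\overline{j}}^\ell$. Such an $\ell$ exists because $R_j^1+R_j^2=R^1+R^2=1$ and $R^\ell_{\overline{j}}=(R^\ell n-R^\ell_j n_j)/(n-n_j)$. It then takes $i\in C_j\cap S_\ell^\star$ and accepts a $1/n$ margin: using $(n_j-1)(n-n_j+1)\le n^2/4$ gives $b_1-b_2\ge 8\tau^2(R^\star)^2/n$. The result is $\rho_h=1-\bigl(4\tau^2(R^\star)^2/(n(3\tau^2+\sigma^2))\bigr)^2$, which is at most $1-1/(4n^2\sigma^4)$ for $\tau=1$, $R^\star=1/2$, $\sigma^2\ge 3$.

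So the paper's argument supports only $d\gtrsim n^3\sigma^4$. The displayed factor $1-1/(4\sigma^4)$ and the threshold $d\gtrsim n\sigma^4$ drop this $n^{-2}$, and no constant in front of $2^n$ can absorb it.

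Your exact joint-law computation would not rescue the displayed version through a single-point union bound either. For the evenly mixed partition, the per-coordinate fluctuation is dominated by the cross term $\tfrac{2}{n_j}\,\xi_i\cdot\sum_{k\in C_j\setminus\{i\}}\xi_k$, of order $\sigma^2/\sqrt{n}$. Against a mean of order $\tau^2/n$, the best per-coordinate factor is $1-\Theta(1/(n\sigma^4))$. You should aim for the paper's $\rho_h$ (Corollary~\ref{cor:single_parition:not_fixed:bound}) rather than a uniform constant margin.
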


That is, in the high-noise, high-dimensional regime of Theorem~\ref{thm:informal_no_spurious_fixed_points} and Corollary~\ref{cor:maindiff:typical:all0}, \emph{nearly every partition is already a fixed point of Lloyd’s update map}. Since Lloyd’s algorithm terminates at fixed points, this has a direct algorithmic implication: with high probability, for essentially any initialization (except extremely unbalanced ones), Lloyd's algorithm halts after the first update and returns the same partition, i.e., it makes essentially no progress beyond its initialization.

It is tempting to attribute this phenomenon to an inherent geometric degeneracy in high dimensions (e.g., “all distances are nearly equal”), suggesting that meaningful clustering is impossible in this regime. However, our results for Hartigan's algorithm and the accompanying experiments show that this conclusion is incorrect: in the appropriate regime, even when Lloyd's algorithm becomes stuck at its initialization, a greedy local-improvement dynamics can still avoid spurious fixed points and recover the correct clustering with high probability.

In particular, since Hartigan’s algorithm monotonically decreases the $k$-means objective and is guaranteed to terminate at a fixed point,  Theorem~\ref{thm:informal_no_spurious_fixed_points} and Corollary~\ref{cor:all_partitions:not_fixed:bound:main} imply that, in the high-noise, high-dimensional regime, with high probability \emph{there are no incorrect fixed points}. Consequently, from essentially any initialization, Hartigan’s algorithm terminates at the correct partition.

\subsection{Main Empirical Results}

To place the phenomena in wider context, we compare Lloyd's and Hartigan's algorithms with several alternative clustering approaches: (i) PCA+\,$k$-means, which first applies PCA and then runs Lloyd in the reduced space \citep{zha2001spectral,ding2004k}; (ii) a semidefinite-programming (SDP) relaxation from the modern family of $k$-means SDPs \citep{peng2007approximating}; and (iii) a spectral clustering algorithm \citep{shiNormalizedCutsImage2000,ngSpectralClusteringAnalysis}.  We note that the latter is not designed for the same objective, and the latter two are computationally less scalable with $n$.

Figure~\ref{fig:nmi_vs_k} presents a summary of numerical experiments comparing these algorithms; see detailed description in Section~\ref{sec:results:gmm}.
Across all $K$, Lloyd's algorithm exhibits a pronounced \emph{failure region} (low normalized mutual information (NMI)) that persists well into regimes where the other algorithms succeed.
Interestingly, Hartigan's algorithm appears to perform comparably to state-of-the-art alternative algorithms beyond the regimes where our theorems apply.

\subsection{Prior Art}

Over the years, $k$-means clustering and the behavior of Lloyd's $k$-means algorithm have been extensively studied. 
A large body of work provides statistical and computational conditions under which Lloyd's algorithm, or closely related procedures, recover the correct clustering or the cluster means (e.g., \citealp{lu2016statistical,gao2022iterative,ndaoud2022sharp,chen2021cutoff}).
In addition, previous comparisons between Lloyd's and Hartigan's algorithms \citep{telgarsky2010hartigan,slonim2013hartigan} have shown that the fixed points of Hartigan's algorithm are a subset of those of Lloyd's.

Our results complement these lines of work by identifying a broad high-noise, high-dimensional finite-sample regime in which Lloyd's algorithm exhibits a fixed-point abundance phenomenon, so that, with high probability, it cannot improve upon its initialization, while Hartigan's algorithm avoids this pathology by having no incorrect fixed points with high probability.

\section{Preliminaries}\label{sec:preliminaries}
This section collects the definitions and notation used throughout the paper. 
We denote the index set $\{1, 2, \dots, n\}$ by $[n]$. We use $\|\cdot\|$ for the vector Euclidean norm. 

\subsection{The Observational Model} \label{sec:formal_setup}

We analyze the $k$-means problem in the two-component  GMM ($K=2$). The random variables considered throughout this analysis are formalized in the following model.

\begin{model}[Two-component isotropic GMM]
\label{model:gmm2}
The observations $X = \{x_i \}_{i = 1}^n$ are drawn according to 
    $x_i \coloneqq \mu^\star_{z^\star_i} + \xi_i,$ 
where the underlying random variables are as follows:
\begin{enumerate}[label=(\alph*)]
\setlength{\topsep}{0pt} 
\setlength{\itemsep}{1pt}
\item {\em The ground-truth class centers} are random and i.i.d.: \label{item:true_cluster_centers}
$\mu^\star_{1}, \mu^\star_{2} \stackrel{\mathrm{i.i.d.}}{\sim} \mathcal{N}\!\left(0, \tau^2 I_d \right)$ with $\tau \in \mathbb{R}^+$. 
\item {\em The sample noise} is random, i.i.d., and independent of the ground-truth centers: \label{item:sample_noise}
$\xi_i \stackrel{\mathrm{i.i.d.}}{\sim} \mathcal{N}\!\left(0, \sigma^2 I_d \right)$.

\item {\em The ground-truth class assignment} is given by latent labels $z^\star_1,\dots,z^\star_n \in \{1,2\}$.
\label{item:true_class_assignment}
We denote the ground-truth classes by
$S^\star_\ell \coloneqq \{ i\in[n] : z^\star_i =\ell \}$. 
At this point, we do not assume a specific distribution of the class assignment, only that neither class is empty.
\end{enumerate}
We assume that \ref{item:true_cluster_centers}, \ref{item:sample_noise} and  \ref{item:true_class_assignment} are independent.
\end{model}

\subsection{The $k$-Means Problem}\label{sec:preliminaries:kmeans}

The $k$-means problem admits several equivalent formulations; we adopt the standard partition-based one. 
Given samples $x_1,\dots,x_n\in\mathbb{R}^d$ and an integer $K\ge 2$, the {\emph{$k$-means problem}} asks for a partition of the index set $[n]$ into $K$ nonempty clusters $S=\{S_1,\dots,S_K\}$ that minimizes the within-cluster sum of squared distances, 
\begin{equation}\label{eq:kmeans:loss1}
     \argmin_S \sum_{k=1}^K \sum_{i \in S_k} \| x_i - \mu_k \|^2,
\end{equation}
where $\mu_k= |S_k|^{-1} \sum_{i \in S_k} x_i$. 
This loss is often referred to as Inertia, Within-Cluster Sum of Squares (WCSS or WSS), Sum of Squared Errors (SSE), or Distortion. 
The $k$-means problem is known to be NP-Hard \citep{aloise2009np}.

We find the following definitions useful for presenting and analyzing Lloyd's and Hartigan's algorithms for $K = 2$. 
\begin{definition}[Current assignment, clusters, and partition]\label{item:current_cluster_assignment}
At iteration $t$, the \emph{current cluster assignment} is a labeling vector
    $z^{(t)}=(z^{(t)}_1,\dots,z^{(t)}_n)\in\{1,2\}^n$ . 
It induces the current clusters 
    $C^{(t)}_j \coloneqq \{\, i\in[n] : z^{(t)}_i=j \,\}$ 
and the corresponding (current) partition
    $\mathcal{P}^{(t)} \coloneqq \{C^{(t)}_1,\, C^{(t)}_2\}$.
Unless stated otherwise, we restrict attention to iterates for which both clusters are nonempty, i.e., $|C^{(t)}_1|,|C^{(t)}_2|>0$.
We omit the iteration index $t$ where it is not needed.  
\end{definition}
\begin{definition}[Centroids]
Let us consider a bipartition  $\mathcal{P}^{(t)}=\{C_1^{(t)},C_2^{(t)}\}$; the associated empirical centroids are 
\begin{equation}\label{eq:centroid_def_iter}
    \widehat{\mu}_j^{(t)} \coloneqq \frac{1}{|C_j^{(t)}|}\sum_{i\in C_j^{(t)}} x_i , \qquad j\in\{1,2\}.
\end{equation}
\end{definition}

\subsubsection{Lloyd's Algorithm}\label{sec:preliminaries:Lloyd}
Lloyd's algorithm for $k$-means \citep{lloyd1982least} is an alternating-minimization heuristic for approximately minimizing the objective in Equation~\eqref{eq:kmeans:loss1}. Starting from an initial nonempty partition, the algorithm iterates the following two steps.

\emph{Assignment step.} Given the current centroids $\widehat{\mu}^{(t)}_1,\widehat{\mu}^{(t)}_2$, reassign each sample to the nearest centroid: 
\begin{equation}\label{eq:kmeans:e}
z_{i}^{(t+1)} \in \argmin_{j\in\{1,2\}} \| x_i - \widehat{\mu}_j^{(t)} \|^2 , \qquad i\in[n].
\end{equation}

\emph{Averaging step.} Given the updated partition, recompute the centroids as empirical means:
\begin{equation}\label{eq:kmeans:m}
\widehat{\mu}_j^{(t+1)} = \frac{1}{|C_j^{(t+1)}|} \sum_{i \in C_j^{(t+1)}} x_i, 
\qquad j\in\{1,2\}.
\end{equation}

Lloyd's algorithm is monotonically decreasing in the loss (Equation (\ref{eq:kmeans:loss1})) and guaranteed to converge (see, for example, \citealp[p.~1678]{slonim2013hartigan}).
More detailed pseudocode, specialized to the setting of this paper, is provided in Appendix~\ref{app:k-means-algo}.

\subsubsection{Hartigan's Algorithm}

Hartigan's algorithm \citep{hartigan1975clustering} is a greedy algorithm for minimizing the $k$-means loss (Equation~\eqref{eq:kmeans:loss1}). In contrast to Lloyd's batch reassignment, Hartigan updates the partition one sample at a time. 
For each individual sample, Hartigan's algorithm reassigns the sample to the nearest centroid based on the \emph{Hartigan weighted distance}:
\begin{equation}\label{eq:hartigan_distance_inline}
\Delta_\mathrm{H}^2(x_i,C_j^{(t)}) \coloneqq
\begin{cases}
\frac{|C_j^{(t)}|}{|C_j^{(t)}|-1}\,\|x_i-\widehat{\mu}_j^{(t)}\|^2, & \text{if } i\in C_j^{(t)},\\[3pt]
\frac{|C_j^{(t)}|}{|C_j^{(t)}|+1}\,\|x_i-\widehat{\mu}_j^{(t)}\|^2, & \text{if } i\notin C_j^{(t)}.
\end{cases}
\end{equation}
The algorithm repeatedly sweeps through the samples until no relocation is accepted, at which point the output partition is locally optimal with respect to single-sample moves (i.e., a 1-swap local optimum). 

Hartigan's algorithm is monotonically decreasing in the loss (Equation (\ref{eq:kmeans:loss1})) and guaranteed to converge \citep[p.~1678]{slonim2013hartigan}.
More detailed pseudocode, specialized to the setting of this paper, is provided in Appendix~\ref{app:k-means-algo}.

\subsubsection{Additional Notation}

To state the main results in the next section, it is convenient to introduce the following definition.

\begin{definition}[Class proportions, purity, and correctness]\label{def:proportions_purity_correct}
Let $S_1^\star, S_2^\star \subseteq [n]$ denote the ground-truth classes such that $S_1^\star \cap S_2^\star = \emptyset$ and $S_1^\star \cup S_2^\star = [n]$. Let $\mathcal{P}=\{C_1,C_2\}$ be a (current) partition of $[n]$ into two nonempty clusters. We define
\begin{enumerate}[label=(\roman*)]
    \item \emph{Class proportions}: 
        $R^1 \coloneqq \frac{|S_1^\star|}{n}, 
        R^2 \coloneqq \frac{|S_2^\star|}{n}=1-R^1 $.
    \item 
    \emph{Purity coefficient} of cluster $C_j$ with respect to class $\ell$: 
        $R_j^\ell \coloneqq {|C_j \cap S_\ell^\star|}/{|C_j|}$.
    \item 
    The partition $\mathcal{P}$ is \emph{correct} if it agrees with the ground-truth classes up to permutation, i.e., either it holds that 
    $(C_1,C_2)=(S_1^\star,S_2^\star)$ or $ (C_1,C_2)=(S_2^\star,S_1^\star)$.
\end{enumerate}
\end{definition}

For a cluster $j$ or class $\ell$, we denote by $\overline{j}$ or $C_{\overline{j}}$ the other cluster, and $\overline{\ell}$ or $S_{\overline{\ell}}^\star$ the other ground-truth class, so that $\overline{1} = 2$, $\overline{2} = 1$, $C_{\overline{1}} = C_2$, $C_{\overline{2}} = C_1$, $S^\star_{\overline{1}} = S^\star_2$ and $S^\star_{\overline{2}} = S^\star_1$.

\subsection{Approximately Balanced Partitions}\label{sec:preliminaries:approx_balanced}

A large fraction of all the partitions are mostly ``balanced'' in the sense that their size is close to $n/2$. This can be proved by a probabilistic argument, see e.g., Fact~\ref{fact:hoeffding:binomial} in the appendix. 
We fix $n$ and examine all the partitions with cluster sizes within $q$ standard deviations of $n/2$.

\begin{definition}[$q$-approximately balanced partitions]\label{def:approx_balanced_partitions}
Fix $n\in\mathbb{N}$ and a parameter $q>0$. A bipartition $\mathcal{P}=\{C_1,C_2\}$ of $[n]$ (i.e., $C_1\cap C_2=\emptyset$ and $C_1\cup C_2=[n]$) is said to be \emph{$q$-approximately balanced} if both clusters have sizes $|C_k|>2$ and within $q$ standard deviations of $n/2$, namely,
    ${n}/{2}-q\sqrt{{n}/{4}}
    \;<\;
    |C_k|
    \;<\;
    {n}/{2}+q\sqrt{{n}/{4}} \; $, for $k\in\{1,2\}.$
\end{definition}
For large $q$, the set of all $q$-approximately balanced bipartitions of $[n]$ contains all but an exponentially small fraction of bipartitions, and thus captures the ``typical'' case.
The arguments can be adapted to allow for $q$ to depend on $n$ such that the statement holds for a proportion of partitions converging to 1 as $n\to \infty$, see Remark~\ref{rmk:qChanging}.

\section{Analytical Apparatus}\label{sec:analysis}

The goal of this paper is to exhibit a basic high-dimensional regime in which Lloyd’s algorithm fails with high probability, despite the simplicity of the underlying two-cluster model. Theorem \ref{thm:maindiff} isolates the case of a single sample's probability of being reassigned to a new cluster by Lloyd's algorithm. 
The intuition behind the proof in Appendix~\ref{sec:proof:thm:maindiff} is to consider a partition that agrees with the ground truth except for a single misclassified sample; one may intuitively expect such a near-correct initialization to be immediately repaired by the next Lloyd assignment step. Instead, we prove that in the high-noise, high-dimensional regime, the misclassified sample remains misclassified, because it is (with high probability) closer to its current empirical centroid in the wrong cluster than to the empirical centroid of the correct cluster. 
Intuitively, this single misclassification is the ``most favorable case'' for the algorithm, and therefore can serve as a bound: the proof in Appendix~\ref{sec:proof:thm:maindiff} turns this intuition into a rigorous argument.
Having established this one-sample persistence phenomenon, we extend it uniformly over broad families of partitions in Theorem \ref{thm:maindiff:typical}, and then use union bounds to argue about all samples in all approximately balanced partitions (which are almost all partitions) in Theorem \ref{cor:maindiff:typical:all0}.

\subsection{Distances to the Two Cluster Centers}\label{sec:distances}

Our analysis of both Lloyd’s and Hartigan’s algorithms hinges on the distribution of squared distances between a sample and the empirical cluster centroids. Under the isotropic Gaussian model, such squared distances are (up to deterministic scaling) chi-squared with $d$ degrees of freedom. The next two lemmas formalize these distance laws under our model. Lemma~\ref{lem:dist:casec_full:dist} characterizes the distribution of the distance from a sample to the centroid of its currently assigned cluster, while Lemma~\ref{lem:dist:caset_full:dist} characterizes the distance to the centroid of the other (competing) cluster. Proofs are deferred to Appendix~\ref{sec:distances:proofs}.

\begin{restatable}[Distance to the current cluster centroid]{lemma}{lemdistcasecfull}\label{lem:dist:casec_full:dist}
Consider the setting of Model~\ref{model:gmm2}. Fix $i\in[n]$. Let $j\in\{1,2\}$ be the (current) cluster index such that $i\in C_j$, and let $\ell\in\{1,2\}$ be the ground-truth class index such that $i\in S_\ell^\star$. 
Then, the distance between the sample $x_i$ and the centroid $\widehat{\mu}_j$ of a cluster $C_j$ is distributed as 
\begin{equation}\label{eq:dist:casec_full:dist}
    \begin{aligned}
    &\| x_i - \widehat{\mu}_j\|^2 \sim \AlphaC \chi^2_d  ~~,~~ \\
    &\AlphaC = 2\tau^2(1-R_j^{\ell})^2+\left( 1 - 1/|C_j| \right)\sigma^2~.
    \end{aligned}
\end{equation}
where $R_j^{\ell}$ is the cluster purity defined in Definition \ref{def:proportions_purity_correct}. 
\end{restatable}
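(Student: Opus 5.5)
We write $x_i - \widehat{\mu}_j$ as a linear combination of the independent Gaussian vectors $\mu_1^\star,\mu_2^\star,\xi_1,\dots,\xi_n$. The partition $\mathcal{P}$ and the labels are treated as fixed, i.e.\ we condition on them, which is legitimate since they are independent of the centers and noise. Such a combination is isotropic Gaussian $\mathcal{N}(0,\AlphaC I_d)$ with $\AlphaC$ the sum of squared coefficients times the respective variances. Hence $\|x_i-\widehat{\mu}_j\|^2 \sim \AlphaC\chi^2_d$, and it remains only to compute $\AlphaC$.

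\textbf{Center part.} Let $m=|C_j|$ and write
\[
\widehat{\mu}_j=\frac1m\sum_{k\in C_j}\bigl(\mu^\star_{z^\star_k}+\xi_k\bigr)=R_j^1\mu_1^\star+R_j^2\mu_2^\star+\frac1m\sum_{k\in C_j}\xi_k .
\]
Since $x_i=\mu_\ell^\star+\xi_i$, the center contribution to $x_i-\widehat{\mu}_j$ is
\[
(1-R_j^\ell)\mu_\ell^\star-R_j^{\overline{\ell}}\mu_{\overline{\ell}}^\star=(1-R_j^\ell)\bigl(\mu_\ell^\star-\mu_{\overline{\ell}}^\star\bigr),
\]
using $R_j^1+R_j^2=1$. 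Because $\mu_1^\star,\mu_2^\star$ are i.i.d.\ $\mathcal{N}(0,\tau^2 I_d)$, the difference $\mu_\ell^\star-\mu_{\overline{\ell}}^\star$ is $\mathcal{N}(0,2\tau^2 I_d)$. This part therefore has covariance $2\tau^2(1-R_j^\ell)^2 I_d$.

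\textbf{Noise part.} Since $i\in C_j$, the noise contribution is
\[
\Bigl(1-\frac1m\Bigr)\xi_i-\frac1m\sum_{k\in C_j\setminus\{i\}}\xi_k .
\]
Its covariance is
\[
\sigma^2\Bigl[\Bigl(1-\frac1m\Bigr)^2+\frac{m-1}{m^2}\Bigr]I_d=\sigma^2\Bigl(1-\frac1m\Bigr)I_d .
\]

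\textbf{Combining.} The center part and the noise part are independent, so their covariances add, giving $\AlphaC=2\tau^2(1-R_j^\ell)^2+(1-1/|C_j|)\sigma^2$.

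The only step needing care is the bookkeeping in the noise term: $x_i$ itself appears inside the centroid, and this overlap is what produces the factor $(1-1/m)$ rather than $(1+1/m)$. Everything else is routine. One should also note that the law is conditional on the partition. If the partition were allowed to depend on the data, the $\chi^2_d$ law would not hold, so the statement should be read for a fixed partition.
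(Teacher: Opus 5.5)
Your proposal is correct and follows essentially the same route as the paper. Both write $x_i-\widehat{\mu}_j=(1-R_j^\ell)(\mu_\ell^\star-\mu_{\overline{\ell}}^\star)+(1-1/|C_j|)\xi_i-\frac{1}{|C_j|}\sum_{k\in C_j\setminus\{i\}}\xi_k$, compute the variances $2\tau^2(1-R_j^\ell)^2$ and $(1-1/|C_j|)\sigma^2$, add them by independence, and conclude via the Gaussian-norm (Cochran) fact, with the same caveat that the partition and labels are held fixed.
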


\begin{restatable}[Distance to the other cluster centroid]{lemma}{lemdistcasetfull}\label{lem:dist:caset_full:dist}
In the settings of Lemma~\ref{lem:dist:casec_full:dist},
the distance between the sample $x_i$ and the centroid $\widehat{\mu}_{\overline{j}}$ of a cluster $C_{\overline{j}}$ is distributed as
\begin{equation}\label{eq:dist:caset_full:dist}
    \begin{aligned}
    &\big\| x_i - \widehat{\mu}_{\overline{j}} \big\|^2 \sim \AlphaT \chi^2_d  ~~,~~ \\
    &\AlphaT = 2 \tau^2 \left(1 - R^\ell_{\overline{j}}  \right)^2  + \left( 1 + 1/|C_{\overline{j}}| \right) \sigma^2 .
    \end{aligned}
\end{equation}
\end{restatable}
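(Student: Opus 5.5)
We prove Lemma~\ref{lem:dist:caset_full:dist} the same way as Lemma~\ref{lem:dist:casec_full:dist}. We write $x_i - \widehat{\mu}_{\overline{j}}$ as a linear combination of the independent Gaussian vectors $\mu^\star_1,\mu^\star_2,\xi_1,\dots,\xi_n$, with coefficients fixed by the partition and the labels. Conditionally on the labels $z^\star$, these vectors are jointly Gaussian, centered and isotropic. So the difference is $\mathcal{N}(0,\AlphaT I_d)$ for the scalar $\AlphaT$ equal to the sum of squared coefficients times the corresponding variances. Its squared norm is then $\AlphaT\chi^2_d$. The partition is treated as fixed (deterministic, chosen independently of the noise and centers), as in Lemma~\ref{lem:dist:casec_full:dist}.

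\textbf{Step 1 (decomposition).} Let $m=|C_{\overline{j}}|$. Since $i\in S^\star_\ell$, we have $x_i=\mu^\star_\ell+\xi_i$. The centroid is
\[
\widehat{\mu}_{\overline{j}}=R^\ell_{\overline{j}}\mu^\star_\ell+R^{\overline{\ell}}_{\overline{j}}\mu^\star_{\overline{\ell}}+\tfrac1m\textstyle\sum_{k\in C_{\overline{j}}}\xi_k .
\]
Using $R^\ell_{\overline{j}}+R^{\overline{\ell}}_{\overline{j}}=1$, this gives
\[
x_i-\widehat{\mu}_{\overline{j}}=(1-R^\ell_{\overline{j}})(\mu^\star_\ell-\mu^\star_{\overline{\ell}})+\xi_i-\tfrac1m\textstyle\sum_{k\in C_{\overline{j}}}\xi_k .
\]

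\textbf{Step 2 (variances).} The key point, and the only difference from the current-cluster case, is that $i\notin C_{\overline{j}}$. Hence $\xi_i$ is independent of the noise average rather than contained in it. The noise part therefore has covariance $(1+1/m)\sigma^2 I_d$, whereas in Lemma~\ref{lem:dist:casec_full:dist} $\xi_i$ partially cancels and gives $(1-1/m)\sigma^2$. The center part $\mu^\star_\ell-\mu^\star_{\overline{\ell}}$ is $\mathcal{N}(0,2\tau^2 I_d)$ because the two centers are i.i.d. It is independent of the noise. Adding the two covariances gives $\AlphaT I_d$ with $\AlphaT=2\tau^2(1-R^\ell_{\overline{j}})^2+(1+1/|C_{\overline{j}}|)\sigma^2$.

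\textbf{Step 3 (conclusion).} An isotropic Gaussian $\mathcal{N}(0,\alpha I_d)$ has squared norm distributed as $\alpha\chi^2_d$. This completes the argument.

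There is no real obstacle. The only care needed is with the bookkeeping of purity coefficients: one must use $1-R^\ell_{\overline{j}}=R^{\overline{\ell}}_{\overline{j}}$ to merge the center terms into a single difference. One should also note that the result is a conditional law given the partition and labels; it is used only for fixed partitions, before the union bounds.
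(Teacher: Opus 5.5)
Your proposal is correct and follows the paper's proof essentially step for step. You use the same decomposition $x_i-\widehat{\mu}_{\overline{j}}=(1-R^\ell_{\overline{j}})(\mu^\star_\ell-\mu^\star_{\overline{\ell}})+\xi_i-\frac{1}{|C_{\overline{j}}|}\sum_{k\in C_{\overline{j}}}\xi_k$, the same use of $i\notin C_{\overline{j}}$ to obtain the noise variance $(1+1/|C_{\overline{j}}|)\sigma^2$, and the same appeal to Cochran's theorem for the resulting isotropic Gaussian, with the partition and labels treated as fixed as in the paper.
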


We emphasize that, in both lemmas, the ground-truth latent class labels (Model~\ref{model:gmm2}(c)) and the current cluster assignment (and therefore the bipartition $\mathcal{P}=\{C_1, C_2\}$ in Definition~\ref{item:current_cluster_assignment}) are treated as fixed.
The only relevant sources of randomness are the ground-truth centers and additive noise (Model~\ref{model:gmm2}(a)--(b)), which are independent of the class and cluster labels.

In the sequel, comparisons of the distances in the lemmas above reduce to events involving differences of scaled $\chi^2_d$ random variables (with scaling factors given by $\AlphaC$ and $\AlphaT$). The following lemma provides a convenient tail bound for such differences; its proof, based on the Chernoff method \citep{chernoff1952measure}, appears in Appendix~\ref{sec:proof:lem:general:bound}.

\begin{restatable}[]{lemma}{lemchibound}\label{lem:chibound}
Fix $b_1 > b_2 >0$, and $m \in \mathbb{R}$.
Let $Y_1 \sim b_1 \chi^2_d \; $ and $Y_2 \sim b_2 \chi^2_d$ be scaled chi-squared distributed with $d$ degrees of freedom, not necessarily independent.
Then,
\begin{equation}\label{eq:chibound}
    \mathbb{P}\left( Y_1 - Y_2  \leq m \right) \leq 
      \exp\left( m \frac{b_1-b_2}{8 b_1 b_2} \right) 
      \rho^{d/4},
\end{equation}
where $\rho = 1-\bigl(({b_1-b_2})/(b_1+b_2)\bigr)^2 < 1$. 
\end{restatable}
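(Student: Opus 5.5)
The plan is a Chernoff bound on the lower tail of $Y_1-Y_2$, with Cauchy--Schwarz to handle the possible dependence between $Y_1$ and $Y_2$, followed by an explicit choice of the Chernoff parameter.

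\textbf{Step 1 (Chernoff).} For any $s>0$, Markov's inequality applied to $e^{-s(Y_1-Y_2)}$ gives
\begin{equation*}
\mathbb{P}(Y_1-Y_2\le m)\le e^{sm}\,\mathbb{E}\bigl[e^{-sY_1}e^{sY_2}\bigr].
\end{equation*}

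\textbf{Step 2 (decoupling).} Since $Y_1,Y_2$ need not be independent, apply Cauchy--Schwarz:
\begin{equation*}
\mathbb{E}\bigl[e^{-sY_1}e^{sY_2}\bigr]\le \bigl(\mathbb{E}e^{-2sY_1}\bigr)^{1/2}\bigl(\mathbb{E}e^{2sY_2}\bigr)^{1/2}.
\end{equation*}
This step halves the exponent, which is why the final bound has $d/4$ rather than $d/2$.

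\textbf{Step 3 (chi-squared MGFs).} For $Y\sim b\chi^2_d$ we have $\mathbb{E}e^{tY}=(1-2tb)^{-d/2}$ whenever $t<1/(2b)$. Therefore
\begin{equation*}
\mathbb{P}(Y_1-Y_2\le m)\le e^{sm}\bigl[(1+4sb_1)(1-4sb_2)\bigr]^{-d/4},
\qquad 0<s<\tfrac{1}{4b_2}.
\end{equation*}

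\textbf{Step 4 (optimizing $s$).} Expand $(1+4sb_1)(1-4sb_2)=1+4s(b_1-b_2)-16s^2b_1b_2$. This quadratic in $s$ is maximized at
\begin{equation*}
s=\frac{b_1-b_2}{8b_1b_2}.
\end{equation*}
This choice is admissible, because $4sb_2=(b_1-b_2)/(2b_1)<1$.

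At this $s$ the product equals
\begin{equation*}
1+\frac{(b_1-b_2)^2}{4b_1b_2}=\frac{(b_1+b_2)^2}{4b_1b_2}.
\end{equation*}
Raising it to the power $-d/4$ gives $\bigl(4b_1b_2/(b_1+b_2)^2\bigr)^{d/4}$. Since $4b_1b_2=(b_1+b_2)^2-(b_1-b_2)^2$, this equals $\rho^{d/4}$ with $\rho=1-\bigl((b_1-b_2)/(b_1+b_2)\bigr)^2$.

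The same choice of $s$ turns the prefactor $e^{sm}$ into exactly $\exp\bigl(m(b_1-b_2)/(8b_1b_2)\bigr)$. Finally, $\rho<1$ because $b_1>b_2>0$.

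\textbf{Expected difficulty.} There is no real obstacle. The one point needing care is the dependence between $Y_1$ and $Y_2$, which Cauchy--Schwarz handles at the cost of the factor $1/2$ in the exponent. Beyond that, one only has to check that the optimizing $s$ stays inside the domain where the MGF of $Y_2$ is finite.
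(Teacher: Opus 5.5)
Your proposal is correct and follows the paper's proof essentially step for step. Both arguments use the same ingredients: Markov applied to $e^{-s(Y_1-Y_2-m)}$, Cauchy--Schwarz to decouple the possibly dependent $Y_1,Y_2$, the scaled $\chi^2_d$ moment generating functions, and the optimizer $s=(b_1-b_2)/(8b_1b_2)$, including the same admissibility check $4sb_2=(b_1-b_2)/(2b_1)<1$.
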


\subsection{Analysis of Lloyd's Algorithm}
\label{sec:AnalysisLloyd}

Having introduced the technical machinery, we proceed according to the strategy described at the beginning of this section.

\subsubsection{Single-Sample Reassignment Probability}

In this section, we study the probability that a fixed sample $x_i$ changes its assignment in the next Lloyd iteration. 
Let $j\in\{1,2\}$ be its current cluster index (so $i\in C_j$) and let $\overline{j}$ denote the other cluster. 
Lloyd reassigns $x_i$ to $C_{\overline{j}}$ if and only if $\|x_i-\widehat{\mu}_{\overline{j}}\|^2 < \|x_i-\widehat{\mu}_j\|^2$, equivalently, if the difference $\|x_i-\widehat{\mu}_{\overline{j}}\|^2-\|x_i-\widehat{\mu}_j\|^2$ is negative; otherwise the sample remains in $C_j$ for the next iteration.

\begin{restatable}[Lloyd's algorithm: single sample]{theorem}{thmmaindiff}\label{thm:maindiff}
We consider the setting of Model~\ref{model:gmm2}, a fixed $i\in[n]$ with current cluster $C_j$ and other cluster $C_{\overline{j}}$ such that $i \in C_j$. 
We denote the cluster sizes $c\coloneqq |C_j|$ and $\overline{c}\coloneqq |C_{\overline{j}}|$.

If the noise level $\sigma>0$ satisfies
\begin{equation}\label{eq:maindiff:req0}
    \sigma \;>\;
    \frac{  \sqrt{2 \bar c}\,\tau\,(c-1)}
    {\sqrt{c  ( c + \bar c)}} ~,
\end{equation}
then, we have
\begin{equation}\label{eq:maindiff:single_sample_bound}
    \mathbb{P}\!\left(\big\|x_i-\widehat{\mu}_{\overline{j}}\big\|^2 < \big\|x_i-\widehat{\mu}_{j}\big\|^2\right)
    \;\le\;
    \rho^{\,d/4},
\end{equation}
where $\rho$ is defined by 
\begin{equation}\label{eq:maindiff:rho}
    \begin{aligned}
    &\rho(\sigma, \tau, c, \overline{c}) = \\
    & \qquad\tfrac{4 \sigma ^2 \left(c-1\right) c^2 \overline{c} \left(\overline{c}+1\right) \left(c \left(\sigma ^2+2 \tau ^2\right)-2 \tau ^2\right)}{\left(-c \left(\sigma ^2+4 \tau ^2\right) \overline{c}+c^2 \left(\sigma ^2+2 \left(\sigma ^2+\tau ^2\right) \overline{c}\right)+2 \tau ^2 \overline{c}\right){}^2},
    \end{aligned}
\end{equation}
and satisfies $0\le \rho<1$.
\end{restatable}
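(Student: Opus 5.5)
The plan is to reduce the event in \eqref{eq:maindiff:single_sample_bound} to a comparison of two scaled $\chi^2_d$ variables and apply Lemma~\ref{lem:chibound} with $m=0$. Set $Y_1 \coloneqq \|x_i-\widehat{\mu}_{\overline{j}}\|^2$ and $Y_2 \coloneqq \|x_i-\widehat{\mu}_j\|^2$. By Lemma~\ref{lem:dist:caset_full:dist} and Lemma~\ref{lem:dist:casec_full:dist}, $Y_1\sim \AlphaT\chi^2_d$ and $Y_2\sim\AlphaC\chi^2_d$, with
\[
\AlphaT = 2\tau^2(1-R^\ell_{\overline{j}})^2 + (1+1/\overline{c})\sigma^2, \qquad
\AlphaC = 2\tau^2(1-R^\ell_{j})^2 + (1-1/c)\sigma^2 .
\]
Lemma~\ref{lem:chibound} allows these variables to be dependent. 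So once we know $b_1\coloneqq\AlphaT>b_2\coloneqq\AlphaC$, it gives $\mathbb{P}(Y_1-Y_2<0)\le \mathbb{P}(Y_1-Y_2\le 0)\le \rho_0^{d/4}$, where $\rho_0 = 1-\bigl((b_1-b_2)/(b_1+b_2)\bigr)^2 = 4b_1b_2/(b_1+b_2)^2$. The degenerate case $c=1$ is handled separately: there $\widehat{\mu}_j=x_i$, so $Y_2=0$ and the probability is $0$.

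Next I would bound the partition-dependent quantities uniformly over purities. Since $i\in C_j\cap S^\star_\ell$, we have $R^\ell_j\ge 1/c$, so $\AlphaC \le b_2^{\max}\coloneqq 2\tau^2(c-1)^2/c^2 + (1-1/c)\sigma^2$. Since $R^\ell_{\overline{j}}\le 1$, we have $\AlphaT\ge b_1^{\min}\coloneqq(1+1/\overline{c})\sigma^2$. This worst case is exactly the ``single misclassified sample'' configuration described in the text. The inequality $b_1^{\min}>b_2^{\max}$ rearranges to $\sigma^2(c+\overline{c})/(c\overline{c}) > 2\tau^2(c-1)^2/c^2$, which is precisely hypothesis \eqref{eq:maindiff:req0}. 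Hence $\AlphaT\ge b_1^{\min}>b_2^{\max}\ge \AlphaC>0$, and Lemma~\ref{lem:chibound} applies.

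The remaining step is monotonicity. Writing $r=b_2/b_1\in(0,1)$, we have $\rho_0 = 4r/(1+r)^2$. This function is increasing on $(0,1)$, since its derivative is $4(1-r)/(1+r)^3>0$. Because $r\le b_2^{\max}/b_1^{\min}<1$, it follows that $\rho_0\le 4b_1^{\min}b_2^{\max}/(b_1^{\min}+b_2^{\max})^2 <1$.

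Finally, a direct algebraic check identifies this expression with \eqref{eq:maindiff:rho}. Indeed, $b_2^{\max} = (c-1)\bigl(c(\sigma^2+2\tau^2)-2\tau^2\bigr)/c^2$ and $b_1^{\min}=\sigma^2(\overline{c}+1)/\overline{c}$. Multiplying numerator and denominator by $c^4\overline{c}^2$ gives the stated numerator $4\sigma^2(c-1)c^2\overline{c}(\overline{c}+1)(c(\sigma^2+2\tau^2)-2\tau^2)$. The denominator becomes the square of $c^2\sigma^2(\overline{c}+1)+\overline{c}(c-1)(c(\sigma^2+2\tau^2)-2\tau^2)$, which expands to the bracket in \eqref{eq:maindiff:rho}. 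Nonnegativity of $\rho$ is clear from this form.

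I expect the main obstacle to be not conceptual but bookkeeping: checking that the worst-case purities are simultaneously admissible bounds, and verifying the polynomial identity for the denominator carefully. The monotonicity of $4r/(1+r)^2$ is what lets the purity-dependent bound collapse to the single expression $\rho(\sigma,\tau,c,\overline{c})$.
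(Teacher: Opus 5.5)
Your proposal is correct and follows essentially the same route as the paper. Both apply Lemma~\ref{lem:chibound} with $m=0$ to $b_1=\AlphaT$ and $b_2=\AlphaC$, then replace them by the single-misclassification extremes $(1+1/\overline{c})\sigma^2$ and $2\tau^2(c-1)^2/c^2+(1-1/c)\sigma^2$, whose strict ordering is exactly hypothesis \eqref{eq:maindiff:req0}, and use monotonicity of $1-\bigl((b_1-b_2)/(b_1+b_2)\bigr)^2$ to pass to these extremes.

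Your one-variable argument via $r=b_2/b_1$ and $4r/(1+r)^2$ is a slightly cleaner version of the paper's partial-derivative step. Your separate treatment of $c=1$ also fills a small detail the paper leaves implicit: there $\AlphaC=0$, so Lemma~\ref{lem:chibound} does not literally apply.
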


The proof of Theorem~\ref{thm:maindiff} is given in Appendix~\ref{sec:proof:thm:maindiff}. It combines the distance characterizations in Lemmas~\ref{lem:dist:casec_full:dist} and~\ref{lem:dist:caset_full:dist} with the tail bound for differences of scaled chi-squared variables in Lemma~\ref{lem:chibound}. The resulting estimate is uniform over all partitions with the prescribed cluster sizes and, in particular, applies to the ``most favorable'' incorrect initialization in which the partition agrees with the ground truth except for a single misclassified sample.
\begin{remark}
The condition on the noise in Equation (\ref{eq:maindiff:req0}) is the threshold where the expected distance from a sample to its current cluster exceeds the expected distance to the other cluster, even in the ``most favorable,'' or ``easiest to fix'' incorrect initialization. 
On the technical level, it is the condition required to satisfy the requirements of Lemma~\ref{lem:chibound} (see proof for details).
\end{remark}

\subsubsection{Samples in Approximately Balanced Partitions}

The following theorem, which is proved in Appendix~\ref{sec:proof:thm:maindiff:typical}, provides a uniform bound over all $q$-approximately balanced partitions (Definition~\ref{def:approx_balanced_partitions}), obviating the need for explicit cluster sizes. 
To simplify the notation in this theorem, we fix $\tau = 1$ without loss of generality.

\begin{restatable}[Uniform bound for $q$-approximately balanced partitions]{theorem}{thmmaindifftypical}\label{thm:maindiff:typical}
Consider the setting of Model~\ref{model:gmm2}, with a fixed $i\in[n]$ with current cluster index $j$ and other cluster $\overline{j}$. Fix a partition imbalance factor $q > 1$ and assume a partition $\mathcal{P}=\{C_1, C_2\}$ that is q-approximately balanced (Definition~\ref{def:approx_balanced_partitions}). Fix $\tau = 1$, 
and fix $\beta > 1$, such that 
\begin{equation}\label{eq:maindiff:typical:sigma}
    \sigma = \beta \frac{  \left(\sqrt{n}\, q+n-2\right)}{\sqrt{2} \sqrt{\sqrt{n}\, q+n}}.
\end{equation} 
Then,  \begin{equation}\label{eq:maindiff:typical:pr}
    \mathbb{P} \left ( \big\| x_i - \widehat{\mu}_{\overline{j}} \big\|^2 - \big\| x_i - \widehat{\mu}_{j} \big\|^2 < 0 \right) \leq 
    \rho_q^{d/4},
\end{equation}
where     \begin{equation}\label{eq:maindiff:typical:rho}
    \scalebox{0.95}{
    $\rho_q = \frac{\sigma ^2 \left(\sqrt{n} q+n-2\right) \left(\sqrt{n} q+n\right) \left(\sqrt{n} q+n+2\right) \left(\sqrt{n} \left(\sigma ^2+2\right) \left(\sqrt{n}+q\right)-4\right)}{\left(n \sigma ^2 \left(\sqrt{n}+q\right)^2+\left(\sqrt{n} q+n-2\right)^2\right)^2}~.$
    }
\end{equation}
\end{restatable}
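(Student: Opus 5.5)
Theorem~\ref{thm:maindiff:typical} should follow from Theorem~\ref{thm:maindiff} by taking the worst case over all admissible cluster sizes. A $q$-approximately balanced partition with $\tau=1$ has cluster sizes $c$ and $\overline{c}=n-c$, with $c,\overline{c}\in(n/2-q\sqrt{n}/2,\,n/2+q\sqrt{n}/2)$. So it suffices to prove two things for every admissible pair $(c,\overline{c})$. First, $\sigma$ as given in \eqref{eq:maindiff:typical:sigma} satisfies condition \eqref{eq:maindiff:req0}. Second, $\rho(\sigma,1,c,\overline{c})\le\rho_q$. The bound \eqref{eq:maindiff:single_sample_bound} then yields \eqref{eq:maindiff:typical:pr} directly.

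\textbf{Step 1 (the noise condition).} With $\tau=1$, condition \eqref{eq:maindiff:req0} reads $\sigma^2>2\overline{c}(c-1)^2/(cn)$, using $c+\overline{c}=n$. I would bound the right-hand side by replacing $c$ with its largest allowed value $c_+=(n+q\sqrt n)/2$. The factor $(c-1)^2/c$ is increasing for $c\ge1$. The factor $\overline{c}\le n/2$ can be bounded crudely, or the product can be maximized exactly. Taking $c=c_+$ in $(c-1)^2/c$ gives $(n+q\sqrt n-2)^2/(2(n+q\sqrt n))$, which matches exactly the squared quantity in \eqref{eq:maindiff:typical:sigma} without $\beta$. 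So $\beta>1$ gives strict inequality, provided the leftover factor satisfies $2\overline{c}/n\le1$. If it does not, we need the joint monotonicity argument, and I would check which form the authors' constant encodes.

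\textbf{Step 2 (monotonicity of $\rho$).} I expect $\rho$ to be worst, i.e.\ largest, at the extreme size $c=c_+$, with $\overline{c}$ set to the value that makes the expression match $\rho_q$. Inspecting \eqref{eq:maindiff:typical:rho}, the factors $(\sqrt n q+n\pm2)$ and $(\sqrt nq+n)$ correspond to $2c-2$, $2c$ and $2\overline{c}+2$ evaluated at $c=\overline{c}=c_+$. So the bound seems to come from treating $c$ and $\overline{c}$ as independent variables each ranging up to $c_+$, and then evaluating at $c=\overline{c}=c_+$. I would therefore prove that $\rho(\sigma,1,c,\overline{c})$ is nondecreasing in each of $c$ and $\overline{c}$ separately on the admissible region, for $\sigma$ above the threshold. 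Equivalently, I would show that the ratio $\rho=1-((b_1-b_2)/(b_1+b_2))^2$ from Lemma~\ref{lem:chibound} is nondecreasing, which means the normalized gap $(b_1-b_2)/(b_1+b_2)$ shrinks as the clusters grow. Here $b_1=\AlphaT$ and $b_2=\AlphaC$, with the purities chosen adversarially as in the proof of Theorem~\ref{thm:maindiff}. Intuitively this is right: larger clusters make the $\pm\sigma^2/|C|$ asymmetry and the purity penalty smaller. Concretely, I would differentiate $\log\rho$ in $c$ and in $\overline{c}$, treated as real variables, and check the sign of each numerator polynomial under $\sigma^2\ge 2(c-1)^2/c$.

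\textbf{Main obstacle.} The sign analysis of these rational-function derivatives is the hard part. If a clean sign argument fails, I would bound the gap directly instead. In the gap, the numerator is lower bounded using $c,\overline{c}\le c_+$ and the denominator is upper bounded the same way. This gives $(b_1-b_2)/(b_1+b_2)\ge$ its value at $c=\overline{c}=c_+$, which yields $\rho\le\rho_q$ without needing full monotonicity. Finally, I would verify algebraically that the expression at $c=\overline{c}=c_+$ simplifies to \eqref{eq:maindiff:typical:rho}.
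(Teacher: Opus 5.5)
Your architecture is the paper's. You treat $c=|C_j|$ and $\bar c=|C_{\bar j}|$ as independent variables in the box $(c_-,c_+)^2$ with $c_\pm=(n\pm q\sqrt n)/2$, show that the noise threshold and $\rho$ are monotone in each variable, and evaluate at the corner $c=\bar c=c_+$. This is exactly how $\rho_q$ and the $\sigma$ in \eqref{eq:maindiff:typical:sigma} are produced. However, the step that carries the proof is left open, and the fallback you offer for it fails. With $\tau=1$, $b_1=(1+1/\bar c)\sigma^2$ and $b_2=(1-1/c)\sigma^2+2(1-1/c)^2$. Both the numerator $b_1-b_2$ and the denominator $b_1+b_2=2\sigma^2+\sigma^2/\bar c-\sigma^2/c+2(1-1/c)^2$ decrease in $\bar c$. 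So for $\bar c<c_+$ the denominator is smallest, not largest, at $\bar c=c_+$. Bounding numerator and denominator separately therefore puts them at different corners (the denominator at $\bar c=c_-$) and gives a bound strictly weaker than $\rho_q$. Step~1 is also not closed. You handle only the line $c+\bar c=n$ with $\bar c\le n/2$, but the comparison with $(c_+,c_+)$ passes through points off that line, where you need $b_1>b_2$ to fix the sign of $\partial\rho$. The condition you propose to sign-check under, $\sigma^2\ge 2(c-1)^2/c$, is twice the diagonal threshold $(c-1)^2/c$. It is not implied by $\beta>1$: at the corner $c=c_+$ it would require $\beta\ge\sqrt2$.

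The missing idea is to control the ratio rather than its pieces, which the paper does with a chain rule. Write $r=(b_1-b_2)/(b_1+b_2)=1-2b_2/(b_1+b_2)$. For positive $b_1,b_2$, $r$ is increasing in $b_1$ and decreasing in $b_2$. Here $b_1$ depends only on $\bar c$ and decreases in it. $b_2$ depends only on $c$ and increases in it, since $\partial_c b_2=(\sigma^2+4(1-1/c))/c^2>0$. Note that the worst-case purity term $2(1-1/c)^2$ grows with $c$, which is one of the two reasons the gap shrinks. Hence $r(c,\bar c)\ge r(c_+,c_+)$ on the whole box, and $r(c_+,c_+)>0$ is equivalent to $\sigma^2>(c_+-1)^2/c_+$, i.e.\ to $\beta>1$. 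This gives at once \eqref{eq:maindiff:req0} at every admissible pair (including $\bar c>n/2$) and $\rho=1-r^2\le 1-r(c_+,c_+)^2=\rho_q$. The paper states the same fact as $\partial_{b_1}\rho<0<\partial_{b_2}\rho$ whenever $b_1>b_2$, composed with the monotonicity of $\omega_{\mathrm{alt}}$ in $\bar c$ and of $\omega_{\mathrm{cur}}$ in $c$. It separately checks that the threshold $2\bar c(c-1)^2/(c(c+\bar c))$ increases in each variable, so that $b_1>b_2$ holds on the whole box. No brute-force differentiation of $\log\rho$ is needed.
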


\begin{remark}[Asymptotics of Theorem~\ref{thm:maindiff:typical}]\label{rem:maindiff:typical:asymptotics}
    The expressions in Theorem~\ref{thm:maindiff:typical} become more interpretable for large $n$ (with fixed $q$ and $\beta$). 
    Squaring the expression in Equation~\eqref{eq:maindiff:typical:sigma} and expanding it in $n$ yields
    \begin{equation}
        \sigma^2 = \tfrac{\beta ^2 n}{2}   +\tfrac{\beta ^2 q  \sqrt{n}}{2} -2 \beta ^2+\tfrac{2 \beta ^2}{n} +O\left(n^{-3/2}\right).
    \end{equation}
    Substituting Equation~\eqref{eq:maindiff:typical:sigma} into Equation~\eqref{eq:maindiff:typical:rho} and expanding it in $n$ yields
    \begin{equation}
        \rho_q = 1-\tfrac{4 \left(\beta ^2  -1\right)^2 n^{-2}}{\beta ^4}
        +\tfrac{8 \left(\beta ^2-1\right)^2 n^{-5/2} q}{\beta ^4}
        +O\left(n^{-3}\right).
    \end{equation}
\end{remark}

 Theorem~\ref{thm:maindiff:typical} implies that in the appropriate regime, the probability that a given sample in an approximately balanced partition would switch over to a different cluster in the next iteration of Lloyd's $k$-means algorithm is small and decreases as the dimension $d$ grows.

\subsubsection{All Approximately Balanced Partitions are Fixed Points of Lloyd's Algorithm}

Recall from Section~\ref{sec:preliminaries:approx_balanced} that an overwhelming proportion of partitions are nearly balanced, and by setting the appropriate $q$, all partitions are $ q$-approximately balanced partitions (with the exception of partitions with clusters of size $2$ or less). The following generalizes Theorem \ref{thm:maindiff:typical} to a statement about the probability that any partition is not a fixed point of Lloyd's algorithm (with the same exclusions as before).

\begin{restatable}[Main result: Lloyd's algorithm]{corollary}{cormaindifftypicalallzero}\label{cor:maindiff:typical:all0}
    
Consider the setting of Model~\ref{model:gmm2}. Fix an imbalance parameter $q>1$ and assume the noise level satisfies Equation~\eqref{eq:maindiff:typical:sigma}. Then the probability that there exists a $q$-approximately balanced partition (see Definition \ref{def:approx_balanced_partitions}) that is \emph{not} a fixed point of Lloyd's $k$-means update scheme is upper bounded by
\begin{equation} 
    \begin{aligned} 
    \mathbb{P} & \left( \exists \text{ approx. balanced partition that is not a fixed point} \right) \\ & \leq 2^n n \rho_q^{d/4} ~, \end{aligned} 
    \end{equation}
where $\rho_q$ is defined in~\eqref{eq:maindiff:typical:rho}.

\end{restatable}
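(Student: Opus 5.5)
The plan is a union bound over partitions and samples, with Theorem~\ref{thm:maindiff:typical} as the per-event bound. First I would pin down the meaning of ``not a fixed point'' and reduce it to single-sample events. A partition $\mathcal{P}=\{C_1,C_2\}$ is a fixed point of Lloyd's update if applying the assignment step \eqref{eq:kmeans:e} with the centroids $\widehat{\mu}_1,\widehat{\mu}_2$ of $\mathcal{P}$ returns $\mathcal{P}$. The averaging step then reproduces the same centroids. I adopt the convention that a sample stays in its current cluster under ties, which is consistent with the strict inequality used in Theorem~\ref{thm:maindiff:typical}; ties occur with probability zero anyway. Under this convention, $\mathcal{P}$ fails to be a fixed point only if some $i\in[n]$, with $i\in C_j$, satisfies $\|x_i-\widehat{\mu}_{\overline{j}}\|^2-\|x_i-\widehat{\mu}_j\|^2<0$. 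So the event in the corollary is contained in the union, over all $q$-approximately balanced $\mathcal{P}$ and all $i\in[n]$, of the events $E_{\mathcal{P},i}=\{\|x_i-\widehat{\mu}_{\overline{j}}\|^2<\|x_i-\widehat{\mu}_j\|^2\}$.

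Next I bound each event. For a fixed partition $\mathcal{P}$ and a fixed index $i$, the partition and the labels are deterministic, and the randomness comes only from $\mu^\star$ and $\xi$, as emphasized after Lemma~\ref{lem:dist:caset_full:dist}. Theorem~\ref{thm:maindiff:typical} therefore applies directly, with $\tau=1$ and $\sigma$ given by \eqref{eq:maindiff:typical:sigma}, and gives $\mathbb{P}(E_{\mathcal{P},i})\le\rho_q^{d/4}$. This bound is uniform: it does not depend on $\mathcal{P}$ or $i$ beyond the $q$-balance condition. That uniformity is exactly what Theorem~\ref{thm:maindiff:typical} delivers, since it absorbs the dependence on the sizes $c,\bar c$ through the extreme admissible sizes.

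Finally I count. The number of bipartitions of $[n]$ is at most $2^n$, since each is determined by a labeling in $\{1,2\}^n$; the unordered count $2^{n-1}$ is even smaller. The $q$-approximately balanced partitions are a subset of these, and there are $n$ choices of $i$. Summing over all pairs gives
\begin{equation*}
\mathbb{P}\Bigl(\bigcup_{\mathcal{P},i}E_{\mathcal{P},i}\Bigr)\le 2^n\, n\,\rho_q^{d/4}.
\end{equation*}
The main point needing care is the first step. I must make sure the fixed-point notion matches the single-sample event, in particular the tie-breaking convention and the fact that the centroids are computed from $\mathcal{P}$ itself and not re-estimated. I must also check that the hypothesis $\beta>1$ in Theorem~\ref{thm:maindiff:typical} is what the corollary's assumption ``Equation~\eqref{eq:maindiff:typical:sigma} holds'' implicitly includes. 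Once that is settled, the rest is a direct union bound with no dependence issues, because union bounds need no independence.
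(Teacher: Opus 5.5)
Your proposal is correct and is essentially the paper's proof. The paper applies Theorem~\ref{thm:maindiff:typical} to each sample of a fixed $q$-approximately balanced partition and union-bounds over the $n$ samples to get $n\rho_q^{d/4}$. It then union-bounds over at most $2^n$ partitions, which is the same as your single union over pairs $(\mathcal{P},i)$. Your extra remarks are sound details that the paper leaves implicit. Ties have probability zero, and they are in any case covered because Lemma~\ref{lem:chibound} bounds the non-strict event. The partitions are deterministic, and $\beta>1$ is part of the hypothesis.
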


The proof is presented in Appendix~\ref{sec:proof:cor:maindiff:typical:all0}. The idea is to extend  Theorem~\ref{thm:maindiff:typical} using the union bound over multiple samples in a single partition, and then over multiple partitions.

Examining the distance distributions in the proofs clarifies the mechanism behind Lloyd's failure. When $i\in C_j$, the centroid $\widehat{\mu}_j$ is computed from an average that includes $x_i$, so $x_i$ exerts a non-negligible ``self-influence" and pulls $\widehat{\mu}_j$ toward itself by an amount on the order of $1/|C_j|$. 
In the high-noise, high-dimensional regime, this self-influence can dominate the weak class-separation signal: a misassigned sample can remain closer to the centroid of its current (wrong) cluster than to the competing centroid, even when the rest of the partition is correct. 
In sufficiently extreme settings, this effect is strong enough that Lloyd's update fails to repair even a single misclassification with high probability.
We mention that the same conclusion holds under centroid initialization: the first assignment step induces a partition that is already a fixed point (w.h.p., and with the same exclusions), so no further progress occurs.

\subsection{Analysis of Hartigan's Algorithm}
\label{sec:AnalysisHartigan}

For Hartigan’s algorithm, we apply similar tools, but in the opposite direction. In Theorem \ref{thm:single_parition:single_sample_fixed}, we bound the probability that a sample \emph{fails} to relocate when its current cluster contains no greater proportion of its ground-truth class than the other cluster. 
We then bound the probability that an incorrect partition is a fixed point in Corollary \ref{cor:single_parition:not_fixed:bound}, establish a uniform bound over a wider range of parameters in Corollary \ref{cor:single_parition:not_fixed:bound}, and infer that w.h.p., no incorrect partition is a fixed point of Hartigan’s dynamics in Corollary \ref{cor:all_partitions:not_fixed:bound:main}.

\medskip

\begin{restatable}[Hartigan's algorithm: single sample]{theorem}{thmsingleparitionsinglesamplefixed}\label{thm:single_parition:single_sample_fixed}
In the setting of Model~\ref{model:gmm2}, fix an index $i\in[n]$ and let $\ell\in\{1,2\}$ be its ground-truth class (so $i\in S_\ell^\star$). 
Let $j\in\{1,2\}$ be its current cluster index (so $i\in C_j$) and let $\overline{j}$ denote the other cluster. 
If the cluster purity coefficient (Definition~\ref{def:proportions_purity_correct}) satisfies
\begin{equation}\label{eq:single_parition:single_sample_fixed:1}
    0< R^\ell_j \le R^\ell_{\overline{j}} \le 1,
\end{equation}
then,
\begin{equation}\label{eq:prob_hartigan_distcurr_smaller}
    \mathbb{P}\!\left(\Delta_H^2(x_i,C_j)\le \Delta_H^2(x_i,C_{\overline{j}})\right)
    \;\le\; \rho^{\,d/4},
\end{equation}
where $\Delta_H$ is defined in Equation~\eqref{eq:hartigan_distance_inline} and $\rho$ is given by
\begin{align}\label{eq:single_parition:single_sample_fixed:rho}
  \nonumber
   & \rho = \\
    &1- \left(\tfrac{\tau^2 \left(\frac{|C_j|}{|C_j|-1}(1-R^\ell_j)^2 - \frac{|C_{\overline{j}}|}{|C_{\overline{j}}|+1}(1-R^\ell_{\overline{j}})^2 \right) }{ \tau^2 \left(\frac{|C_j|}{|C_j|-1}(1-R^\ell_j)^2 + \frac{|C_{\overline{j}}|}{|C_{\overline{j}}|+1}(1-R^\ell_{\overline{j}})^2 \right) + \sigma^2}\right)^2\!\!.
\end{align}
and satisfies $0\le \rho<1$.
\end{restatable}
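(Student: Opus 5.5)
Writing $c=|C_j|$ and $\bar c=|C_{\overline{j}}|$, the plan is to turn the event into a comparison of two scaled chi-squared variables and then invoke Lemma~\ref{lem:chibound} with $m=0$. By Lemma~\ref{lem:dist:casec_full:dist}, $\|x_i-\widehat{\mu}_j\|^2\sim \AlphaC\chi^2_d$ with $\AlphaC = 2\tau^2(1-R^\ell_j)^2+(1-1/c)\sigma^2$. Multiplying by $\frac{c}{c-1}$ (we need $c\ge 2$, which is implicit in the definition of $\Delta_H$) gives
\begin{equation*}
\Delta_H^2(x_i,C_j)\sim b_1\chi^2_d,\qquad b_1=\tfrac{c}{c-1}\,2\tau^2(1-R^\ell_j)^2+\sigma^2 .
\end{equation*}
Likewise, Lemma~\ref{lem:dist:caset_full:dist} and multiplication by $\frac{\bar c}{\bar c+1}$ give
\begin{equation*}
\Delta_H^2(x_i,C_{\overline{j}})\sim b_2\chi^2_d,\qquad b_2=\tfrac{\bar c}{\bar c+1}\,2\tau^2(1-R^\ell_{\overline{j}})^2+\sigma^2 .
\end{equation*}
The key feature of Hartigan's weighting is that it exactly cancels the asymmetric self-influence factors $(1\mp 1/|C|)$ on $\sigma^2$. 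Both variances therefore carry the same noise term $\sigma^2$, and the difference $b_1-b_2$ depends only on the signal part.

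Next we check the ordering $b_1>b_2$ required by Lemma~\ref{lem:chibound}. The hypothesis $R^\ell_j\le R^\ell_{\overline{j}}$ gives $(1-R^\ell_j)^2\ge(1-R^\ell_{\overline{j}})^2$, and $\frac{c}{c-1}>1>\frac{\bar c}{\bar c+1}$. Hence $b_1\ge b_2$, and the inequality is strict whenever $R^\ell_j<1$. The hypothesis $R^\ell_j>0$ together with $R^\ell_j\le R^\ell_{\overline{j}}$ should rule out the degenerate case. Indeed, if $R^\ell_j=1$ then $R^\ell_{\overline{j}}=1$, so both clusters would consist only of class $\ell$. That contradicts the model assumption that both classes are nonempty.

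Applying Lemma~\ref{lem:chibound} to $Y_1=\Delta_H^2(x_i,C_j)$ and $Y_2=\Delta_H^2(x_i,C_{\overline{j}})$ with $m=0$ (the lemma allows dependence, and the two quantities are indeed dependent) gives
\begin{equation*}
\mathbb{P}(Y_1-Y_2\le 0)\le \rho^{d/4},\qquad \rho=1-\Bigl(\tfrac{b_1-b_2}{b_1+b_2}\Bigr)^2 .
\end{equation*}
Substituting $b_1,b_2$, the $\sigma^2$ terms cancel in the numerator and add to $2\sigma^2$ in the denominator. Dividing numerator and denominator by $2$ gives exactly Equation~\eqref{eq:single_parition:single_sample_fixed:rho}.

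For the range of $\rho$: $\rho<1$ follows from $b_1>b_2$, and $\rho\ge 0$ from $0<b_1-b_2<b_1+b_2$. The only step needing care is the ordering/nondegeneracy check above, in particular the edge cases $c=1$ and $R^\ell_j=1$. Everything else is substitution into the earlier lemmas.
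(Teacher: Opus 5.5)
Your proposal is correct and follows essentially the same route as the paper: rescale the distance laws of Lemmas~\ref{lem:dist:casec_full:dist} and~\ref{lem:dist:caset_full:dist} by the Hartigan weights so that both scales carry the same $\sigma^2$, verify $b_1>b_2$ from $\frac{c}{c-1}>1>\frac{\bar c}{\bar c+1}$ together with $R^\ell_j\le R^\ell_{\overline{j}}$, and apply Lemma~\ref{lem:chibound} with $m=0$. You also explicitly exclude the degenerate case $R^\ell_j=R^\ell_{\overline{j}}=1$ using nonemptiness of both classes, which also rules out $|C_j|=1$ since that forces $R^\ell_j=1$; the paper's strict inequality relies on this without comment.
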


The proof is deferred to Appendix~\ref{sec:proof:thm:single_parition:single_sample_fixed}. 
It follows the same outline as the Lloyd analysis, but replaces the usual squared distances by Hartigan's rescaled distances $\Delta_H^2(\cdot,\cdot)$ defined in Equation (\ref{eq:hartigan_distance_inline}); accordingly, we use analogues of Lemmas~\ref{lem:dist:casec_full:dist} and~\ref{lem:dist:caset_full:dist} tailored to the Hartigan weighting.

In particular, a partition $\mathcal{P}=\{C_1,C_2\}$ can be a fixed point of Hartigan's dynamics only if \emph{every} sample prefers (in the Hartigan sense) its current cluster over the other one; thus, it suffices to exhibit a single index $i$ for which a Hartigan move is strictly improving. 
The next corollary is therefore an immediate consequence of Theorem~\ref{thm:single_parition:single_sample_fixed}.

\begin{restatable}{corollary}{corsingleparitionnotfixed}\label{cor:single_parition:not_fixed}
    If the conditions of Theorem~\ref{thm:single_parition:single_sample_fixed} are satisfied, then the probability that the partition $\mathcal{P}$ is a fixed point of Hartigan's algorithm is bounded by
    \begin{equation}\label{eq:single_parition:not_fixed:bound}
        \mathbb{P}\!\left(\mathcal{P}\ \text{is a fixed point}\right)
        \;\leq\;
         \rho^{\,d/4},
    \end{equation}

    where $\rho$ is given in Equation~\eqref{eq:single_parition:single_sample_fixed:rho}.
\end{restatable}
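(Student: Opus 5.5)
The plan is to reduce the statement about the whole partition to the single-sample event controlled by Theorem~\ref{thm:single_parition:single_sample_fixed}. Under the conditions of that theorem there is a fixed index $i$, with ground-truth class $\ell$ and current cluster $j$, such that $0<R^\ell_j\le R^\ell_{\overline{j}}\le 1$. The key observation is a set inclusion between two events. A partition $\mathcal{P}=\{C_1,C_2\}$ is a fixed point of Hartigan's dynamics exactly when no single-sample relocation is accepted. By the sweep rule in Equation~\eqref{eq:hartigan_distance_inline}, this means that every sample $i'$ satisfies $\Delta_H^2(x_{i'},C_{j(i')})\le \Delta_H^2(x_{i'},C_{\overline{j(i')}})$. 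Here I adopt the convention that ties are not moves, which is the weak inequality appearing in Equation~\eqref{eq:prob_hartigan_distcurr_smaller}.

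In particular, the fixed-point event is contained in the event that this inequality holds for the distinguished index $i$:
\begin{equation*}
\{\mathcal{P}\text{ is a fixed point}\}\subseteq\{\Delta_H^2(x_i,C_j)\le \Delta_H^2(x_i,C_{\overline{j}})\}.
\end{equation*}
Monotonicity of probability then gives
\begin{equation*}
\mathbb{P}(\mathcal{P}\text{ is a fixed point})\le \mathbb{P}\bigl(\Delta_H^2(x_i,C_j)\le \Delta_H^2(x_i,C_{\overline{j}})\bigr)\le\rho^{d/4},
\end{equation*}
where the second inequality is exactly Theorem~\ref{thm:single_parition:single_sample_fixed}, with $\rho$ as in Equation~\eqref{eq:single_parition:single_sample_fixed:rho}. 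As in Lemmas~\ref{lem:dist:casec_full:dist} and~\ref{lem:dist:caset_full:dist}, the partition and the labels are treated as deterministic, and all randomness comes from the centers and the noise. So the theorem applies to this fixed $\mathcal{P}$ and fixed $i$ without any conditioning issues.

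The only point needing care is that the definition of ``fixed point'' must match the convention used in Theorem~\ref{thm:single_parition:single_sample_fixed}. Specifically, Hartigan relocates $x_i$ only on a strict improvement, so staying put corresponds to the non-strict inequality. I would also note the case where $|C_j|=1$, in which $\Delta_H^2$ involves division by zero. Such singleton clusters are excluded because the theorem's hypotheses, through $\rho$, implicitly require $|C_j|\ge 2$; alternatively, a singleton cannot be emptied, so the relevant regime is $|C_j|\ge 2$. I do not expect a genuine obstacle here: the corollary is a one-line event inclusion, and all of the analytic work sits in the cited theorem.
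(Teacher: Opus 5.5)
Your proposal is correct and matches the paper's argument. The paper also treats this corollary as immediate from Theorem~\ref{thm:single_parition:single_sample_fixed}: the fixed-point event is contained in the event that the distinguished sample $i$ does not strictly prefer $C_{\overline{j}}$ under $\Delta_H^2$. Your singleton worry is settled by the hypotheses themselves: $|C_j|=1$ forces $R^\ell_j=R^\ell_{\overline{j}}=1$, hence $S^\star_{\overline{\ell}}=\emptyset$, which Model~\ref{model:gmm2}(c) rules out.
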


The bound in Theorem~\ref{thm:single_parition:single_sample_fixed} depends on the specific cluster sizes and the corresponding purity coefficients. 
The following corollary, proved in Appendix~\ref{sec:proof:cor:single_parition:not_fixed:bound}, generalizes this pointwise estimate to a uniform statement: it provides a bound that holds simultaneously over all admissible cluster sizes and over all purity configurations corresponding to partitions that do not coincide with the ground-truth classes (up to permutation).

\begin{restatable}{corollary}{corsingleparitionnotfixedbound}\label{cor:single_parition:not_fixed:bound}
Consider the setting of Model~\ref{model:gmm2}. Further assume that $n\geq4$. If the current partition $\mathcal{P}$ is non-empty and is an incorrect partition (Definition~\ref{def:proportions_purity_correct}), then, the probability that $\mathcal{P}$ is a fixed point is bounded by
\begin{equation}
    \mathbb{P} \left (\mathcal{P} \text{ is a fixed point} \right) \leq \rho_{h}^{d/4},
\end{equation}
where $\rho_{h}$ is given by    \begin{equation}\label{eq:single_parition:not_fixed:bound:rhob}
    \rho_{h} = 1-\left(\frac{4\tau^2 (R^{\star})^2n^{-1}}
    {3 \tau^2 + \sigma^2}  \right)^2 < 1 ~,
\end{equation}
and $R^{\star} = \min(R^1,R^2)$ is the relative size of the smallest ground-truth class.  
\end{restatable}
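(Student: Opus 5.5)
The plan is to reduce the statement to Corollary~\ref{cor:single_parition:not_fixed}. For any incorrect partition we exhibit one index $i$ satisfying the purity condition \eqref{eq:single_parition:single_sample_fixed:1}. We then bound the resulting $\rho$ in \eqref{eq:single_parition:single_sample_fixed:rho} uniformly from above by $\rho_h$. Since $\rho = 1-u^2$ with $u$ the displayed ratio, and $u\ge 0$ because the condition $R^\ell_j\le R^\ell_{\overline{j}}$ gives $(1-R^\ell_j)^2\ge(1-R^\ell_{\overline{j}})^2$ and $\frac{c}{c-1}>\frac{\bar c}{\bar c+1}$, it suffices to show $u \ge 4(R^\star)^2/(n(3+\sigma^2/\tau^2))$.

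\textbf{Step 1 (existence of a suitable sample).} Let $\mathcal{P}=\{C_1,C_2\}$ be incorrect. Since both clusters partition $[n]$, there is a class $\ell$ and a cluster $j$ with $R^\ell_j\le R^\ell_{\overline j}$ and $C_j\cap S^\star_\ell\neq\emptyset$. To see this, note that $R^1_j+R^2_j=1$, so if $R^1_1\le R^1_2$ then $R^2_1\ge R^2_2$. If $C_1\cap S_1^\star\neq\emptyset$, take $\ell=1$, $j=1$. Otherwise $C_1\subseteq S_2^\star$, so $R^2_1=1\ge R^2_2$, and we must use $\ell=2$, $j=2$, which requires $C_2\cap S_2^\star\ne\emptyset$. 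If that fails too, then $C_1=S_2^\star$ and $C_2=S_1^\star$, so $\mathcal{P}$ is correct, a contradiction. We also need $|C_j|\ge 2$ so that $\Delta_H$ is defined. If $|C_j|=1$, moving $i$ empties the cluster. I expect this edge case to be handled by the paper's convention restricting to nonempty clusters, or by choosing the other configuration; I will check it, possibly using $n\ge4$.

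\textbf{Step 2 (lower bound on the numerator).} Write $a=1-R^\ell_j\ge b=1-R^\ell_{\overline j}$. The numerator is at least $\tau^2\big(\frac{c}{c-1}-\frac{\bar c}{\bar c+1}\big)a^2+\tau^2\frac{\bar c}{\bar c+1}(a^2-b^2)$. The first term is $\ge \tau^2 a^2\cdot\frac{1}{c-1}\ge \tau^2a^2/n$. The key issue is that $a$ could be small: if $R^\ell_j$ is near $1$, then $R^\ell_{\overline j}$ is also near $1$. This is the main obstacle, so we need a lower bound like $a\gtrsim R^\star$, or otherwise control $a^2-b^2$. Suppose $a<R^\star$, i.e.\ both clusters consist mostly of class $\ell$. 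Then the other class $\overline\ell$ has $|S^\star_{\overline\ell}|\ge R^\star n$ members spread across the clusters, so some cluster has a fraction of $\overline\ell$ of at least $R^\star n/n=R^\star$. Counting gives $|C_1\cap S^\star_{\overline\ell}|+|C_2\cap S^\star_{\overline\ell}|\ge R^\star n$, which means $a|C_j|+b|C_{\overline j}|\ge R^\star n$, and since $a\ge b$ this forces $a\ge R^\star$. So $a\ge R^\star$ always holds, and the numerator is $\ge \tau^2 (R^\star)^2/n$. The factor $4$ will come from a sharper bound, e.g.\ $\frac{c}{c-1}-\frac{\bar c}{\bar c+1}\ge \frac{1}{c-1}+\frac{1}{\bar c+1}\ge 4/(n)$ by convexity with $(c-1)+(\bar c+1)=n$.

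\textbf{Step 3 (upper bound on the denominator).} Use $a,b\le1$, $\frac{c}{c-1}\le 2$, and $\frac{\bar c}{\bar c+1}\le1$, giving a denominator $\le 3\tau^2+\sigma^2$. Combining the steps gives $u\ge 4\tau^2(R^\star)^2n^{-1}/(3\tau^2+\sigma^2)$, hence $\rho\le\rho_h$. Finally, $\rho_h<1$ because the ratio is positive, and Corollary~\ref{cor:single_parition:not_fixed} yields the bound $\rho_h^{d/4}$. The main obstacle is Step 2: getting the constant and the factor $(R^\star)^2$ simultaneously, and treating the $|C_j|=1$ boundary.
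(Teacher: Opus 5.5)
Your proposal is correct and is essentially the paper's proof. It uses the same reduction to Corollary~\ref{cor:single_parition:not_fixed}, the same identity $\frac{c}{c-1}-\frac{\bar c}{\bar c+1}=\frac{n}{(c-1)(\bar c+1)}\ge 4/n$, and the same denominator bound $3\tau^2+\sigma^2$. Your counting inequality $1-R^\ell_j\ge R^{\overline{\ell}}\ge R^\star$ is exactly the paper's $R^\ell_j\le R^\ell\le R^\ell_{\overline{j}}$; it is equivalent to your selection condition $R^\ell_j\le R^\ell_{\overline{j}}$ because $R^\ell$ is the size-weighted average of the two purities.

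The $|C_j|=1$ case you left open is vacuous by your own Step 2. Since $a\ge R^\star>0$, the cluster $C_j$ contains a sample of class $\overline{\ell}$ besides $i$, so it is mixed and $|C_j|\ge 2$. This is why the paper simply starts from a mixed cluster. No appeal to $n\ge 4$ is needed for this point.
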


This uniform bound in Corollary \ref{cor:single_parition:not_fixed:bound} enables a union bound over the family of non-correct bipartitions and leads to the next corollary (proved in Appendix~\ref{sec:cor:all_partitions:not_fixed:bound:main}), which bounds the probability that \emph{any} partition that does not match the ground truth (up to permutation) is a fixed point of the algorithm.

\begin{restatable}[Main result: Hartigan's algorithm]{corollary}{corallpartitionsnotfixedboundmain}\label{cor:all_partitions:not_fixed:bound:main}

Consider the setting of Model~\ref{model:gmm2}. Assume that $n\geq 4$.
Then, the probability that there is any non-empty incorrect partition (Definition~\ref{def:proportions_purity_correct}) that is a fixed point of Hartigan's algorithm is bounded by 
\begin{equation}
    \mathbb{P} \left ( \exists \mathcal{P} \text{ a non-empty incorrect partition} \right) \leq 2^n \rho_{h}^{d/4},
\end{equation}
where $\rho_h$ is given by~\eqref{eq:single_parition:not_fixed:bound:rhob}. 
\end{restatable}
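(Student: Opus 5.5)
The plan is a union bound over partitions, with Corollary~\ref{cor:single_parition:not_fixed:bound} supplying the per-partition estimate. Write the event that some non-empty incorrect partition is a fixed point of Hartigan's algorithm as
\[
\bigcup_{\mathcal{P}\ \text{incorrect}} \{\mathcal{P} \text{ is a fixed point}\},
\]
where $\mathcal{P}=\{C_1,C_2\}$ ranges over bipartitions of $[n]$ into two nonempty clusters other than $\{S_1^\star,S_2^\star\}$. The ground-truth labels are fixed, and the randomness comes only from the centers and the noise, as stressed after Lemma~\ref{lem:dist:caset_full:dist}. Each incorrect partition is therefore a deterministic index set, and the per-partition bound applies to it.

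The key step is the uniformity of $\rho_h$ in Corollary~\ref{cor:single_parition:not_fixed:bound}. It depends only on $\tau,\sigma,n$ and $R^\star=\min(R^1,R^2)$, which are fixed by the ground truth, and not on the cluster sizes or purities of $\mathcal{P}$. Hence every incorrect nonempty partition satisfies $\mathbb{P}(\mathcal{P}\text{ is a fixed point})\le\rho_h^{d/4}$ with the same $\rho_h$, and $n\ge4$ is exactly the hypothesis needed to invoke that corollary.

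For the counting step, the number of labelings $z\in\{1,2\}^n$ is $2^n$. Unordered bipartitions with both parts nonempty number $2^{n-1}-1\le 2^n$, and labeled ones number $2^n-2\le 2^n$. Either convention therefore gives at most $2^n$ incorrect partitions. Summing the per-partition bound over them gives $2^n\rho_h^{d/4}$.

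I do not expect a real obstacle, since the content lies in Corollary~\ref{cor:single_parition:not_fixed:bound}. The one point to check is that every incorrect partition admits a sample $i$ meeting the purity hypothesis of Theorem~\ref{thm:single_parition:single_sample_fixed}, with a gap bounded below uniformly to produce $\rho_h$. That is already asserted by the corollary, so here I only confirm that its hypotheses hold for every term in the union and then apply subadditivity of probability.
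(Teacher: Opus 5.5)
Your proposal is correct and matches the paper's proof. Both take a union bound over the at most $2^n$ non-empty incorrect partitions, and each term is bounded by $\rho_h^{d/4}$ via Corollary~\ref{cor:single_parition:not_fixed:bound}, whose $\rho_h$ does not depend on the partition. (The paper's displayed union bound is written for partitions that are \emph{not} fixed points, which is evidently a slip; your formulation in terms of the events $\{\mathcal{P}\text{ is a fixed point}\}$ is the right one.)
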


\section{Numerical Results}\label{sec:results}

In this section, we report numerical experiments illustrating the failure of Lloyd's $k$-means algorithm in the high-noise, high-dimensional regimes studied in this paper. 
Alongside Lloyd's method, we evaluate Hartigan's algorithm and several modern alternatives: the common high-dimensional heuristic PCA+\,$k$-means, which applies PCA and then runs Lloyd in the reduced space \citep{zha2001spectral,ding2004k}; an SDP relaxation from the modern family of $k$-means SDPs \citep{mixonClusteringSubgaussianMixtures2016} and spectral clustering \citep{shiNormalizedCutsImage2000,ngSpectralClusteringAnalysis}, run via the standard scikit-learn implementation \citep{scikit-learn}. 

Additional numerical results are presented in Appendix~\ref{sec:additional_results}. Implementation details are available in Appendix~\ref{sec:additional_implementation_details}. Our code is freely available at 
\ifanonymous
  {\bf [Redacted. See materials]}.
\else
  \url{https://github.com/Lederman-Group/Catastrophic_Failure_KMeans}.
\fi

\subsection{Synthetic GMM}\label{sec:results:gmm}

The first set of experiments illustrates the connection between the theoretical findings in this paper and the performance of Hartigan's and Lloyd's $k$-means, and compares them with other clustering algorithms. We sample data from the GMM defined in Model~\ref{model:gmm2} (generalized to $ K\geq 2$) at different dimensions $d$ and noise variances $\sigma^2$, for $K=2, 5$ and $10$ clusters, $n=20 \times K$ samples, and $\tau^2 = 1$. We measure the performance of each algorithm in two ways: i) how well it recovers the ground-truth in terms of the NMI score (Definition~\ref{def:preliminaries:nmi}), and ii) the $k$-means loss of the solution.

In each experiment, we generate a new dataset and run each clustering algorithm with a single random initialization. We repeat the experiment 100 times for each combination of values of $d$, $\sigma^2$, and $K$. In the case of Lloyd's and Hartigan's algorithms, as well as the PCA+$k$-means algorithm, we evaluate the performance for three different initialization strategies: i) a ``random partition'' initialization, which randomly selects equal-size clusters and sets the initial centroids as the corresponding clusters’ averages; ii) a ``random centers'' initialization, which designates $K$ random samples as centers; and iii) the popular \textit{$k$-means++} initialization \citep{arthur2006k}.

In Figure~\ref{fig:nmi_vs_k}, we present the NMI between each method and the ground-truth partition; in this figure, we restrict our attention to \textit{$k$-means++} in the case of Lloyd's algorithm and random balanced partition in the case of Hartigan's algorithm.
Overall, Hartigan's $k$-means and spectral clustering achieve the highest accuracy across all values of $K$. 
The SDP relaxation performs well for $K=2$, but degrades more noticeably at higher noise levels as $K$ increases. 
Lloyd's algorithm performs worst in the high-noise, high-dimensional regime, although a simple PCA preprocessing step (PCA+\,$k$-means) substantially improves its accuracy. Additional metrics are presented in Supplementary Figure~\ref{fig:gmm_loss_vs_k} in Appendix~\ref{sec:additional_results_gmm}.

We note that \textit{$k$-means++}, used for Lloyd's algorithm in the results in Figure~\ref{fig:nmi_vs_k}, provides the algorithm with centroids as initial guesses, whereas our analysis of Lloyd's algorithm considers partitions. 
A preliminary empirical evaluation of alternative initialization strategies is presented in Appendix \ref{sec:additional_results_gmm}, demonstrating that \textit{$k$-means++} outperforms random partitions as initialization for Lloyd's algorithm. 
While we defer the detailed study of initialization by centroids to future work, informally, we note that in some cases, good initial centers can lead to a good first partition, which is already a fixed point of the algorithm. We consider this a case where the algorithm itself ``does not do anything'' beyond partitioning directly based on the initialization. Figure~\ref{fig:nmi_vs_k} illustrates that even with this advantageous initialization, Lloyd's algorithm is outperformed by other algorithms.

\subsection{Real-World Datasets}

We compare the clustering algorithms on four real-world datasets: the \textit{Olivetti} faces dataset \citep{samaria1994parameterisation}, which contains images of 40 individuals in 10 different poses, and three datasets derived from the 20 newsgroups dataset \citep{twenty_newsgroups_113}, denoted as 20NG-A, 20NG-B, and 20NG-C with $K=2, 5,$ and $10$, respectively. Further details on these datasets are provided in Appendix~\ref{sec:additional_implementation_details}.

For each dataset, we run Lloyd's and Hartigan's $k$-means, spectral clustering, and SDP. For each algorithm (except the deterministic SDP), we select the output that results in the smallest $k$-means loss (Equation~\eqref{eq:kmeans:loss1}) out of $500$ independent initializations. SDP is run only once since it is deterministic, either until convergence or for a maximum of 2000 iterations. In all experiments, we initialize Lloyd's $k$-means and spectral clustering with \textit{$k$-means++}, and Hartigan's $k$-means with a random balanced partition. Table~\ref{tab:clustering_comparison} summarizes the results obtained for each dataset.

We note that while spectral clustering outperformed Hartigan's algorithm in terms of NMI in the Olivetti example, Hartigan's algorithm achieved a better loss, which is the criterion it is designed to optimize.

\begin{table*}[b]
    \vskip -0.1in
    \caption{\textbf{Clustering results for real-world datasets using Lloyd's and Hartigan's $k$-means, spectral clustering, and SDP.} 500 random initializations per dataset for Lloyd's $k$-means, Hartigan's $k$-means, and spectral clustering; SDP is run only once since it is deterministic. As the ratio between $d$ and $n$ increases, Lloyd's algorithm performs worse than the others. For each algorithm, we report the values for the $k$-means loss (Equation (\ref{eq:kmeans:loss1})) and the NMI (see Definition~\ref{def:preliminaries:nmi}) corresponding to the partition that achieves the lowest $k$-means loss. }
    \label{tab:clustering_comparison}
    \begin{center}
        \begin{small}
            \begin{sc}  
                \begin{tabular}{lccc|cccc|cccc}
                    \toprule
                    \multicolumn{4}{c}{Dataset Parameters} & \multicolumn{4}{c}{$k$-Means loss} & \multicolumn{4}{c}{NMI} \\
                     & $n$ & $d$ & $K$ & Lloyd & Hartigan & SDP & Spectral & Lloyd & Hartigan & SDP & Spectral \\
                    \midrule
                    Olivetti & 400 & 4096 & 40 & 8.53 & \bf{8.11} & 8.85 & 8.95 & 0.74 & 0.77 & 0.72 & \bf{0.83} \\
                    20NG-A & 200  & 5000 & 2 & 193.72 & \bf{193.46} & 193.54 & 193.48 & 0.27 & \bf{0.54} & 0.48 & 0.52\\
                    20NG-B & 500  & 5000 & 5 & 484.04 & \bf{481.72} & 484.29 & 482.89 & 0.24 & \bf{0.44} & 0.34 & 0.37 \\
                    20NG-C & 1000  & 5000 & 10 & 957.43 & \bf{951.96} & 956.88 & 953.67 & 0.23 & \bf{0.31} & 0.27 & 0.27 \\
                    \bottomrule
                \end{tabular}
            \end{sc}
        \end{small}
    \end{center}
\end{table*}

\section{Discussion and Conclusions}\label{sec:conclusions}

Our analysis of Lloyd's $k$-means offers a concrete explanation for its empirical breakdown in high-dimensional, high-noise regimes. 
In this setting, with high probability over the sampled dataset, every approximately balanced partition is already a fixed point of Lloyd’s update map. Consequently, except for extremely unbalanced initializations, the algorithm halts after the first update and returns the initial partition (or, under centroid initialization, the partition induced by the initial centers), making essentially no progress beyond its starting point.
While sensitivity to initialization is well known for $k$-means~\citep{balanov2025confirmation}, our results identify an extreme regime in which the initialization is essentially the outcome: Lloyd’s algorithm makes essentially no progress beyond its starting point and returns the initial partition.

The theoretical bounds we obtain are conservative and are intended to certify the existence of this phenomenon rather than pinpoint sharp thresholds. An interesting direction for future work is to understand how this fixed point proliferation weakens as noise or dimension decreases, and to quantify the probability that Lloyd's dynamics become trapped in suboptimal fixed points even when not all partitions are fixed points.

In sharp contrast, Hartigan's algorithm does not exhibit the fixed point proliferation that traps Lloyd's updates. 
In our two-component Gaussian model, once the dimension is sufficiently large (a regime in which the clustering task becomes easier), we show that Hartigan's greedy local-improvement dynamics has no incorrect fixed points with high probability and therefore terminates at the correct partition (up to label permutation). 

As with the Lloyd analysis, our guarantees are conservative, and the dimension thresholds are likely far from tight. 
Empirically, Hartigan's algorithm remains substantially more robust at much lower dimensions, and its performance appears to be often comparable to powerful modern alternatives to Lloyd's method. 
Many of these alternatives are considerably more difficult to scale; for instance, SDP relaxations operate on $n\times n$ matrices, whereas Hartigan's algorithm, like Lloyd's, relies only on repeated comparisons of samples to empirical centroids. 
A careful runtime/accuracy tradeoff study depends on implementation details and is outside the scope of this work, but our findings motivate the theoretical question of whether one can obtain sharper complexity guarantees and SDP-like recovery guarantees for Hartigan's algorithm in high-dimensional mixture models and in broader settings, and whether the conservative bounds here can be significantly tightened.

Although our theory focuses on the two-cluster case, the empirical results in Section~\ref{sec:results} indicate that the same qualitative separation between Lloyd and Hartigan persists for larger numbers of clusters. Extending the analysis to general $K$ appears feasible: For Lloyd, the fixed point mechanism is expected to carry over with minor modifications, whereas a corresponding extension for Hartigan requires additional care (e.g., handling multiple competing moves and cluster-size effects across $K$ clusters). We defer a full $K>2$ theoretical treatment for both algorithms to future work.

As noted, for example, in \citet{bottou1994convergence,bishop2006pattern}, Lloyd’s algorithm is closely related to the expectation--maximization (EM) algorithm \citep{dempster1977maximum}, another fundamental tool in statistical learning and data analysis \citep{wu2008top}. 
The effect studied appears to carry over to the EM algorithm; in high-noise, high-dimensional settings, EM can likewise become trapped near its initialization. We also defer a detailed analysis of this question to future work.

\section*{Impact Statement}

This paper presents work whose goal is to advance the field of machine learning. There are many potential societal consequences of our work, none of which we feel must be specifically highlighted here.

\ifanonymous
  ~
\else
    \section*{Acknowledgements}
    The authors would like to thank Amit Singer, Fred Sigworth, Sheng Xu, Zhou Fan, and Yihong Wu for helpful discussions.
    The authors would like to thank the Yale Center for Research Computing (YCRC) for providing computing resources and support. 
    The work was supported by NIH/NIGMS (1R35GM157226), the Alfred P. Sloan Foundation (FG-2023-20853), and the Simons Foundation (1288155).
\fi

\bibliography{bib_v006}
\bibliographystyle{icml2026}


\newpage
\appendix
\onecolumn

{\centering{\section*{Appendix}}}

\paragraph{Appendix organization.}
We begin by collecting several standard results in Appendix~\ref{sec:standard_results}, for completeness and later reference. Appendix~\ref{app:k-means-algo} presents the clustering algorithms studied in this work in pseudo-code form. The proofs of the main results related to Lloyd’s algorithm are provided in Appendix~\ref{sec:proofs}, while the corresponding proofs for Hartigan’s algorithm appear in Appendix~\ref{sec:HartiganProofs}. Additional implementation details, including the definitions of the auxiliary clustering methods and the dataset preprocessing steps, appear in Appendix~\ref{sec:additional_implementation_details}. Finally, Appendix~\ref{sec:additional_results} reports further numerical experiments and supplementary discussion that complement the results in Section~\ref{sec:results}.

\section{Standard Results and Definitions}\label{sec:standard_results}

The following are standard textbook facts in statistics.

\begin{fact}[Adding Gaussian Variables]
    Let $\xi_1$ and $\xi_2$ be i.i.d. with a normal distribution $\xi_1,\xi_2 \sim \mathcal{N}(0, 1)$ and let $a_1, a_2, b_1, b_2 \in \mathbb{R}$. 
    Then,
    \begin{equation}\label{eq:sum:gaussian}
        a_1 + b_1 \xi_1 + a_2 + b_2 \xi_2 \sim \mathcal{N}(a_1 + a_2, b_1^2 + b_2^2)
    \end{equation}
\end{fact}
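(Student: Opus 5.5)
This fact is immediate from elementary properties of Gaussians. I would prove it via characteristic functions, which avoids any density convolution.

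\emph{Step 1 (affine maps).} Since $\xi_1\sim\mathcal{N}(0,1)$, its characteristic function is $\mathbb{E}[e^{it\xi_1}]=e^{-t^2/2}$. Hence the characteristic function of $b_1\xi_1$ is
\[
\mathbb{E}[e^{itb_1\xi_1}]=e^{-b_1^2t^2/2},
\]
and the same computation applies to $b_2\xi_2$.

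\emph{Step 2 (independence).} Independence of $\xi_1$ and $\xi_2$ lets the expectation of the product factor into the product of expectations. The characteristic function of $Z\coloneqq a_1+b_1\xi_1+a_2+b_2\xi_2$ is therefore
\begin{equation*}
\mathbb{E}[e^{itZ}] = e^{it(a_1+a_2)}\,\mathbb{E}[e^{itb_1\xi_1}]\,\mathbb{E}[e^{itb_2\xi_2}] = e^{it(a_1+a_2)}e^{-(b_1^2+b_2^2)t^2/2}.
\end{equation*}

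\emph{Step 3 (identification).} The right-hand side is exactly the characteristic function of $\mathcal{N}(a_1+a_2,\,b_1^2+b_2^2)$. Characteristic functions determine distributions uniquely, which yields~\eqref{eq:sum:gaussian}.

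The only point needing care is the degenerate case $b_1=b_2=0$. There the claimed law is the point mass at $a_1+a_2$, and the computation in Steps 1--3 still holds under the usual convention that $\mathcal{N}(m,0)=\delta_m$.
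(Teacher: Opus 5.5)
Your characteristic-function argument is correct and complete, including the degenerate case $b_1=b_2=0$ under the convention $\mathcal{N}(m,0)=\delta_m$. The paper gives no proof to compare against; it simply lists this among the standard textbook facts in Appendix~\ref{sec:standard_results}. Your argument also covers the form the paper actually relies on later, in the proofs of Lemmas~\ref{lem:dist:casec_full:dist} and~\ref{lem:dist:caset_full:dist}: finitely many independent terms with arbitrary variances, applied to isotropic Gaussian vectors. To get it, take $t\in\mathbb{R}^d$ and multiply the characteristic functions over all independent summands.
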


\begin{fact}
    Let $X \sim \mathcal{N}(0, \sigma^2 )$ have a normal distribution. Let $- \infty <t< 1/2$.
    Then,
    \begin{equation}
        \mathbb{E}\left(\exp\left(t X^2 \right) \right) = \frac{1}{\sqrt{1-2 t}}.
    \end{equation}
\end{fact}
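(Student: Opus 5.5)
The plan is to compute the expectation directly as a Gaussian integral and complete the square. Note first the normalization. As written, the formula $\mathbb{E}(\exp(tX^2)) = (1-2t)^{-1/2}$ with range $t<1/2$ is exactly right for the standard case $\sigma^2=1$. For general $X\sim\mathcal{N}(0,\sigma^2)$ the same computation gives
\begin{equation*}
\mathbb{E}\left(\exp(tX^2)\right) = \frac{1}{\sqrt{1-2t\sigma^2}}, \qquad t<\frac{1}{2\sigma^2}.
\end{equation*}
I would prove this general form and read off the stated one at $\sigma=1$, equivalently with $X$ replaced by $X/\sigma$. This scaled version is also what the later Chernoff argument for scaled $\chi^2_d$ variables in Lemma~\ref{lem:chibound} needs.

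\textbf{Step 1 (write the integral).} By definition of the density,
\begin{equation*}
\mathbb{E}\left(\exp(tX^2)\right) = \frac{1}{\sqrt{2\pi\sigma^2}}\int_{-\infty}^{\infty} \exp\!\left(tx^2 - \frac{x^2}{2\sigma^2}\right)dx = \frac{1}{\sqrt{2\pi\sigma^2}}\int_{-\infty}^{\infty} \exp\!\left(-\frac{x^2}{2s^2}\right)dx,
\end{equation*}
where $s^{-2} := \sigma^{-2}-2t$.

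\textbf{Step 2 (convergence).} The integrand is nonnegative, so the integral is finite if and only if $s^{-2}>0$, that is, $t<1/(2\sigma^2)$. This is exactly where the range restriction on $t$ enters.

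\textbf{Step 3 (evaluate).} Under that condition the integrand is an unnormalized $\mathcal{N}(0,s^2)$ density, whose integral is $\sqrt{2\pi s^2}$. Therefore
\begin{equation*}
\mathbb{E}\left(\exp(tX^2)\right) = \sqrt{\frac{s^2}{\sigma^2}} = \frac{1}{\sqrt{1-2t\sigma^2}}.
\end{equation*}
Setting $\sigma=1$ recovers the displayed formula and its range.

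There is no real obstacle here. The only points needing care are getting the $\sigma^2$ scaling right and justifying finiteness through the sign of the quadratic coefficient, including the case $t\le 0$, where the integral trivially converges. As a corollary for later use, I would also record that for $Y\sim\mathcal{N}(0,\sigma^2 I_d)$, independence across coordinates gives
\begin{equation*}
\mathbb{E}\,e^{t\|Y\|^2} = (1-2t\sigma^2)^{-d/2}, \qquad t<\frac{1}{2\sigma^2}.
\end{equation*}
This is the moment generating function of $\sigma^2\chi^2_d$ used in the Chernoff method.
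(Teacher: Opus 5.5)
Your proof is correct and is the standard Gaussian-integral computation. The paper gives no proof of its own: it lists this among ``standard textbook facts'' and says only that such facts yield the $\chi^2_d$ moment generating function, which is the corollary you record.

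You are also right that the fact as printed holds only for $\sigma^2=1$. For example, $\sigma^2=2$ and $t=1/4$ give an infinite expectation, whereas the printed right-hand side equals $\sqrt{2}$. The scaled form $(1-2t\sigma^2)^{-1/2}$ for $t<1/(2\sigma^2)$ is what the proof of Lemma~\ref{lem:chibound} actually uses, through its conditions $-2tb_1<1/2$ and $2tb_2<1/2$. The paper's $\chi^2_d$ fact has the same missing scale factor and should read $(1-2at)^{-d/2}$ for $t<1/(2a)$.
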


\begin{fact}[Special Case of Cochran's Theorem]\label{fact:cochran}
Let $X \sim \mathcal{N}(0, \sigma^2 I_d)$ be a $d$-dimensional Gaussian random vector with mean zero and covariance matrix $\sigma^2 I_d$, where $I_d$ is the $d \times d$ identity matrix.
Then 
\begin{equation}\label{eq:cochran}
\| X \|^2  \sim \sigma^2 \chi^2_d,
\end{equation}
where $\chi^2_d$ denotes the chi-squared distribution with $d$ degrees of freedom.
\end{fact}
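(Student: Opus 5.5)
The plan is to reduce the statement to the definition of the chi-squared distribution by rescaling. Since $X \sim \mathcal{N}(0,\sigma^2 I_d)$, define $Z \coloneqq X/\sigma$, assuming $\sigma>0$; the degenerate case $\sigma=0$ is trivial, since then $\|X\|^2=0$. By the linear-transformation property of Gaussian vectors, $Z \sim \mathcal{N}(0, I_d)$.

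Because the covariance of $Z$ is the identity, its coordinates $Z_1,\dots,Z_d$ are uncorrelated. For jointly Gaussian variables this implies they are independent, so the $Z_k$ are i.i.d.\ $\mathcal{N}(0,1)$. Then
\begin{equation*}
\|X\|^2 = \sigma^2 \|Z\|^2 = \sigma^2 \sum_{k=1}^d Z_k^2 .
\end{equation*}
The sum $\sum_{k=1}^d Z_k^2$ is, by definition, $\chi^2_d$-distributed, which gives $\|X\|^2 \sim \sigma^2\chi^2_d$.

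If one prefers not to lean on the definition of $\chi^2_d$, a moment-generating-function check gives the same conclusion. Apply the preceding fact (the formula for $\mathbb{E}\exp(tX^2)$) coordinatewise, with $X_k\sim\mathcal{N}(0,\sigma^2)$ independent. This yields, for $t<1/(2\sigma^2)$,
\begin{equation*}
\mathbb{E}\exp\bigl(t\|X\|^2\bigr)=\prod_{k=1}^d \bigl(1-2t\sigma^2\bigr)^{-1/2}=\bigl(1-2t\sigma^2\bigr)^{-d/2}.
\end{equation*}
Here I read that fact in its scaled form, $\mathbb{E}\exp(tX^2)=(1-2t\sigma^2)^{-1/2}$, which is what the rescaling above reduces to. The right-hand side is the MGF of $\sigma^2\chi^2_d$. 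The MGF is finite on an open interval around $0$, so it determines the law.

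I expect no real obstacle here. The only point needing care is that independence of the coordinates follows from the diagonal covariance only because $X$ is jointly Gaussian.
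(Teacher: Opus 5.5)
Your proof is correct and complete. It differs from the paper mainly in that the paper gives no proof at all. The paper lists this among standard textbook facts and labels it a special case of Cochran's theorem, namely the quadratic form $X^\top I_d X$ with the identity as a rank-$d$ projection. The implied route is therefore to invoke that general theorem. You instead argue directly from the definition of $\chi^2_d$: rescale to $Z=X/\sigma$, then use that uncorrelated jointly Gaussian coordinates are independent.

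In this isotropic, full-identity case Cochran's theorem adds nothing beyond your computation, so your route is the more elementary and self-contained one. The Cochran framing would only pay off for forms $X^\top P X$ with a nontrivial projection $P$. The paper never needs that, because its distance lemmas first reduce each difference vector to an isotropic Gaussian and only then apply this fact.

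Your caveat about the preceding fact is also well taken. As printed, $\mathbb{E}\exp(tX^2)=(1-2t)^{-1/2}$ for $X\sim\mathcal{N}(0,\sigma^2)$ holds only when $\sigma=1$, and the paper's MGF for $a\chi^2_d$ likewise drops the factor $a$. The scaled form $(1-2t\sigma^2)^{-1/2}$, valid for $t<1/(2\sigma^2)$, is the correct one, and it is what your MGF check uses.
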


The above facts can be used to compute the moment-generating function of the $\chi^2$ distribution.
\begin{fact}[The Moment Generating Function of $\chi^2_d$]
Let $X \sim a \chi^2_d$. Then $\mathbb{E} (X) = d a$ and $\operatorname{Var}(X) = 2d a^2$. 
Let $t < 1/2$. Then
\begin{equation}\label{eq:chi2:mgf}
    M_X(t) = \mathbb{E} \left( \exp\left( t X \right) \right)= (1-2t)^{-d/2}.
\end{equation}
\end{fact}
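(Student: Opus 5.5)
The plan is to reduce everything to a single standard normal coordinate and then use independence across the $d$ coordinates. By Fact~\ref{fact:cochran}, a variable $X \sim a\chi^2_d$ has the same law as $a\sum_{k=1}^d g_k^2$, with $g_1,\dots,g_d$ i.i.d.\ $\mathcal{N}(0,1)$. All three claims depend only on this law, so we may work with this representation throughout.

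First, the moments. Since $\mathbb{E}(g_k^2)=1$, linearity gives $\mathbb{E}(X)=da$. For the variance we use $\mathbb{E}(g_k^4)=3$, which gives $\operatorname{Var}(g_k^2)=2$. Independence then yields $\operatorname{Var}(X)=a^2\sum_k \operatorname{Var}(g_k^2)=2da^2$. Alternatively, both moments can be read off by differentiating the moment generating function computed next, evaluated at $t=0$.

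Second, the moment generating function. By the preceding fact, $\mathbb{E}(\exp(s g_k^2))=(1-2s)^{-1/2}$ for $s<1/2$. The factors $\exp(s g_k^2)$ are independent, so the expectation of their product factorizes:
\begin{equation*}
\mathbb{E}\Bigl(\exp\bigl(s\textstyle\sum_{k=1}^d g_k^2\bigr)\Bigr)=\prod_{k=1}^d (1-2s)^{-1/2}=(1-2s)^{-d/2}.
\end{equation*}
This is exactly \eqref{eq:chi2:mgf} for the unscaled case $a=1$ with $t<1/2$.

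For general $a$, substitute $s=at$. This gives $M_X(t)=(1-2at)^{-d/2}$, valid for $t<1/(2a)$. We would state this explicitly, because it is the form needed in the Chernoff argument of Lemma~\ref{lem:chibound}. The displayed formula \eqref{eq:chi2:mgf} should be read as the case $a=1$, or equivalently as a statement about $X/a$.

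No step here is a genuine obstacle. The only points requiring care are the following.
\begin{enumerate}[label=(\roman*)]
\item Keeping track of the domain of $t$ under the scaling by $a$.
\item Justifying the interchange of expectation and product. This follows from independence and the finiteness of each factor, which holds precisely because $at<1/2$.
\end{enumerate}
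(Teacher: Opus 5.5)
Your argument is correct and follows exactly the route the paper intends: it represents $X$ via Fact~\ref{fact:cochran} as $a\sum_k g_k^2$, then uses the one-dimensional Gaussian MGF and factorizes over independent coordinates. Your correction is also right: for $a\neq 1$ the MGF is $(1-2at)^{-d/2}$ for $t<1/(2a)$, and this scaled form, with the domain conditions $-2tb_1<1/2$ and $2tb_2<1/2$, is precisely what the paper uses in the proof of Lemma~\ref{lem:chibound}.
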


\begin{fact}[Markov's Inequality]
Let $X$ be a nonnegative random variable. Then for any $a > 0$,
\begin{equation}\label{eq:markov}
    \mathbb{P} (X \geq a) \leq \frac{\mathbb{E}(X)}{a}.
\end{equation}
\end{fact}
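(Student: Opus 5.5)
The plan is the standard pointwise-domination argument: bound the indicator of the tail event by a multiple of $X$ itself, then take expectations. Fix $a>0$ and consider the event $\{X\ge a\}$ with indicator $\mathbf{1}\{X\ge a\}$. I would first establish the pointwise inequality
\begin{equation*}
    a\,\mathbf{1}\{X\ge a\} \;\le\; X ,
\end{equation*}
which holds for every outcome. I would check it by two cases. On the event $\{X\ge a\}$ the left side equals $a$, which is at most $X$ by definition of the event. On the complement the left side is $0$, which is at most $X$ because $X$ is nonnegative by hypothesis.

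The second step is to take expectations of both sides. Expectation is monotone, and this is valid for nonnegative random variables even when $\mathbb{E}(X)=+\infty$, in which case the bound is trivial. This gives
\begin{equation*}
    a\,\mathbb{P}(X\ge a)=\mathbb{E}\bigl(a\,\mathbf{1}\{X\ge a\}\bigr)\le \mathbb{E}(X).
\end{equation*}
Dividing by $a>0$ yields Equation~\eqref{eq:markov}.

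There is no genuine obstacle here. The only points requiring care are that nonnegativity of $X$ is used exactly in the complement case of the pointwise inequality, and that $a>0$ is needed for the final division. In the paper, this fact is presumably applied to $\exp(tY)$ for a suitable $t$, i.e., the Chernoff method behind Lemma~\ref{lem:chibound}. For that application I would note that such a variable is automatically nonnegative, so the hypothesis is met.
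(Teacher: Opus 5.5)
Your proof is correct. It is the standard argument: bound the indicator pointwise by $a\,\mathbf{1}\{X\ge a\}\le X$, then take expectations. The paper gives no proof of its own and simply lists this among ``standard textbook facts,'' so there is no alternative route to compare. Your guess about its use is also right: in the proof of Lemma~\ref{lem:chibound}, it is applied with $a=1$ to the nonnegative variable $\exp\left(-t(Y_1-Y_2-m)\right)$ for $t>0$.
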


\begin{fact}[Cauchy-Schwarz Inequality]
Let $X$ and $Y$ be random variables. Then,
\begin{equation}\label{eq:cs}
    \mathbb{E}(XY)^2 \leq \mathbb{E}(X^2) \mathbb{E}(Y^2).
\end{equation}
\end{fact}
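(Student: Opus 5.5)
The plan is the standard quadratic-discriminant argument for the inner product $(X,Y)\mapsto\mathbb{E}(XY)$. First I dispose of the degenerate cases.

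If $\mathbb{E}(X^2)=\infty$ or $\mathbb{E}(Y^2)=\infty$, the right-hand side is infinite. We may then assume $\mathbb{E}(X)\mathbb{E}(Y)$ is not of the form $0\cdot\infty$ by interpreting the inequality in the extended sense, or simply restrict to square-integrable variables, which is the only case used in the paper. The inequality is then trivial.

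Next I check that $\mathbb{E}(XY)$ is well defined once both second moments are finite. The pointwise inequality $|XY|\le \tfrac12(X^2+Y^2)$, which follows from $(|X|-|Y|)^2\ge 0$, shows that $XY$ is integrable.

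Suppose $\mathbb{E}(Y^2)=0$. Then $Y=0$ almost surely, so $XY=0$ almost surely. Hence $\mathbb{E}(XY)=0$ and the inequality holds with equality.

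In the main case $0<\mathbb{E}(Y^2)<\infty$, for every real $t$ I consider the nonnegative random variable $(X-tY)^2$. Monotonicity and linearity of expectation, justified by the integrability just established, give
\begin{equation*}
0\le \mathbb{E}\big((X-tY)^2\big)=\mathbb{E}(X^2)-2t\,\mathbb{E}(XY)+t^2\,\mathbb{E}(Y^2).
\end{equation*}
I then choose the minimizing value $t=\mathbb{E}(XY)/\mathbb{E}(Y^2)$. Substituting yields
\begin{equation*}
0\le \mathbb{E}(X^2)-\frac{\mathbb{E}(XY)^2}{\mathbb{E}(Y^2)}.
\end{equation*}
Multiplying through by $\mathbb{E}(Y^2)>0$ gives the claim. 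Equivalently, the quadratic in $t$ is nonnegative everywhere, so its discriminant $4\mathbb{E}(XY)^2-4\mathbb{E}(X^2)\mathbb{E}(Y^2)$ must be nonpositive.

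There is no genuine obstacle here. The only step requiring care is the bookkeeping of integrability, namely that $XY$ is integrable and the expansion of $\mathbb{E}((X-tY)^2)$ is legitimate. The bound $|XY|\le\tfrac12(X^2+Y^2)$ handles this, and the separate treatment of $\mathbb{E}(Y^2)=0$ avoids division by zero.
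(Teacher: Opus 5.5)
Your proof is correct. It is the standard quadratic/discriminant argument. The paper gives no proof of its own: it lists Cauchy--Schwarz among standard textbook facts and uses it only for the nonnegative variables $e^{-tY_1}$ and $e^{tY_2}$ in the proof of Lemma~\ref{lem:chibound}. The only blemish is in your degenerate-case paragraph, where $\mathbb{E}(X)\mathbb{E}(Y)$ should read $\mathbb{E}(X^2)\mathbb{E}(Y^2)$. The infinite case is handled cleanly by noting that either a second moment vanishes, so that variable is $0$ a.s.\ and both sides are $0$ under the convention $0\cdot\infty=0$, or the right-hand side is $+\infty$ and the inequality is trivial whenever $\mathbb{E}(XY)$ is defined.
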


\begin{fact}[Chebyshev's Inequality]
Let $X$ be an integrable random variable with finite variance $\sigma^2>0$ and a finite mean. Then for any $a > 0$,
\begin{equation}\label{fact:chebyshev}
     \mathbb{P} (|X - \mathbb{E}(X)| \geq a \sigma ) \leq \frac{1}{a^2}.
\end{equation}
\end{fact}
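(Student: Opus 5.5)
The plan is to derive Chebyshev's inequality directly from Markov's inequality (Fact~\eqref{eq:markov}), applied to the squared deviation of $X$ from its mean. Set $\mu \coloneqq \mathbb{E}(X)$, which is finite by assumption. Define the auxiliary random variable $Z \coloneqq (X-\mu)^2$. It is nonnegative, and by the definition of variance $\mathbb{E}(Z) = \sigma^2 < \infty$, so $Z$ satisfies the hypotheses of Markov's inequality.

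The key step is the event identity
\begin{equation*}
\{|X-\mu| \ge a\sigma\} = \{(X-\mu)^2 \ge a^2\sigma^2\}.
\end{equation*}
It holds because both sides of $|X-\mu| \ge a\sigma$ are nonnegative (here $a>0$ and $\sigma>0$) and $t \mapsto t^2$ is strictly increasing on $[0,\infty)$. Since $a^2\sigma^2 > 0$, Markov's inequality applies with threshold $a^2\sigma^2$ and gives
\begin{equation*}
\mathbb{P}(|X-\mu| \ge a\sigma) = \mathbb{P}(Z \ge a^2\sigma^2) \le \frac{\mathbb{E}(Z)}{a^2\sigma^2} = \frac{\sigma^2}{a^2\sigma^2} = \frac{1}{a^2}.
\end{equation*}

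I do not expect a real obstacle. The only points needing care are that the threshold $a^2\sigma^2$ is strictly positive, which is where $\sigma>0$ and $a>0$ are used, and that the event identity holds exactly rather than as a mere inclusion. The inclusion $\subseteq$ alone would already suffice to give the upper bound.
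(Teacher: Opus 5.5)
Your proof is correct: Markov's inequality (Equation~\eqref{eq:markov}) applied to $(X-\mathbb{E}(X))^2$ with the strictly positive threshold $a^2\sigma^2$, together with the exact event identity, is the standard derivation. The paper gives no proof of its own; it lists Chebyshev's inequality among standard textbook facts, immediately after Markov's inequality, so your argument is the one it implicitly relies on.
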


\begin{fact}[Hoeffding's Inequality]\label{fact:hoeffding}
    Let $X_1, X_2, \dots, X_n$ be i.i.d. random variables with $a_i \leq X_i \leq b_i$ almost surely. Let $S = \sum_{i=1}^n X_i$.
    Then for any $t>0$,
    \begin{equation}\label{eq:hoeffding}
        \mathbb{P} (|S-\mathbb{E}(S)| \geq t) \leq 2 \exp\left( -\frac{2 t^2}{\sum_{i=1}^n (b_i-a_i)^2} \right).
    \end{equation}
\end{fact}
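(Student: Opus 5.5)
The plan is the Chernoff method, the same technique the paper uses for Lemma~\ref{lem:chibound}. Write $Y_i \coloneqq X_i-\mathbb{E}(X_i)$, so that $S-\mathbb{E}(S)=\sum_{i=1}^n Y_i$. The $Y_i$ are independent, centered, and satisfy $a_i-\mathbb{E}(X_i)\le Y_i\le b_i-\mathbb{E}(X_i)$, an interval of length $b_i-a_i$. Only independence is used, not identical distribution. I will bound the upper tail $\mathbb{P}(S-\mathbb{E}(S)\ge t)$. The lower tail follows by applying the same argument to $-X_i$, which ranges over $[-b_i,-a_i]$, an interval of the same length. A union bound over the two tails then produces the factor $2$.

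For the upper tail, fix $\lambda>0$ and apply Markov's inequality \eqref{eq:markov} to the nonnegative variable $e^{\lambda(S-\mathbb{E}(S))}$:
\begin{equation*}
\mathbb{P}(S-\mathbb{E}(S)\ge t)\le e^{-\lambda t}\,\mathbb{E}\bigl(e^{\lambda\sum_i Y_i}\bigr)=e^{-\lambda t}\prod_{i=1}^n \mathbb{E}\bigl(e^{\lambda Y_i}\bigr).
\end{equation*}
The factorization uses independence. The key ingredient is Hoeffding's lemma: if $Y$ is centered with $\alpha\le Y\le\beta$, then $\mathbb{E}(e^{\lambda Y})\le \exp\bigl(\lambda^2(\beta-\alpha)^2/8\bigr)$. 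Granting this, the product is at most $\exp\bigl(\lambda^2\sum_i(b_i-a_i)^2/8\bigr)$. Minimizing the exponent $-\lambda t+\lambda^2\sum_i (b_i-a_i)^2/8$ over $\lambda$ gives $\lambda=4t/\sum_i(b_i-a_i)^2$. Substituting this value yields $\exp\bigl(-2t^2/\sum_i(b_i-a_i)^2\bigr)$, as claimed.

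The main step is proving Hoeffding's lemma, which I would do in three stages.
\begin{enumerate}[label=(\roman*)]
\item By convexity of $y\mapsto e^{\lambda y}$ on $[\alpha,\beta]$, we have $e^{\lambda y}\le \frac{\beta-y}{\beta-\alpha}e^{\lambda\alpha}+\frac{y-\alpha}{\beta-\alpha}e^{\lambda\beta}$. Taking expectations and using $\mathbb{E}(Y)=0$ gives $\mathbb{E}(e^{\lambda Y})\le e^{L(h)}$.
\item Here $h=\lambda(\beta-\alpha)$ and $L(h)=-hp+\log(1-p+pe^{h})$, with $p=-\alpha/(\beta-\alpha)\in[0,1]$. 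The endpoints satisfy $\alpha\le0\le\beta$ because the mean is zero.
\item Check that $L(0)=L'(0)=0$ and that $L''(h)=u(1-u)\le 1/4$, where $u=pe^h/(1-p+pe^h)\in[0,1]$. Taylor's theorem with Lagrange remainder then gives $L(h)\le h^2/8$.
\end{enumerate}
The second-derivative computation in stage (iii) is the only delicate calculation. Since this is a standard textbook fact, one could alternatively just cite it, but the argument above is self-contained.
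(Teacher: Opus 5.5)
Your proposal is correct. It is the standard Chernoff-bound-plus-Hoeffding's-lemma proof, and the key steps check out:
\begin{itemize}
\item The optimal $\lambda = 4t/\sum_i (b_i-a_i)^2$ gives exactly the exponent $-2t^2/\sum_i (b_i-a_i)^2$.
\item The convexity step with $\mathbb{E}(Y)=0$ yields $e^{L(h)}$ with $p=-\alpha/(\beta-\alpha)\in[0,1]$.
\item Since $L(0)=L'(0)=0$ and $L''(h)=u(1-u)\le 1/4$, Taylor's theorem gives $L(h)\le h^2/8$.
\end{itemize}

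The paper offers no proof to compare against. It lists Hoeffding's inequality among the standard textbook facts in Appendix~\ref{sec:standard_results}. It invokes the inequality only for the Binomial count in Fact~\ref{fact:hoeffding:binomial}, with $a_i=0$, $b_i=1$ and $t=q\sqrt{n}/2$, and your argument covers that case.

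Two small remarks. First, you are right that only independence is needed; the ``i.i.d.''\ hypothesis with index-dependent $a_i,b_i$ is an oddity of the statement as written. Second, you should dispose of the degenerate cases separately. If $\beta=\alpha$, then $Y=0$ almost surely. If $\sum_i (b_i-a_i)^2=0$, then $S=\mathbb{E}(S)$ almost surely. In both cases the division by $\beta-\alpha$ and the choice of $\lambda$ break down, but the claim holds trivially (with the convention $\exp(-\infty)=0$).

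Your analogy with Lemma~\ref{lem:chibound} is apt, with one difference. There the two scaled $\chi^2_d$ variables are dependent, so the paper splits the moment generating function with Cauchy--Schwarz. Here independence lets you factor it exactly.
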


\begin{fact}[Counting Partitions]\label{fact:binomial}
There are $2^n$ ways to partition $n$ elements into two labeled sets. The fraction of partitions with exactly $S$ elements in the first set (and $n-S$ in the other) is $\binom{n}{S}/2^n$.
Thus, the distribution of the cluster size $S$ under a uniformly random partition is $\mathrm{Binomial}(n,1/2)$, with mean $n/2$ and variance $n/4$. Equivalently, choosing a partition uniformly at random is the same as assigning each element independently to the first set with probability $1/2$ (and otherwise to the second set).
\end{fact}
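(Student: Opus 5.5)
The plan is to reduce everything to the standard bijection between labeled bipartitions of $[n]$ and indicator vectors in $\{0,1\}^n$. A labeled partition $(C_1,C_2)$ with $C_1\cap C_2=\emptyset$ and $C_1\cup C_2=[n]$ is determined by $C_1$ alone, since $C_2=[n]\setminus C_1$. Equivalently, it is determined by the vector $v\in\{0,1\}^n$ with $v_i=\mathbf{1}[i\in C_1]$. This map is clearly invertible, so the number of labeled partitions equals $|\{0,1\}^n|=2^n$. Here I will make explicit the convention that in this fact empty parts are allowed and the two sets are labeled, since elsewhere the paper restricts to nonempty clusters. Excluding the two partitions with an empty part changes the count to $2^n-2$, which only matters as a negligible correction in later union bounds.

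Next, the partitions with exactly $S$ elements in the first set correspond to subsets $C_1\subseteq[n]$ of size $S$. There are $\binom{n}{S}$ such subsets, so the fraction is $\binom{n}{S}/2^n$. Consequently, if a partition is drawn uniformly at random, the size $|C_1|$ has probability mass function $\binom{n}{S}2^{-n}=\binom{n}{S}(1/2)^S(1/2)^{n-S}$. This is exactly the $\mathrm{Binomial}(n,1/2)$ law.

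For the equivalence with independent assignment, I will observe the following. The uniform distribution on $\{0,1\}^n$ assigns mass $2^{-n}=\prod_{i=1}^n \tfrac12$ to each $v$, and this is the product of $n$ Bernoulli$(1/2)$ marginals. Hence the coordinates $v_i$ are i.i.d. Bernoulli$(1/2)$, which means each element is placed independently in $C_1$ with probability $1/2$. Writing $|C_1|=\sum_i v_i$, linearity gives mean $n\cdot\tfrac12=n/2$. Independence gives variance $n\cdot\tfrac12\cdot\tfrac12=n/4$.

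Every step is elementary, so there is no real obstacle. The only point needing care is stating the labeled, possibly-empty convention clearly. This keeps the count $2^n$ consistent with the later uses of the fact, namely the union bound in Corollary~\ref{cor:all_partitions:not_fixed:bound:main} and the Hoeffding-based balance argument.
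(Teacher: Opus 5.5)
Your proof is correct. The bijection $(C_1,C_2)\mapsto v\in\{0,1\}^n$, the count $\binom{n}{S}$ of size-$S$ subsets, and the identification of the uniform law on $\{0,1\}^n$ with i.i.d.\ Bernoulli$(1/2)$ coordinates together give every claim, including mean $n/2$ and variance $n/4$.

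The paper gives no proof, listing this among standard textbook facts, and your argument is the standard one its phrasing presupposes. Your remark that $2^n$ counts the two partitions with an empty part, and so is only an upper bound for the nonempty partitions used in the later union bounds, is a worthwhile clarification.
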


\begin{fact}[Counting Typical Partitions]\label{fact:hoeffding:binomial}
    The fractions of the partitions of $n$ into $2$ identified sets with exactly $S$ elements in the first set (and $n-S$ in the other) are bounded by Hoeffding's inequality (Fact~\ref{fact:hoeffding}) applied to the binomial distribution:
    \begin{equation}\label{eq:hoeffding:binomial}
        \mathbb{P} (|S- n/2 | \geq q \sqrt{n}/2 ) \leq 2 \exp\left( -\frac{q^2}{2} \right)
    \end{equation}
    {\em More informally, for a large $n$, almost all the partitions of $n$ into $2$ identified sets have about $n/2$ elements in each set.} 
\end{fact}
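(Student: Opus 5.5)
The plan is to reduce the counting statement to a probability statement and then apply Hoeffding's inequality (Fact~\ref{fact:hoeffding}) directly. By Fact~\ref{fact:binomial}, the fraction of labeled bipartitions of $[n]$ with $|S - n/2| \geq q\sqrt{n}/2$ equals the probability of this event when a bipartition is drawn uniformly at random. Under that uniform law, each element is placed in the first set independently with probability $1/2$. Writing $X_i \in \{0,1\}$ for the indicator that element $i$ lies in the first set, we get $S = \sum_{i=1}^n X_i$. Here the $X_i$ are i.i.d.\ Bernoulli$(1/2)$, so $\mathbb{E}(S) = n/2$.

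Next, I apply Fact~\ref{fact:hoeffding} with bounds $a_i = 0$ and $b_i = 1$, so that $\sum_{i=1}^n (b_i - a_i)^2 = n$, and with deviation $t = q\sqrt{n}/2 > 0$. This gives
\begin{equation*}
\mathbb{P}\left(|S - n/2| \geq q\sqrt{n}/2\right) \leq 2\exp\left(-\frac{2 \cdot q^2 n/4}{n}\right) = 2\exp\left(-\frac{q^2}{2}\right),
\end{equation*}
which is exactly Equation~\eqref{eq:hoeffding:binomial}.

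The informal remark then follows immediately. For any fixed $q$, at least a $1 - 2e^{-q^2/2}$ fraction of partitions have $|S - n/2| < q\sqrt{n}/2$, i.e., cluster sizes within $q$ standard deviations $\sqrt{n/4}$ of $n/2$. This fraction tends to $1$ as $q$ grows, uniformly in $n$.

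There is no real obstacle here. The only points needing care are the identification of the uniform counting measure with the product Bernoulli measure, which is exactly Fact~\ref{fact:binomial}, and the observation that the non-strict inequality in the event matches the form of Fact~\ref{fact:hoeffding}.
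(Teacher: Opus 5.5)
Your proposal is correct and matches the paper's approach: the paper gives no separate proof and presents this as a standard fact obtained by applying Hoeffding's inequality (Fact~\ref{fact:hoeffding}) to $S\sim\mathrm{Binomial}(n,1/2)$, justified by Fact~\ref{fact:binomial}. Your choice $a_i=0$, $b_i=1$, $t=q\sqrt{n}/2$ yields exactly the bound $2\exp(-q^2/2)$.
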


\begin{fact}[Union Bound]\label{fact:union}
    Let $A_1, A_2, \dots, A_n$ be events. Then,
    \begin{equation}\label{eq:union}
        \mathbb{P}\left( \bigcup_{i=1}^n A_i \right) \leq \sum_{i=1}^n \mathbb{P} (A_i).
    \end{equation}
\end{fact}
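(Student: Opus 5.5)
The plan is to prove the union bound (Fact~\ref{fact:union}) pointwise at the level of indicator functions and then take expectations. No finer probabilistic structure is needed, and in particular no independence among the events $A_1,\dots,A_n$ is assumed. This matters, since the union bound is later applied over samples within one partition and over many partitions, where the relevant events are highly dependent.

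First, I will establish the deterministic inequality
\begin{equation*}
\mathbf{1}_{\bigcup_{i=1}^n A_i}(\omega) \;\le\; \sum_{i=1}^n \mathbf{1}_{A_i}(\omega)
\qquad \text{for every outcome } \omega .
\end{equation*}
If $\omega$ lies in no $A_i$, both sides are $0$. If $\omega$ lies in at least one $A_i$, the left side equals $1$, while the right side counts the events containing $\omega$ and so is at least $1$.

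Second, I will take expectations of both sides. By monotonicity and linearity of expectation, together with $\mathbb{E}(\mathbf{1}_A)=\mathbb{P}(A)$, this gives $\mathbb{P}(\bigcup_i A_i)\le\sum_i \mathbb{P}(A_i)$. An equivalent route is induction on $n$: the base case $n=1$ is trivial, and the inductive step uses $\mathbb{P}(A\cup B)=\mathbb{P}(A)+\mathbb{P}(B)-\mathbb{P}(A\cap B)\le \mathbb{P}(A)+\mathbb{P}(B)$ with $A=\bigcup_{i<n}A_i$ and $B=A_n$. I would present the indicator argument as primary because it avoids the induction bookkeeping.

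I do not expect a genuine obstacle. The only point requiring care is confirming that the pointwise inequality holds at every outcome without any disjointness assumption, which the case split above covers.
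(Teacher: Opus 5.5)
Your proof is correct: the pointwise inequality $\mathbf{1}_{\bigcup_i A_i}\le\sum_i \mathbf{1}_{A_i}$ followed by taking expectations is a complete argument, and it requires neither independence nor disjointness of the events. The paper gives no proof, listing this among ``standard textbook facts,'' so your indicator argument (or the equivalent induction via $\mathbb{P}(A\cup B)\le\mathbb{P}(A)+\mathbb{P}(B)$) supplies the standard justification that the paper omits.
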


\begin{definition}[Wilson's Interval for Confidence Interval of Binomial Proportions]\label{def:Wilson_interval}
    The error bars for estimates of proportions in this paper are computed using Wilson's interval. We choose this method for computing confidence intervals as it is robust to cases where the predicted proportion is close to 1 or 0, a case where other methods for computing the confidence interval give a zero-width interval regardless of the number of samples. The confidence interval is defined as:
    \begin{align}
        CI &= (\mathrm{center} - \mathrm{width}, \mathrm{center}+\mathrm{width}) \\
        \mathrm{center} &= \frac{n_s + \frac{1}{2}z_\alpha^2}{n + z_\alpha^2} \label{results:eq:wilson_interval}\\
        \mathrm{width} &= \frac{z_\alpha}{n + z_\alpha^2} \sqrt{\frac{n_s n_f}{n} +  \frac{z_\alpha^2}{4}}~,
    \end{align}
    where $n$ is the number of experiments, with $n_s$ and $n_f$ being the number of successes and failures, respectively. The value $z_\alpha$ is the $1 - \frac{\alpha}{2}$ for a standard normal distribution. In plots that use Wilson's interval, we plot the actual estimated ratio $n_s/n$, and omit Wilson's center.
\end{definition}

\begin{definition}[Normalized Mutual Information]\label{def:preliminaries:nmi}
Mutual information quantifies the dependence of two random variables. In our context, we apply it to discrete random variables, although a definition for continuous random variables also exists. Let $X, Y$ be two discrete random variables with a joint probability density function $P_{(X, Y)}$. For the discrete case, the mutual information is defined as:
\begin{equation}
    I(X; Y) = \sum_{x \in \mathcal{Y}} \sum_{x\in \mathcal{X}} \mathbb{P}_{(X, Y)}(x, y) \log\left(\frac{\mathbb{P}_{(X, Y)}(x,y)}{\mathbb{P}_X(x) \mathbb{P}_Y(y)}\right)~,
    \label{eq:nmi_definition}
\end{equation}
where $P_X$ and $P_Y$ are the marginal probability density functions of $X$ and $Y$, respectively.

To compare the partitions obtained through $k$-means to the true partitions, we use the Normalized Mutual Information (NMI). We calculate the NMI using the implementation provided by scikit-learn, which normalizes the mutual information (Equation~\eqref{eq:nmi_definition}) to range from 0 to 1, where 1 indicates perfect correlation, and 0 indicates no dependence.

\end{definition}

\section{k-Means Algorithms}\label{app:k-means-algo}

\subsection{Lloyd's $k$-Means Algorithm}\label{sec:Lloyd}

Algorithm~\ref{alg:Lloyd} is a description of Lloyd's $k$-means algorithm \citep{lloyd1982least}.

\begin{remark}[Lloyd's Algorithm Initialization]
We note that there are two approaches to initializing the algorithm: using an initial partition (as discussed in this paper) or using initial guess centers. 
In the case of initialization based on initial centers, the first iteration of the algorithm would produce partitions: the argument in this paper proves that in the settings of this paper, the first partition produced by the algorithm is a fixed point of the algorithm (with high probability, with the possible exception of very unbalanced partitions). So, unless the initial guess is sufficiently good to produce the correct partition immediately (or, possibly, through some unusually unbalanced partitions), the algorithm will not converge to the correct partition.
\end{remark}

\begin{algorithm}[h!]
\caption{Lloyd's $k$-Means Algorithm}\label{alg:Lloyd}
\begin{algorithmic}[1]
\STATE {\bfseries Input:} Dataset $X = \{x_1, x_2, \ldots, x_n\}$, number of clusters $K$,
initial partition $C = \{C_1, C_2, \ldots, C_K\}$ based on the indices.

\STATE Compute initial cluster centroids: $\widehat{\mu}_j = \frac{1}{|C_j|} \sum_{x_i \in C_j} x_i$ for $j = 1, \ldots, K$ 
\REPEAT
    \STATE $\text{changed} \gets \text{false}$
    \FOR{each sample $i$ in dataset $X$}
        \STATE Let $C_m$ be the cluster containing $i$
        \FOR{each cluster $C_j$}
            \STATE Compute the distance to the cluster centroid
            \STATE \quad $\Delta_\text{Lloyd}^2(x_i,C_j) = \|x_i - \widehat{\mu}_{j}\|^2$ \
        \ENDFOR
        \STATE Find $j_{\text{new}} = \argmin_j \Delta_\text{Lloyd}^2(x_i,C_j)$ 
        \IF{$j_{\text{new}} \neq m$}
            \STATE Move $i$ from $C_m$ to $C_{j_{\text{new}}}$
            \STATE $\text{changed} \gets \text{true}$
        \ENDIF
    \ENDFOR
    \STATE Update cluster centroids: $\widehat{\mu}_j = \frac{1}{|C_j|} \sum_{x_i \in C_j} x_i$ for $j = 1, \ldots, K$ 
\UNTIL{$\text{changed} = \text{false}$}

\STATE {\bfseries Return:} Clusters $C_1, C_2, \ldots, C_K$ and centers $\widehat{\mu}_1, \widehat{\mu}_2, \ldots, \widehat{\mu}_K$
\end{algorithmic}
\end{algorithm}

\clearpage

\subsection{Hartigan's $k$-Means Algorithm}

Hartigan's $k$-means algorithm is described in Algorithm~\ref{alg:Hartigan}.
We note that the algorithm is not defined when one of the subsets is empty; unlike Lloyd's algorithm, Hartigan's algorithm cannot reach an empty-cluster state if the initialization has no empty clusters.

Hartigan's assignment criterion based on the weighted distance in Equation~\eqref{eq:hartigan_distance_inline} is, in fact, equivalent to a greedy reassignment that minimizes the $k$-means loss (Equation~\eqref{eq:kmeans:loss1}) across all possible assignments available with the current clusters (without reassigning any other samples); 
Hartigan's criterion is computationally more efficient than a direct naive computation of the loss.
For details, see \citep{hartigan1975clustering}. 
A more efficient version of the algorithm is available in \citep{hartigan1979algorithm}.

\begin{algorithm}[H]
\caption{Hartigan's $k$-Means Algorithm}\label{alg:Hartigan}
\begin{algorithmic}[1]
\STATE {\bfseries Input:} Dataset $X = \{x_1, x_2, \ldots, x_n\}$, number of clusters $K$,
initial partition $C = \{C_1, C_2, \ldots, C_K\}$  based on the indices.

\STATE Compute initial cluster centroids: $\widehat{\mu}_j = \frac{1}{|C_j|} \sum_{x_i \in C_j} x_i$ for $j = 1, \ldots, K$ 

\REPEAT
    \STATE $\text{changed} \gets \text{false}$
    \FOR{each sample $i$ in dataset $X$}
        \STATE Let $C_m$ be the cluster containing $i$
        \IF{$|C_m| > 1$}
            \FOR{each cluster $C_j$}
                \IF{$m=j$}
                    \STATE Compute the Hartigan weighted distance with respect to {\em current} cluster $j$ 
                    \STATE \quad $\Delta_\text{Hartigan}^2(x_i,C_j) = \frac{|C_j|}{|C_j| - 1} \|x_i - \widehat{\mu}_{j}\|^2$
                \ELSE
                    \STATE Compute the Hartigan weighted distance with respect to {\em alternative} cluster $j$ 
                    \STATE \quad $\Delta_\text{Hartigan}^2(x_i,C_j) = \frac{|C_j|}{|C_j| + 1} \left\|x_i - \widehat{\mu}_j \right\|^2$
                \ENDIF
            \ENDFOR
            \STATE Find $j_{\text{new}} = \argmin_j \Delta_\text{Hartigan}^2(x_i,C_j)$ \IF{$j_{\text{new}} \neq m$ and $\Delta_\text{Hartigan}^2(x_i,C_{j_{\text{new}}}) < \Delta_\text{Hartigan}^2(x_i,C_m) $} 
                \STATE Move $i$ from $C_m$ to $C_{j_{\text{new}}}$
                \STATE Update cluster centroids $\widehat{\mu}_m$ and $\widehat{\mu}_{j_{\text{new}}}$
                \STATE $\text{changed} \gets \text{true}$
            \ENDIF
        \ENDIF
    \ENDFOR
\UNTIL{$\text{changed} = \text{false}$}

\STATE {\bfseries Return:}  Clusters $C_1, C_2, \ldots, C_K$ and centers $\widehat{\mu}_1, \widehat{\mu}_2, \ldots, \widehat{\mu}_K$
\end{algorithmic}

\end{algorithm}

\section{Lloyd's Algorithm Proofs}\label{sec:proofs}

This section contains the proofs of the theorems, lemmas, and corollaries presented in the main text related to Lloyd's algorithm, along with some additional auxiliary results and remarks.
For convenience, we restate the theorems, lemmas, and corollaries before their proofs. This results in some redundancy in text and numbering---some equation numbers might seem to be out of sequence---but it may make the proofs easier to follow.

\subsection{Proof of Lemma~\ref{lem:chibound}}\label{sec:proof:lem:general:bound}

We restate Lemma~\ref{lem:chibound} and provide a proof. 

\lemchibound*

\begin{proof}
It holds that
    \begin{equation}
        \begin{split}
            \mathbb{P}&\left( Y_1 - Y_2 -m \leq 0 \right)\\
            &=\mathbb{P} \left ( -(Y_1 - Y_2 -m) \geq 0 \right) \\
              &=\mathbb{P} \left ( \exp\left(-t (Y_1 - Y_2 -m)\right) \geq 1 \right) ~~\text{for all } t>0 .
        \end{split}
    \end{equation}
    Using Markov's inequality (Equation~\eqref{eq:markov}):
    \begin{equation}
        \begin{split}
            \mathbb{P}\left( Y_1 - Y_2 -m \leq 0 \right) \leq & \mathbb{E}\left( \exp\left(-t (Y_1 - Y_2 -m)\right) \right) \\
            =    & \exp\left( tm \right) \mathbb{E}\left( \exp\left(-t Y_1  \right) \exp\left( t Y_2 \right) \right)   
        \end{split}
    \end{equation}
    Using the Cauchy-Schwarz inequality (Equation~\eqref{eq:cs}):
    \begin{equation}
        \begin{split}
             \mathbb{P}\left( Y_1 - Y_2 -m \leq 0 \right)\leq & \exp\left( tm \right) \sqrt{\mathbb{E}\left( \exp\left(-2t Y_1  \right) \right) \mathbb{E}\left( \exp\left( 2t Y_2 \right) \right)}   ~.
        \end{split}
    \end{equation}
    Next, using Equation~\eqref{eq:chi2:mgf}, if we further assume $-2t b_1 < 1/2$ and $2t b_2 < 1/2$ (which we will show are satisfied at the optimal $t$), we have:
    \begin{equation}\label{eq:chibound:pf:bound}
        \begin{split}
            \mathbb{P}\left( Y_1 - Y_2 -m \leq 0 \right) \le & \exp\left( tm \right) \sqrt{ \left( 1 + 4 t b_1 \right)^{-d/2} \left( 1 - 4 t b_2 \right)^{-d/2} }  \\
            = & \exp\left( tm \right) \left( \left( 1 + 4 t b_1 \right) \left( 1 - 4 t b_2 \right) \right)^{-d/4} 
        \end{split}
    \end{equation}

    The expression $\left(4 b_1 t+1\right) \left(1-4 b_2 t\right)$ is maximized at
    \begin{equation} \label{eq:chibound:pf:tmax}
        t_{\text{max}}=\frac{b_1-b_2}{8 b_1 b_2}.
    \end{equation}

    Using simple algebra for the expression at $t_{\text{max}}$, we have,
    \begin{equation}\label{eq:chibound:pf:inner}
        \big( \left( 1 + 4 t_{\text{max}} b_1 \right) \left( 1 - 4 t_{\text{max}} b_2 \right) \big) = \frac{\left(b_1+b_2\right){}^2}{4 b_1 b_2} ~,
    \end{equation}
    and
    \begin{equation}\label{eq:chibound:pf:inner2}
      \left(\frac{\left(b_1+b_2\right){}^2}{4 b_1 b_2} \right) ^{-1} =  1-\left(\frac{b_1-b_2}{b_1+b_2}\right)^2  ~.
    \end{equation}

    Substituting Equations~\eqref{eq:chibound:pf:tmax}, \eqref{eq:chibound:pf:inner} and \eqref{eq:chibound:pf:inner2} into Equation \eqref{eq:chibound:pf:bound}, we obtain the desired result.

    It remains to check that the assumptions for Equation~\eqref{eq:chibound:pf:bound} are satisfied at $t_{\text{max}}$.
    The first assumption $-2t b_1 < 1/2$ holds because $b_1 > b_2 >0$:
    \begin{equation}
        -2t_{\text{max}} b_1 = -\frac{b_1-b_2}{4 b_2} = \frac{b_2-b_1}{4 b_2} = \frac{1}{4} - \frac{b_1}{4 b_2} < 1/2.
    \end{equation}   

    The second assumption $2t b_2 < 1/2$ holds because $b_1 > b_2 >0$:
    \begin{equation}
        2t_{\text{max}} b_2 = \frac{b_1-b_2}{4 b_1}  < \frac{b_1}{4 b_1} < 1/2. \qedhere
    \end{equation}
\end{proof}

\begin{remark}[Intuition]
    Under the assumptions of Lemma~\ref{lem:chibound}, we have the variables $Y_1 \sim b_1 \chi^2_d$ and $Y_2 \sim b_2 \chi^2_d$.
    The expected value of $\tilde{Z}_1$ is $\mathbb{E}\left(\tilde{Z}_1\right) = d \cdot b_1$ and the expected value of $\tilde{Z}_2$ is $\mathbb{E}\left(\tilde{Z}_2\right) =  d \cdot b_2$. Since $b_1 > b_2$, the centroid of $\tilde{Z}_1$  is greater than that of $\tilde{Z}_2$. As $d$ increases, the distributions are more concentrated around their means, and therefore we expect that the difference $Y_1 - Y_2$
    will be positive with a probability that increases with $d$. The Lemma states that the probability of a negative value decreases like $\left(1-\left(\frac{b_1-b_2}{b_1+b_2}\right)^2 \right)^{d/4}$. 
    In other words, since $ \left( 1-\left(\frac{b_1-b_2}{b_1+b_2}\right)^2 \right) <1 $, we can obtain any desired small probability of $Y_1 - Y_2 \leq m$ simply by increasing $d$.
\end{remark}

\subsection{Proofs of Section~\ref{sec:distances}: the Distribution of Distances to the Cluster Centers}\label{sec:distances:proofs}

\subsubsection{Distance to the Current Cluster's Center}

We restate Lemma~\ref{lem:dist:casec_full:dist} and provide a proof.

\lemdistcasecfull*

\begin{proof}

Consider the setting of Model~\ref{model:gmm2}, and fix the true cluster assignments $\{z_i^\star\}$ as in Model~\ref{model:gmm2}(c) together with a current cluster assignment $z$ as in Definition~\ref{item:current_cluster_assignment}.
Without loss of generality (W.L.O.G.), assume sample $i$ which is currently assigned to cluster~$C_j$, belongs to ground-truth class~$\ell$, i.e.\ $z^\star_i = \ell$.

The difference between the sample $x_i$ and the centroid $\widehat{\mu}_j$ of cluster $C_j$ to which $x_i$ is currently assigned  is:
\begin{align}
    x_i - \widehat{\mu}_j
    &= \mu_\ell^\star + \xi_i - \widehat{\mu}_j \label{eq:dist_curr1}\\
    &=\mu_\ell^\star + \xi_i - \frac{1}{|C_j|}\sum_{k \in C_j} x_k\label{eq:dist_curr2}\\
    &=  \mu_\ell^\star + \xi_i -  \frac{1}{|C_j|}\sum_{k \in C_j} \left( \mu_{z^\star_k}^\star + \xi_k \right) \label{eq:dist_curr4}\\
    &= \left(1 - \frac{|C_j \cap S^\star_\ell|}{|C_j|}  \right) \mu_\ell^\star
       - \frac{|C_j \cap S^\star_{\overline{\ell}}|}{|C_j|} \mu_{\overline{\ell}}^\star
       + \xi_i -  \frac{1}{|C_j|}\sum_{k \in C_j}  \xi_k  \label{eq:dist_curr5}
\end{align}

Using the notation $R^{\ell} _j = |C_j \cap S^\star_\ell| / |C_j|$ introduced in Definition~\eqref{def:proportions_purity_correct}, Equation~\eqref{eq:dist_curr5} becomes
\begin{align}\label{eq:app-56}
    x_i - \widehat{\mu}_j &
     = (1 - R_j^\ell) \mu_\ell^\star - R_j^{\overline{\ell}} \mu_{\overline{\ell}}^\star + \xi_i - \frac{1}{|C_j|}\sum_{k \in C_j}  \xi_k \\ \label{eq:app-57}
    & =  (1 - R_j^\ell) - (1-R^{\ell} _j)\mu_{\overline{\ell}}^\star+\xi_i - \frac{1}{|C_j|}\sum_{k\in C_j}\xi_k \\ \label{eq:app-58}
    & = (1-R_j^{\ell})(\mu_\ell^\star-\mu_{\overline{\ell}}^\star) + \left(1-\frac{1}{|C_j|}\right)\xi_i - \frac{1}{|C_j|}\sum_{k \in C_j\backslash\{i\}} \xi_k. 
\end{align}
In Equation~\eqref{eq:app-57} we used $R_j^{\overline{\ell}} = 1 - R_j^{\ell}$, and Equation~\eqref{eq:app-58} follows by splitting the
sum as $\sum_{k\in C_j}\xi_k=\xi_i+\sum_{k\in C_j\setminus\{i\}}\xi_k$.

By the definition of $\mu_\ell^\star$ in Model~\ref{model:gmm2}(a), the first term in Equation~\eqref{eq:app-58} has variance
\begin{align} \label{eqn:app-59}
    \mathrm{Var}\!\left[(1-R_j^{\ell})(\mu_\ell^\star-\mu_{\overline{\ell}}^\star)\right] = (1-R_j^{\ell})^2\,\mathrm{Var}\!\left(\mu_\ell^\star-\mu_{\overline{\ell}}^\star\right) = 2(1-R_j^{\ell})^2\tau^2.
\end{align}
Moreover, since $\xi_i$ defined in Model~\ref{model:gmm2} and the noise terms are i.i.d.\ with variance $\sigma^2$, the second (noise) term in Equation~\eqref{eq:app-58} satisfies
\begin{align} \label{eqn:app-60}
    \mathrm{Var}\!\left[\left(1-\frac{1}{|C_j|}\right)\xi_i - \frac{1}{|C_j|}\sum_{k \in C_j\setminus\{i\}} \xi_k\right]
    &= \left(1-\frac{1}{|C_j|}\right)^2 \sigma^2
       + \frac{1}{|C_j|^2}\,(|C_j|-1)\sigma^2 \nonumber\\
    &= \left(1-\frac{1}{|C_j|}\right)\sigma^2.
\end{align}

Therefore, by Equations~\eqref{eq:sum:gaussian}, \eqref{eqn:app-59}--\eqref{eqn:app-60}, and the independence assumptions in
Model~\ref{model:gmm2}(a)--(b), the random vector in \eqref{eq:app-58} is Gaussian with zero mean and isotropic
covariance. In particular,
\begin{equation}
    x_i-\widehat{\mu}_j \sim \mathcal{N}\!\left(
    0,\;
    \left(
    2(1 - R^{\ell}_j)^2 \tau^2 + \left(1 - \frac{1}{|C_j|}\right)\sigma^2
    \right) I_d
    \right).
\end{equation}
Using Equation~\eqref{eq:cochran}, we obtain the Lemma.
\end{proof}

The following lemma identifies the case of a single misassignment at ``the most favorable case'' in the sense that it maximizes the value of $\AlphaC$.
\begin{lemma}["Most favorable case"]
\label{lem:most_favorable_case}
Consider the setting of Model~\ref{model:gmm2}. Fix $i \in [n]$ and let $i\in C_j\cap S_\ell^\star$, and recall $R_j^\ell$ from Definition~\ref{def:proportions_purity_correct}, and 
$\AlphaC$ from Equation~\eqref{eq:dist:casec_full:dist}.
Then, $\AlphaC$ is maximized at the smallest feasible purity
$R_j^\ell=1/|C_j|$ (equivalently $|C_j\cap S_\ell^\star|=1$). In this case,
\begin{align}
    \|x_i-\widehat{\mu}_j\|^2 \sim \AC\,\chi_d^2,
    \qquad \text{where} \quad
    \AC:=\Bigl(1-\frac{1}{|C_j|}\Bigr)\sigma^2+\frac{2\tau^2(|C_j|-1)^2}{|C_j|^2},
\end{align}
and for all feasible $R_j^\ell$, we have 
\begin{equation}\label{eq:dist:casec:a}
    \AlphaC \leq \AC = \sigma ^2 \left(1-\frac{1}{|C_j|}\right)+\frac{2 \tau ^2 \left(|C_j|-1\right)^2}{|C_j|^2}.
\end{equation}
\end{lemma}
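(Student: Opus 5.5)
The plan is to treat $\AlphaC$ from Lemma~\ref{lem:dist:casec_full:dist} as a function of the purity alone, with everything else held fixed. Fix $c\coloneqq|C_j|$. Then
\[
\AlphaC(R_j^\ell)=2\tau^2(1-R_j^\ell)^2+\Bigl(1-\tfrac1c\Bigr)\sigma^2 .
\]
The noise term does not depend on $R_j^\ell$. The signal term $2\tau^2(1-R)^2$ is strictly decreasing in $R$ on $[0,1]$. So maximizing $\AlphaC$ reduces to finding the smallest feasible value of $R_j^\ell$.

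Next I will identify the feasible set. By Definition~\ref{def:proportions_purity_correct}, $R_j^\ell=|C_j\cap S_\ell^\star|/c$. Because $i\in C_j\cap S_\ell^\star$, the count $|C_j\cap S_\ell^\star|$ is an integer with $1\le |C_j\cap S_\ell^\star|\le c$. Hence $R_j^\ell\in\{1/c,2/c,\dots,1\}$, and the minimum $R_j^\ell=1/c$ is attained exactly when $|C_j\cap S_\ell^\star|=1$. This is the configuration in which $i$ is the only member of its own class in $C_j$.

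One could ask whether this configuration is actually realizable given the fixed ground truth. I will note that it is, because the lemma quantifies over all partitions, and the bound needs only the inequality over feasible values in any case.

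Substituting $R_j^\ell=1/c$ gives
\[
2\tau^2\bigl(1-\tfrac1c\bigr)^2=\frac{2\tau^2(c-1)^2}{c^2}.
\]
This yields $\AC$ as stated, and Lemma~\ref{lem:dist:casec_full:dist} then gives $\|x_i-\widehat\mu_j\|^2\sim\AC\chi^2_d$ in this case. For every other feasible $R_j^\ell\ge 1/c$, monotonicity gives $(1-R_j^\ell)^2\le(1-1/c)^2$, which is \eqref{eq:dist:casec:a}.

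There is no real obstacle here. The only point needing care is the lower bound $|C_j\cap S_\ell^\star|\ge1$, which comes from $i$ itself belonging to that intersection. Without it, the minimum would be $R=0$ and the claimed maximizer would be wrong.
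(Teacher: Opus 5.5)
Your proposal is correct and follows essentially the same route as the paper: both use $i\in C_j\cap S_\ell^\star$ to get $R_j^\ell\ge 1/|C_j|$, then use that $r\mapsto(1-r)^2$ is decreasing on $[0,1]$ to conclude $\AlphaC\le\AC$, with equality at $R_j^\ell=1/|C_j|$. Your side remark on realizability of $|C_j\cap S_\ell^\star|=1$ goes slightly beyond the paper. As you say, it is not needed for the inequality; note, though, that for a fixed ground truth this configuration requires $|S_{\overline{\ell}}^\star|\ge |C_j|-1$.
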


\begin{proof}
Since $i\in C_j\cap S_\ell^\star$ implies
$R_j^\ell\ge 1/|C_j|$, and $r\mapsto(1-r)^2$ is decreasing on $[0,1]$, we get
$(1-R_j^\ell)^2\le (1-1/|C_j|)^2$, hence $\AlphaC\le \AC$. Substituting $R_j^\ell=1/|C_j|$ yields the expression for
$\AC$.
\end{proof}

\subsubsection{The Distance to a Different Cluster's Center}

First, we consider the distribution of distances from a sample to a cluster to which it does not belong.
We restate Lemma~\ref{lem:dist:caset_full:dist} and provide a proof.

\lemdistcasetfull*

\begin{proof}

Consider the setting of Model~\ref{model:gmm2}, and fix the true cluster assignments $\{z_i^\star\}$ as in Model~\ref{model:gmm2}(c) together with a current cluster assignment $z$ as in Definition~\ref{item:current_cluster_assignment}.
W.L.O.G., assume sample $i$ which is currently assigned to cluster~$C_j$, belongs to ground-truth class~$\ell$, i.e.\ $z^\star_i = \ell$, and consider cluster~$C_{\overline{j}}$ which does not contain sample~$i$, so $i \notin C_{\overline{j}}$.

The difference from sample~$x_i$ to the centroid~$\widehat{\mu}_{\overline{j}}$ of cluster~$C_{\overline{j}}$ is:
\begin{align}
    x_i - \widehat{\mu}_{\overline{j}}
   & = \mu_\ell^\star + \xi_i - \widehat{\mu}_{\overline{j}}  \label{eq:dist_diff1}\\
   & =  \mu_\ell^\star + \xi_i -  \frac{1}{|C_{\overline{j}}|}\sum_{k \in C_{\overline{j}}} x_k \label{eq:dist_diff2}\\
  &  =  \mu_\ell^\star + \xi_i -  \frac{1}{|C_{\overline{j}}|}\sum_{k \in C_{\overline{j}}} \left( \mu_{z^\star_k}^\star + \xi_k \right) \label{eq:dist_diff3}\\
   & =  \mu_\ell^\star - \frac{1}{|C_{\overline{j}}|}\sum_{k \in C_{\overline{j}}}  \mu_{z^\star_k}^\star  
       + \xi_i -  \frac{1}{|C_{\overline{j}}|}\sum_{k \in |C_{\overline{j}}|}   \xi_k \label{eq:dist_diff4}\\
   & =  \mu_\ell^\star
       - \frac{|C_{\overline{j}} \cap S^\star_\ell|}{|C_{\overline{j}}|} \mu_\ell^\star
       - \frac{|C_{\overline{j}} \cap S^\star_{\overline{\ell}}|}{|C_{\overline{j}}|} \mu_{\overline{\ell}}^\star
       + \xi_i -  \frac{1}{|C_{\overline{j}}|}\sum_{k \in |C_{\overline{j}}|}  \xi_k \label{eq:dist_diff5}\\
  &  = \left(1 - \frac{|C_{\overline{j}} \cap S^\star_\ell|}{|C_{\overline{j}}|}  \right) \mu_\ell^\star
       - \frac{|C_{\overline{j}} \cap S^\star_{\overline{\ell}}|}{|C_{\overline{j}}|} \mu_{\overline{\ell}}^\star
       + \xi_i -  \frac{1}{|C_{\overline{j}}|}\sum_{k \in C_{\overline{j}}}  \xi_k . \label{eq:dist_diff6}
\end{align}

Using the notation $R^{\ell} _j = |C_j \cap S^\star_\ell| / |C_j|$ introduced in Definition~\eqref{def:proportions_purity_correct}, this becomes 
\begin{align}
    x_i - \widehat{\mu}_{\overline{j}}
    &= (1 - R^{\ell} _{\overline{j}})\,\mu_\ell^\star - R_{\overline{j}}^{\bar \ell}\,\mu_{\overline{\ell}}^\star
       + \xi_i -  \frac{1}{|C_{\overline{j}}|}\sum_{k \in C_{\overline{j}}}  \xi_k \notag\\
    &= (1 - R^{\ell} _{\overline{j}})(\mu_\ell^\star-\mu_{\overline{\ell}}^\star)
       + \xi_i -  \frac{1}{|C_{\overline{j}}|}\sum_{k \in C_{\overline{j}}}  \xi_k . \label{eq:dist_diff7}
\end{align}
where we have used $R_{\overline{j}}^{\overline{\ell}} = 1 - R_{\overline{j}}^{{\ell}}$. 

It follows, using Equations~\eqref{eq:sum:gaussian},~\eqref{eqn:app-59} and the independence of the random variables as defined in Model~\ref{model:gmm2} (a)--(b), that the difference is distributed as
\begin{equation}
    x_i - \widehat{\mu}_{\overline{j}} \sim \mathcal{N}\!\left(
        0,\;
        \left(
            2(1 - R^{\ell} _{\overline{j}})^2 \tau^2 + \left(1 + \frac{1}{|C_{\overline{j}}|}\right)\sigma^2
        \right) I_d
    \right),    \label{eq:dist_diff_distribution}
\end{equation}
using the assumption $i \notin C_{\overline{j}}$. 
Using Equation~\eqref{eq:cochran}, we obtain the claim.
\end{proof}

\begin{lemma}["Most favorable case" for the other centroid]
\label{lem:most_favorable_case_T}
Consider the setting of Model~\ref{model:gmm2}. Fix $i\in[n]$ and suppose $i\in C_j\cap S_\ell^\star$, and recall $R_j^\ell$ from Definition~\ref{def:proportions_purity_correct}, and $\AlphaT$ from Equation~\eqref{eq:dist:caset_full:dist}.
Then, for fixed $|C_{\overline{j}}|$, the scale $\AlphaT$ is minimized at the largest feasible purity $R_{\overline{j}}^\ell=1$ (equivalently $C_{\overline{j}}\subseteq S_\ell^\star$). In this case,
\begin{align}
    \|x_i-\widehat{\mu}_{\overline{j}}\|^2 \sim \AT\,\chi_d^2,
    \qquad \text{where} \quad
    \AT:=\Bigl(1+\frac{1}{|C_{\overline{j}}|}\Bigr)\sigma^2,
\end{align}
and for all feasible $R_{\overline{j}}^\ell$ we have 
\begin{equation}\label{eq:dist:caset:a}
    \AlphaT \geq \AT = \left( 1 + \frac{1}{|C_{\overline{j}}|} \right) \sigma^2.
\end{equation}
\end{lemma}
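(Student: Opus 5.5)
This follows directly from the closed form of $\AlphaT$ in Lemma~\ref{lem:dist:caset_full:dist}, namely
\[
\AlphaT = 2\tau^2\bigl(1-R^\ell_{\overline{j}}\bigr)^2 + \Bigl(1+\frac{1}{|C_{\overline{j}}|}\Bigr)\sigma^2 .
\]
With $|C_{\overline{j}}|$ held fixed, the second term is a constant. Only the first term depends on the purity, and it does so through the single variable $r = R^\ell_{\overline{j}}$.

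The plan has three steps.

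First, pin down the feasible range of $r$. Since $R^\ell_{\overline{j}} = |C_{\overline{j}}\cap S^\star_\ell|/|C_{\overline{j}}|$, we have $r\in[0,1]$. The value $r=1$ is feasible precisely when $C_{\overline{j}}\subseteq S^\star_\ell$; we take this configuration as admissible, just as Lemma~\ref{lem:most_favorable_case} allowed the extreme purity $1/|C_j|$. The fact that $i\in C_j$ imposes no further restriction on $C_{\overline{j}}$ other than requiring $i\notin C_{\overline{j}}$.

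Second, observe that $r\mapsto 2\tau^2(1-r)^2$ is nonnegative and decreasing on $[0,1]$, and it vanishes exactly at $r=1$. This gives $\AlphaT \ge (1+1/|C_{\overline{j}}|)\sigma^2 = \AT$, with equality if and only if $R^\ell_{\overline{j}}=1$. That establishes the inequality \eqref{eq:dist:caset:a} and identifies the minimizer.

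Third, substitute $R^\ell_{\overline{j}}=1$ back into Lemma~\ref{lem:dist:caset_full:dist}. This yields $\|x_i-\widehat{\mu}_{\overline{j}}\|^2\sim \AT\chi^2_d$. Equivalently, in \eqref{eq:dist_diff7} the $(\mu^\star_\ell-\mu^\star_{\overline{\ell}})$ term drops out and only the noise term, with variance $(1+1/|C_{\overline{j}}|)\sigma^2$, remains.

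None of these steps is a real obstacle; the proof is essentially a one-line monotonicity argument. The only point that needs care is making explicit that the minimization is over feasible partitions with $|C_{\overline{j}}|$ fixed. Within that family, $r=1$ is attained whenever $|S^\star_\ell\setminus\{i\}|\ge |C_{\overline{j}}|$. Otherwise the bound $\AlphaT\ge\AT$ still holds but is not attained, and only the inequality is needed downstream.
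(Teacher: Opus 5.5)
Your proof is correct and matches the paper's argument: the paper also uses $2\tau^2(1-R^\ell_{\overline{j}})^2\ge 0$ to get $\AlphaT\ge\AT$, then substitutes $R^\ell_{\overline{j}}=1$ into Lemma~\ref{lem:dist:caset_full:dist} to obtain $\|x_i-\widehat{\mu}_{\overline{j}}\|^2\sim\AT\chi^2_d$. Your remark that $R^\ell_{\overline{j}}=1$ is attainable only when $|C_{\overline{j}}|\le |S^\star_\ell|-1$ is a correct refinement the paper leaves implicit, and, as you note, only the inequality is used downstream.
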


\begin{proof}
Clearly, we have, $\tau^2 \bigl(1-R_{\overline{j}}^\ell\bigr)^2 \;\ge\; 0$, and therefore
\begin{align}
    \AlphaT =2\tau^2\bigl(1-R_{\overline{j}}^\ell\bigr)^2+\Bigl(1+\frac{1}{|C_{\overline{j}}|}\Bigr)\sigma^2 \;\ge\; \Bigl(1+\frac{1}{|C_{\overline{j}}|}\Bigr)\sigma^2=\AT.
\end{align}
Substituting $R_{\overline{j}}^\ell=1$ yields the stated expression for $\AT$, and the claimed distribution follows from $\|x_i-\widehat{\mu}_{\overline{j}}\|^2\sim \AlphaT\chi_d^2$ (Equation~\eqref{eq:dist:caset_full:dist}).
\end{proof}

\subsection{Proof of Theorem~\ref{thm:maindiff}}\label{sec:proof:thm:maindiff}

We restate Theorem~\ref{thm:maindiff} and provide a proof.

\thmmaindiff*

\begin{proof}

    Substituting $b_1=\AlphaT$ (Equation~\eqref{eq:dist:caset_full:dist}) and $b_2=\AlphaC$ (Equation~\eqref{eq:dist:casec_full:dist}) 
    into Lemma~\ref{lem:chibound}, would yield an expression that is somewhat more complicated and depends on additional parameters.
    We observe that the expression $\rho = 1-\left(\frac{b_1-b_2}{b_1+b_2}\right)^2$
    
    in Lemma~\ref{lem:chibound} increases when $b_1$ decreases and when $b_2$ increases since 
    \begin{equation}
    \label{eq: partialB1}
    \partial_{b_1} \left\{-\left(\frac{b_1-b_2}{b_1+b_2}\right)^2 \right\} = - 2 \frac{b_1-b_2}{b_1+b_2} \frac{2 b_2}{(b_1+b_2)^2} <0, 
    \end{equation}
    as well as 
    \begin{equation}
    \label{eq: partialB2}
    \partial_{b_2} \left\{ -\left(\frac{b_1-b_2}{b_1+b_2}\right)^2 \right\}=  4 \frac{(b_1-b_2)b_1}{(b_1+b_2)^3} >0.
    \end{equation}

    Owing to the monotonicity properties of Equations~\eqref{eq: partialB1} and \eqref{eq: partialB2}, we can replace $b_1$ and $b_2$ by their ``most favorable case'' values $\AT \leq \AlphaT$ and $\AC \geq \AlphaC$ (Equations~\eqref{eq:dist:caset:a} and \eqref{eq:dist:casec:a} in the auxiliary lemmas ), and obtain a bound that is {\em looser} than the one we would have obtained by substituting $b_1$ and $b_2$.
    In particular, plugging-in $b_1 = \AT$ and $b_2 = \AC$ into $\rho = 1-\left(\frac{b_1-b_2}{b_1+b_2}\right)^2$ gives
    \begin{equation}
    \mathbb{P} \big(  \| x_i - \widehat{\mu}_{\overline j} \|^2 < \| x_i - \widehat{\mu}_j \|^2 \big) \le  \rho^{d/4}, \\
    \end{equation}
    with
    \begin{align}
    \label{eq: UBSwitch}
   \rho = \frac{
    4 \sigma^4 ( 1 + \vert C_{\overline{j}}\vert^{-1} )(1-\vert C_{ j}\vert^{-1}) + 8 \tau^2 \sigma^2 (\vert C_{ j}\vert-1)^2 (1+\vert C_{\overline{j}}\vert^{-1}) \vert C_j\vert^{-2}
    }{
  \Big( 
  \sigma^2 ( 1 + \vert C_{\overline{j}}\vert^{-1} ) +  \sigma^2 (1-\vert C_{ j}\vert^{-1}) + 2 \tau^2  (\vert C_{ j}\vert-1)^2  \vert C_{ j}\vert^{-2}
  \Big)^2
    }.
    \end{align}
    which is exactly Equation~\eqref{eq:maindiff:rho}. 
    
    The requirement of Lemma~\ref{lem:chibound} that $b_1 > b_2 >0$ (with $m = 0$) is satisfied by the condition on the noise given in Equation~\eqref{eq:maindiff:req0}, as the latter is crafted for that purpose. Concretely, Equation~\eqref{eq:maindiff:req0} corresponds to the condition that $\AT > \AC$, and hence the requirement $b_1>b_2>0$ in Lemma~\ref{lem:chibound} holds.    
\end{proof}

\subsection{Proofs of Theorem~\ref{thm:maindiff:typical}}\label{sec:proof:thm:maindiff:typical}

We now restate Theorem~\ref{thm:maindiff:typical} and provide a proof.

\thmmaindifftypical*

\begin{proof}

    This theorem aims to reproduce a version of Theorem~\ref{thm:maindiff} that applies uniformly to all $q$-approximately balanced partitions. The idea is to derive the threshold noise in Equation~\eqref{eq:maindiff:typical:sigma} (with $\beta=1$) which is larger than the threshold in Equation~\eqref{eq:maindiff:req0} in Theorem~\ref{thm:maindiff} for cluster sizes in $q$-approximately balanced partitions, and similarly to derive the  $\rho_q$ in Equation (\ref{eq:maindiff:typical:rho}) which is larger than $\rho$ in Equation (\ref{eq:maindiff:rho}) in  Theorem~\ref{thm:maindiff} for cluster sizes in $q$-approximately balanced partitions. To do this, we will partially repeat the proof of Theorem~\ref{thm:maindiff} from its components with small modifications.

    We recall from the proof of Theorem~\ref{thm:maindiff} that $\rho$ is obtained by substituting $b_1=\AT$ (Equation~\eqref{eq:dist:caset:a}) and $b_2=\AC$ (Equation~\eqref{eq:dist:casec:a}) into Lemma \ref{lem:chibound}, yielding Equation (\ref{eq: UBSwitch}). We will repeat the derivation while generalizing it to $q$-approximately balanced partitions.

    Substituting the maximum values of $|C_{\overline{j}}| = n/2 + q \sqrt{n/4}$ and  $|C_{ j}| = n/2 + q \sqrt{n/4}$ into $b_1=\AT$ (Equation~\eqref{eq:dist:caset:a}) and $b_2=\AC$ (Equation~\eqref{eq:dist:casec:a}) and then into 
    Equation (\ref{eq:chibound}) of Lemma \ref{lem:chibound}
    yields Equation (\ref{eq:maindiff:typical:rho}); it remains to show that $\rho_q$ yields an upper bound on the value of $\rho$.
    To establish this, we will show that the expression in Equation~\eqref{eq:maindiff:rho} is monotone both $|C_j|$ and $|C_{\overline{j}}|$ in the range 
    $\frac{n}{2}-q\sqrt{\frac{n}{4}}\;<\;|C_k|\;<\;\frac{n}{2}+q\sqrt{\frac{n}{4}} \; $, for $k\in\{1,2\}$, even {\em without} requiring $n=|C_j|+|C_{\overline{j}}|$. This fact enables us to replace $|C_{\overline{j}}|$ and $|C_{ j}|$ by upper bounds.

    We then need to show that the choice of the noise in Equation~\eqref{eq:maindiff:typical:sigma}, which does not depend on any cluster size, is valid. In particular, as the noise plays a role in establishing the monotonicity of $\rho$, it is important to show that substituting the maximum values of $|C_{\overline{j}}| = n/2 + q \sqrt{n/4}$ and  $|C_{ j}| = n/2 + q \sqrt{n/4}$ into $b_1=\AT$ (Equation~\eqref{eq:dist:caset:a}) and $b_2=\AC$ (Equation~\eqref{eq:dist:casec:a}) leaves the monotonicity statement unscathed.

     To establish this, we will show that the cluster size-dependent lower bound for $\sigma$ in Equation~\eqref{eq:maindiff:req0} is monotone both in $|C_j|$ and $|C_{\overline{j}}|$ in the relevant domain. It can thus be maximized and compared with the value
    \begin{equation} \beta \frac{  \left(\sqrt{n}\, q+n-2\right)}{\sqrt{2}        \sqrt{\sqrt{n}\, q+n}}
    \end{equation}
    which we fixed in the statement of 
    Theorem~\ref{thm:maindiff:typical}.

   \underline{Monotonicity of $\rho$}.
   We will compute the derivative of $\rho$ with respect to $|C_{\overline{j}}|$ and $|C_{ j}|$. To simplify our approach, we rely on the chain rule, building up on the fact that $\rho$ comes from plugging in $b_1=\AT$ (Equation~\eqref{eq:dist:caset:a}) and $b_2=\AC$ (Equation~\eqref{eq:dist:casec:a}) into Lemma \ref{lem:chibound}. This enables us to keep shorter, more transparent expressions. 
   Recall from the proof of Theorem~\ref{thm:maindiff} that, for $i \in \{1,2\}$,
    \begin{equation}
    \label{eq: derivativeChainRule}
    \partial_{b_i} \rho(b_1, b_2)=\partial_{b_i}\left( 1- \frac{(b_1-b_2)^2}{(b_1 +b_2)^2}\right) = \frac{4 (b_{\bar i} - b_i) b_{\bar i} }{(b_1+b_2)^3}, 
    \end{equation}
    with $\bar i=2 $ if $i=1$ and vice versa.
    Further,  using the exact form of $\AT$ and $\AC$,
    \begin{equation}
    \partial_{\vert C_{\bar j}\vert} \AT(\vert C_{\bar j}\vert) =\partial_{\vert C_{\bar j}\vert} \ ( 1+ \vert C_{\bar j}\vert^{-1}) \sigma^2 = \tfrac{-\sigma^2}{\vert C_{\bar j}\vert^{-2}}
     \end{equation}
     and
    \begin{equation}
    \partial_{\vert C_{\bar j}\vert} \AC(\vert C_{\bar j}\vert)= \partial_{\vert C_{ j}\vert} \ \left\{\sigma^2 (1-\vert C_{ j}\vert^{-1}) + 2 \tau^2  (\vert C_{ j}\vert-1)^2  \vert C_{ j}\vert^{-2} \right\} =  \vert C_{ j}\vert^{-2}( \sigma^2 +4 \tau^2 - 4 \tau^2 \vert C_{ j}\vert^{-1}).
    \end{equation}
    These are all the parts needed for the chain rule as 
    \begin{equation}
    \partial_{\vert C_{\bar j}\vert} \rho \ =\ \partial_{\AT} \rho(\AC, \AT)\ \partial_{\vert C_{\bar j}\vert} \AT(\vert C_{\bar j}\vert).
    \end{equation}

    As $\partial_{\vert C_{\bar j}\vert} \AT(\vert C_{\bar j}\vert) = -\sigma^2\vert C_{\bar j}\vert^{2} $ is negative,  to prove the fact that $\rho$ increases as a function of $\vert C_{\bar j}\vert$, we need that $\partial_{\AT} \rho(\AC, \AT) $ is negative as well. 
   From  \eqref{eq: derivativeChainRule}  this translates into the condition
    \begin{equation}
    \sigma^2 (1-\vert C_{ j}\vert^{-1}) + 2 \tau^2 (\vert C_{ j}\vert-1)^2  \vert C_{ j}\vert^{-2} < \sigma^2 ( 1 + \vert C_{\overline{j}}\vert^{-1} ).
    \end{equation}
    A simple reorganization yields,
    \begin{equation}
    \label{eq:BoundSigma}
      \frac{2\tau^2  (\vert C_{ j}\vert-1)^2  \vert C_{ j}\vert^{-2}}{ \vert C_{\overline{j}}\vert^{-1} +\vert C_{ j}\vert^{-1} } < \sigma^2 .
    \end{equation}
    Note that this was the previously stated condition on the variance. However, in the context of the present theorem, we aim at having this bound hold for any $q$-approximately balanced partition. Therefore, we want to find the worst case scenario in terms of $\vert C_j \vert $ and $\vert C_{\bar j} \vert$ and show that the choice of  $\sigma$ in the statement of the theorem is sufficient.

   Let us now prove the fact that $\rho$ increases as a function of $\vert C_{ j}\vert$.  The chain rule in that case is 
    \begin{equation}
    \partial_{\vert C_{ j}\vert} \rho \ =\ \partial_{\AC} \rho(\AC, \AT)\ \partial_{\vert C_{ j}\vert} \AC(\vert C_{ j}\vert).
    \end{equation}
   
   We note that $\partial_{\vert C_{ j}\vert} \AC(\vert C_{ j}\vert) = \vert C_{ j}\vert^{-2}( \sigma^2 +4 \tau^2 - 4 \tau^2 \vert C_{ j}\vert^{-1})$ is positive, as $1- |C_j|^{-1} >0$. As the difference appearing in the numerator of Equation~\eqref{eq: derivativeChainRule} for the case $\partial_{\AC} \rho(\AC, \AT) $ is the opposite of that of $\partial_{\AT} \rho(\AC, \AT) $, one has $\partial_{\AC} \rho(\AC, \AT)\ >0$
   thanks to the condition in Equation~\eqref{eq:BoundSigma} again. This yields the sought monotonicity.
   
 \underline{Monotonicity of $\sigma$'s lower bound}.
   As it is clear from  \eqref{eq:BoundSigma} that the lower bound on $\sigma$ increases as a function of $|C_{\bar j}|$, we turn to the other argument.  Compute 
   \begin{align*}
   \partial_{|C_{ j}|}\  \frac{  (\vert C_{ j}\vert-1)^2  \vert C_{ j}\vert^{-2}}{ \vert C_{\overline{j}}\vert^{-1} +\vert C_{ j}\vert^{-1} }  
   &=\frac{ \big( 2(\vert C_{ j}\vert-1)  \vert C_{ j}\vert^{-2} -2(\vert C_{ j}\vert-1)^2  \vert C_{ j}\vert^{-3}\big) \big( \vert C_{\overline{j}}\vert^{-1} +\vert C_{ j}\vert^{-1}\big) + (\vert C_{ j}\vert-1)^2  \vert C_{ j}\vert^{-4}}{ (\vert C_{\overline{j}}\vert^{-1} +\vert C_{ j}\vert^{-1})^2 } 
   \end{align*}
   and observe that
   \begin{align*}
       &\big( 2(\vert C_{ j}\vert-1)  \vert C_{ j}\vert^{-2} -2(\vert C_{ j}\vert-1)^2  \vert C_{ j}\vert^{-3}\big) \big( \vert C_{\overline{j}}\vert^{-1} +\vert C_{ j}\vert^{-1}\big) + (\vert C_{ j}\vert-1)^2  \vert C_{ j}\vert^{-4} \\
       &= 
       (\vert C_{ j}\vert-1)  \vert C_{ j}\vert^{-2} \Big( 2\vert C_{\overline{j}}\vert^{-1} +2\vert C_{ j}\vert^{-1}  - (\vert C_{ j}\vert-1) \vert C_{ j}\vert^{-1}\big( 2\vert C_{\overline{j}}\vert^{-1} + 2\vert C_{ j}\vert^{-1} - \vert C_{ j}\vert^{-1} \big)\Big)\\
       &= 
       (\vert C_{ j}\vert-1)  \vert C_{ j}\vert^{-2} \Big( \vert C_{ j}\vert^{-1}  + (\vert C_{ j}\vert-1) \vert C_{ j}\vert^{-1}\big( 2\vert C_{\overline{j}}\vert^{-1} + \vert C_{ j}\vert^{-1}  \big)\Big) >0.
   \end{align*}

    We thus have shown that, under the assumption of approximately balanced partitions, recall Definition~\ref{def:approx_balanced_partitions}, substituting the maximum values of $|C_{\overline{j}}| = n/2 + q \sqrt{n/4}$ and  $|C_{ j}| = n/2 + q \sqrt{n/4}$ into Equations~\eqref{eq:maindiff:req0} and \eqref{eq:maindiff:rho} yield an upper bound and further yields the expressions in the current theorem.

    We note that slightly better expressions can be obtained by maximizing $\rho$ and $\sigma$ within the range of $|C_{\overline{j}}|$ and $|C_{ j}|$, and not maximizing the values of $|C_{\overline{j}}|$ and $|C_{ j}|$ separately. 
\end{proof}

\begin{remark}
\label{rmk:qChanging}
We emphasize that our statements exclude \emph{exceptionally unbalanced} partitions in a precise sense: we require each
cluster to contain at least two samples. Therefore, the imbalance tolerance
parameter $q$ could grow with $n$. In that case, Fact~\ref{fact:hoeffding:binomial}   yields 
\begin{equation}
     \mathbb{P} (|S- n/2 | \geq q \sqrt{n}/2 ) \leq 2 \exp\left( -\frac{q^2}{2} \right)
\end{equation}
 would ensure that one covers (as $n \to \infty$) a fraction of the partitions converging to 1.

To ensure that we still do not take $q$ too large (to enforce the requirement that the partitions should contain at least 2 samples), taking $q \le  0.6\sqrt{n}$ would suffice (asymptotically as $n \to \infty$). Indeed, 
the order of the number of partitions of size at most 2 is $\mathrm{O}(n^2)$; their proportion is thus around $\mathrm{O}(n^2 2^{-n})$. 
One can then use an anti-concentration result like that of  \citet{mousavi2010tight} claiming that
\begin{equation}
\mathbb{P}\left( (S_n - n/2)>t \right)\geq \frac14 \exp(- 2t^2/n).
\end{equation}
It remains to plug in our choice  $q \le  0.6\sqrt{n}$ in the formula above to see that the problematic partitions are correctly avoided (asymptotically). Further remark that $S_n - n/2$ is symmetric around zero.
\end{remark}

\subsection{Proof of Corollary~\ref{cor:maindiff:typical:all0}}
\label{sec:proof:cor:maindiff:typical:all0}

We restate Corollary~\ref{cor:maindiff:typical:all0} and provide a proof. 

\cormaindifftypicalallzero*

\begin{proof}
    First, we consider a single partition $\mathcal{P}$ that satisfies the conditions of the corollary.
    By the union bound (Equation~\eqref{eq:union}), we have
    \begin{equation}
        \mathbb{P} \left ( \exists i \in [n]: \big\| x_i - \widehat{\mu}_{\overline{j}} \big\|^2 < \big\| x_i - \widehat{\mu}_{j} \big\|^2 | \mathcal{P} \right) \leq
        \sum_{i=1}^n \mathbb{P} \left ( \big\| x_i - \widehat{\mu}_{\overline{j}} \big\|^2 < \big\| x_i - \widehat{\mu}_{j} \big\|^2 | \mathcal{P} \right).
    \end{equation}
    Substituting Equation~\eqref{eq:maindiff:typical:pr}, we obtain
    \begin{equation}
        \mathbb{P} \left ( \exists i \in [n]: \big\| x_i - \widehat{\mu}_{\overline{j}} \big\|^2 < \big\| x_i - \widehat{\mu}_{j} \big\|^2\ \big| \mathcal{P} \right)\leq n \rho_q^{d/4} ~.
    \end{equation}

    Next, we consider all partitions that satisfy the conditions of the corollary.
    Using the union bound again, we have
    \begin{equation}
        \mathbb{P} \left ( \exists i \in [n], \mathcal{P}: \big\| x_i - \widehat{\mu}_{\bar i} \big\|^2 < \big\| x_i - \widehat{\mu}_{j} \big\|^2  \right) \leq
        \sum_{\mathcal{P}} \mathbb{P} \left ( \exists i \in [n]: \big\| x_i - \widehat{\mu}_{\overline{j}} \big\|^2 < \big\| x_i - \widehat{\mu}_{j} \big\|^2\ \big| \mathcal{P} \right) \leq \sum_{\mathcal{P}} n \rho_q^{d/4} ~.
    \end{equation}

    Recalling Fact~\ref{fact:binomial}, there are at most $2^n$ possible partitions of the samples and therefore at most $2^n$ partitions that satisfy the requirements of the theorem, which yields the desired result.
\end{proof}

\section{Hartigan's Algorithm Proofs}
\label{sec:HartiganProofs}
This section contains the proofs of the theorems, lemmas, and corollaries presented in the main text related to Hartigan's algorithm, along with some additional auxiliary results and remarks.

\subsection{Auxiliary Lemmas: The Distribution of Hartigan Weighted Distances}

Applying the Hartigan weighted distance in Equation~\eqref{eq:hartigan_distance_inline} to the distances in Lemma~\ref{lem:dist:casec_full:dist} and Lemma~\ref{lem:dist:caset_full:dist} immediately yields the following lemmas.

\begin{corollary}
[Hartigan Weigted Distance to a Different Cluster]
\label{cor:distance_diffcluster}
In the setting of Lemma~\ref{lem:dist:caset_full:dist},
consider a sample $x_i \in S^\star_\ell$ from ground-truth class~$\ell$, and let $C_{\overline{j}}$ be the cluster 
to which $x_i$ is \emph{not} currently assigned ($i \notin C_{\overline{j}}$).
Then, the Hartigan weighted distance defined in Equation~\eqref{eq:hartigan_distance_inline} is distributed as
\begin{equation}\label{eq:dist:diff_cluster}
    \Delta^2_H(x_i, C_{\overline{j}})
    = \frac{\|x_i - \hat\mu_{\overline{j}}\|^2}{1 + 1/|C_{\overline{j}}|}
    \sim \eta_{C_{\overline{j}}}\,\chi^2_d ,
\end{equation}
with the scale parameter is
\begin{equation}\label{eq:etaC2}
    \eta_{C_{\overline{j}}}
    = 2\tau^2 \frac{|C_{\overline{j}}|}{|C_{\overline{j}}| + 1}(1 - R^{\ell} _{\overline{j}})^2 + \sigma^2.
\end{equation}
The randomization here is over the noise and ground-truth class centers, as specified in Model~\ref{model:gmm2}(a)--(b).
\end{corollary}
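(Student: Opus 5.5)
The plan is to reduce the statement directly to Lemma~\ref{lem:dist:caset_full:dist}, since $i\notin C_{\overline{j}}$ places us exactly in its setting. First, by the definition of the Hartigan weighted distance in Equation~\eqref{eq:hartigan_distance_inline}, the case $i\notin C_{\overline{j}}$ gives
\begin{equation*}
\Delta^2_H(x_i, C_{\overline{j}}) = \frac{|C_{\overline{j}}|}{|C_{\overline{j}}|+1}\,\|x_i-\widehat{\mu}_{\overline{j}}\|^2 = \frac{\|x_i-\widehat{\mu}_{\overline{j}}\|^2}{1+1/|C_{\overline{j}}|},
\end{equation*}
which is the first equality claimed in Equation~\eqref{eq:dist:diff_cluster}. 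The weight is deterministic, because cluster sizes are fixed once the partition and ground-truth labels are treated as fixed, and the only randomness comes from Model~\ref{model:gmm2}(a)--(b).

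Next, I invoke Lemma~\ref{lem:dist:caset_full:dist}, which gives $\|x_i-\widehat{\mu}_{\overline{j}}\|^2\sim \AlphaT\chi^2_d$ with $\AlphaT = 2\tau^2(1-R^\ell_{\overline{j}})^2 + (1+1/|C_{\overline{j}}|)\sigma^2$. A deterministic positive multiple of a scaled $\chi^2_d$ variable is again a scaled $\chi^2_d$ variable: if $Y\sim a\chi^2_d$ and $c>0$, then $cY\sim (ca)\chi^2_d$. Hence $\Delta_H^2(x_i,C_{\overline{j}})\sim \eta\,\chi^2_d$ with $\eta = \AlphaT/(1+1/|C_{\overline{j}}|)$.

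Finally, I simplify $\eta$. Dividing the $\tau^2$ term by $1+1/|C_{\overline{j}}|$ gives the factor $|C_{\overline{j}}|/(|C_{\overline{j}}|+1)$, and the $\sigma^2$ term reduces to exactly $\sigma^2$. This yields Equation~\eqref{eq:etaC2}.

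There is no real obstacle here. The only point needing care is that the weight is nonrandom, which the fixed-partition convention stated after Lemma~\ref{lem:dist:caset_full:dist} guarantees.
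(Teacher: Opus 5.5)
Your proposal is correct and matches the paper's argument: the paper gets this corollary immediately by applying the Hartigan weight $|C_{\overline{j}}|/(|C_{\overline{j}}|+1)$ to the distance law of Lemma~\ref{lem:dist:caset_full:dist} and simplifying the scale, exactly as you do. You also correctly note that the weight is nonrandom under the fixed-partition convention, which is the only point needing care.
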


\begin{corollary}
[Hartigan Weigted Distance to the Current Cluster]\label{cor:distance_currentcluster}
In the setting of Lemma~\ref{lem:dist:casec_full:dist}, 
consider the weighted distance between the sample $x_i~\in~S^\star_\ell$
and the centroid of the cluster $C_j$ to which it is \emph{currently assigned} ($i \in C_j$).
Under the Hartigan weighted distance defined in Equation~\eqref{eq:hartigan_distance_inline} is distributed as
\begin{equation}\label{eq:dist:current_cluster}
    \Delta_H^2(x_i, C_j)
    = \frac{|C_j|}{|C_j|-1}\, \|x_i - \widehat{\mu}_j\|^2
    \sim \eta_{C_j}\,\chi^2_d,
\end{equation}
with the scale parameter
\begin{equation}\label{eq:etaC1}
    \eta_{C_j}
    = 2\tau^2 \frac{|C_j|}{|C_j|-1}(1 - R^{\ell} _j)^2 + \sigma^2.
\end{equation}
The randomization here is again over the noise and ground-truth class centers,
as specified in Model~\ref{model:gmm2}(a)--(b).
\end{corollary}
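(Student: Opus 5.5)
The statement just made is Corollary~\ref{cor:distance_currentcluster}: the Hartigan weighted distance of $x_i$ to its current cluster $C_j$ is distributed as $\eta_{C_j}\chi^2_d$. The plan is a direct rescaling of Lemma~\ref{lem:dist:casec_full:dist}, with the distribution of the scaled variable obtained by multiplying the scale parameter.

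First, since $i\in C_j$, Equation~\eqref{eq:hartigan_distance_inline} gives the deterministic identity $\Delta_H^2(x_i,C_j)=\frac{|C_j|}{|C_j|-1}\|x_i-\widehat{\mu}_j\|^2$. We need $|C_j|\ge 2$ for this to be defined. This is exactly the regime in which Algorithm~\ref{alg:Hartigan} evaluates the quantity, so we assume it throughout.

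Second, the partition and the labels are treated as fixed, so $c:=|C_j|$ is a constant. Lemma~\ref{lem:dist:casec_full:dist} gives $\|x_i-\widehat{\mu}_j\|^2\sim \AlphaC\chi^2_d$ with $\AlphaC=2\tau^2(1-R_j^\ell)^2+(1-1/c)\sigma^2$. Any constant multiple $a\cdot(\AlphaC\chi^2_d)$ is distributed as $(a\AlphaC)\chi^2_d$. This also follows from Fact~\ref{fact:cochran}: in the proof of Lemma~\ref{lem:dist:casec_full:dist}, $x_i-\widehat{\mu}_j$ is isotropic Gaussian, so scaling it by $\sqrt{a}$ keeps it isotropic Gaussian.

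Third, we compute the new scale with $a=\frac{c}{c-1}$:
\[
\tfrac{c}{c-1}\Bigl(1-\tfrac1c\Bigr)\sigma^2=\sigma^2, \qquad \tfrac{c}{c-1}\cdot 2\tau^2(1-R_j^\ell)^2 = 2\tau^2\tfrac{c}{c-1}(1-R_j^\ell)^2 .
\]
Their sum is $\eta_{C_j}$ as in Equation~\eqref{eq:etaC1}.

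There is no real obstacle. The only point needing care is the size condition $|C_j|>1$, so that the weight is finite and positive. The randomness is only over Model~\ref{model:gmm2}(a)--(b), exactly as in the lemma being rescaled.
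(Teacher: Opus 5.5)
Your proposal is correct and is the paper's argument: the paper also obtains this corollary by multiplying the scale $2\tau^2(1-R_j^\ell)^2+(1-1/|C_j|)\sigma^2$ from Lemma~\ref{lem:dist:casec_full:dist} by the deterministic weight $|C_j|/(|C_j|-1)$. Your explicit note that $|C_j|\ge 2$ is needed for the weight to be defined is a small addition that the paper leaves implicit.
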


\begin{observation}
    Based on Corollary~\ref{cor:distance_currentcluster} and Corollary~\ref{cor:distance_diffcluster}, the expected values of the Hartigan weighted squared distances are 
    \begin{equation}
        \mathbb{E} \Delta_H^2(x_i, C_j) = 
        \begin{cases}
        d \left( 2\tau^2 \frac{|C_j|}{|C_j|-1}(1 - R^\ell_j)^2 + \sigma^2 \right) &  \text{if } i \in C_j \\
        d \left( 2\tau^2 \frac{|C_{j}|}{|C_{j}| + 1}(1 - R^\ell_{j})^2 + \sigma^2 \right) & \text{otherwise } ~.
        \end{cases}
    \end{equation}
    Interestingly, the expected difference between the weighted squared distance to the current cluster and the weighted squared distance to another cluster does not depend on the noise level $\sigma$. Indeed, 
    \begin{equation}\label{eq:hartigan_diff_nosigma}
    \begin{aligned}
    & \mathbb{E} \left(\Delta_H^2(x_i, C_j) - \Delta_H^2(x_i, C_{\overline{j}})\right) = 
    \mathbb{E} \left(\Delta_H^2(x_i, C_j)\right) - \mathbb{E} \left(\Delta_H^2(x_i, C_{\overline{j}})\right) \\
    & = 
       2\tau^2 d \left(  \frac{|C_j|}{|C_j|-1}(1 - R^\ell_j)^2 -
          \frac{|C_{\overline{j}}|}{|C_{\overline{j}}| + 1}(1 - R^\ell_{\overline{j}})^2  \right) ~.
    \end{aligned}
    \end{equation}
    This property does not hold for the unweighted distances or other weights that might appear natural.
    Recalling that $\chi^2_d$ becomes concentrated around the mean as $d$ grows, a closer look at Equation~\eqref{eq:hartigan_diff_nosigma} suggests that a sample would tend to be reassigned by Hartigan's algorithm from the current cluster to the other cluster if the other cluster has a higher ratio of samples from the sample's ground-truth class. This observation is made more rigorous in the subsequent steps of the proof.
\end{observation}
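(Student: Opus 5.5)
The plan is to read off the two expectations from the distributional statements in Corollary~\ref{cor:distance_currentcluster} and Corollary~\ref{cor:distance_diffcluster}, and then subtract them using linearity of expectation. No independence between the two distances is needed, since linearity of expectation holds for dependent variables. This matters because both distances involve the same $x_i$ and the same ground-truth centers.

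First, for the case $i\in C_j$, Corollary~\ref{cor:distance_currentcluster} gives $\Delta_H^2(x_i,C_j)\sim\eta_{C_j}\chi^2_d$. By the moment-generating-function fact in Appendix~\ref{sec:standard_results}, $\mathbb{E}(a\chi^2_d)=da$, so $\mathbb{E}\,\Delta_H^2(x_i,C_j)=d\,\eta_{C_j}$, which is the first line of the displayed case formula. For the case $i\notin C_j$, I apply Corollary~\ref{cor:distance_diffcluster} with the roles of the clusters relabeled: the cluster not containing $i$ is called $C_j$ instead of $C_{\overline{j}}$. This gives scale $2\tau^2\frac{|C_j|}{|C_j|+1}(1-R^\ell_j)^2+\sigma^2$, and multiplying by $d$ gives the second line.

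Next, with $i\in C_j$, I apply the first case to $C_j$ and the second case to $C_{\overline{j}}$, and subtract. Both scale parameters $\eta_{C_j}$ and $\eta_{C_{\overline{j}}}$ contain the additive term $\sigma^2$ with coefficient exactly $1$. This is because the Hartigan weights $\frac{|C_j|}{|C_j|-1}$ and $\frac{|C_{\overline{j}}|}{|C_{\overline{j}}|+1}$ exactly cancel the noise factors $(1-1/|C_j|)$ and $(1+1/|C_{\overline{j}}|)$ from Lemmas~\ref{lem:dist:casec_full:dist} and~\ref{lem:dist:caset_full:dist}. Hence the $d\sigma^2$ terms cancel, and what remains is Equation~\eqref{eq:hartigan_diff_nosigma}.

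Finally, to support the remark that this cancellation fails for unweighted distances, I would compute the corresponding difference from Lemmas~\ref{lem:dist:casec_full:dist} and~\ref{lem:dist:caset_full:dist}:
\[
d(\AlphaC-\AlphaT)=2\tau^2d\bigl[(1-R^\ell_j)^2-(1-R^\ell_{\overline{j}})^2\bigr]-d\sigma^2\bigl(1/|C_j|+1/|C_{\overline{j}}|\bigr),
\]
which retains a $\sigma$-dependent term. Nothing here is genuinely difficult. The only step requiring care is the relabeling in the "otherwise" case, which must keep the correct $\pm1$ in the weight for each membership status.
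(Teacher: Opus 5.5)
Your proposal is correct and matches the paper's reasoning: you read the scales $\eta_{C_j}$ and $\eta_{C_{\overline{j}}}$ off Corollaries~\ref{cor:distance_currentcluster} and~\ref{cor:distance_diffcluster}, apply $\mathbb{E}(a\chi^2_d)=da$, and subtract by linearity of expectation, which needs no independence; the $\sigma^2$ terms cancel because the Hartigan weights exactly undo the factors $(1\mp 1/|C|)$. Your explicit computation of $d(\AlphaC-\AlphaT)$, which leaves the residual $-d\sigma^2(1/|C_j|+1/|C_{\overline{j}}|)$, is a useful addition: it substantiates the paper's remark about unweighted distances, which the paper states without calculation.
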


\subsection{Proof of Theorem~\ref{thm:single_parition:single_sample_fixed}}
\label{sec:proof:thm:single_parition:single_sample_fixed}

We restate Theorem~\ref{thm:single_parition:single_sample_fixed} and provide a proof.

\thmsingleparitionsinglesamplefixed*

\begin{proof}
By Corollary~\ref{cor:distance_currentcluster} and Corollary~\ref{cor:distance_diffcluster},
\begin{equation}
\Delta_H^2(x_i,C_j)\ \sim\ b_1\,\chi^2_d,
\qquad
\Delta_H^2(x_i,C_{\overline{j}})\ \sim\ b_2\,\chi^2_d,
\end{equation}
with
\begin{equation}\label{eq:proof:single_parition:single_sample_fixed:b1b2}
b_1 \;=\; 2\tau^2\frac{|C_j|}{|C_j|-1}(1-R^\ell_j)^2 + \sigma^2,
\qquad
b_2 \;=\; 2\tau^2\frac{|C_{\overline{j}}|}{|C_{\overline{j}}|+1}(1-R^\ell_{\overline{j}})^2 + \sigma^2,
\end{equation}
Since $\frac{|C_j|}{|C_j|-1} >1$ and $\frac{|C_{\overline{j}}|}{|C_{\overline{j}}|+1} <1$, and in light of the assumption in Equation~\eqref{eq:single_parition:single_sample_fixed:1}, we can conclude that $b_1-b_2 > 0$.
Indeed, 
\begin{align*}
b_1 - b_2 &= 2\tau^2\frac{|C_j|}{|C_j|-1}(1-R^\ell_j)^2  - 2\tau^2\frac{|C_{\overline{j}}|}{|C_{\overline{j}}|+1}(1-R^\ell_{\overline{j}})^2 \\
&>2\tau^2\Big[ (1-R^\ell_j)^2  - (1-R^\ell_{\overline{j}})^2\Big]\\
&= 2\tau^2\Big[ \big(2-R^\ell_j- R^\ell_{\overline{j}} \big)   \big(R^\ell_{\overline{j}}- R^\ell_j\big)\Big] \geq 0.
\end{align*}

Therefore, the distributions satisfy the requirements of Lemma~\ref{lem:chibound}.
Subsituting~\eqref{eq:proof:single_parition:single_sample_fixed:b1b2} into Lemma~\ref{lem:chibound} yields
\begin{equation}
    \mathbb{P} \left ( \Delta_H^2(x_i,C_j) - \Delta_H^2(x_i,C_{\overline{j}}) < 0 \right) \leq \rho^{\,d/4}
\end{equation}
where
\begin{equation}
    \rho = 1-\left(\frac{b_1-b_2}{b_1+b_2}\right)^{\!2}\
= 1- \left(\frac{\tau^2\!\left[
\frac{|C_j|}{|C_j|-1}(1-R^\ell_j)^2 - \frac{|C_{\overline{j}}|}{|C_{\overline{j}}|+1}(1-R^\ell_{\overline{j}})^2
\right]}{\tau^2[\frac{|C_j|}{|C_j|-1}(1-R^\ell_j)^2 +\frac{|C_{\overline{j}}|}{|C_{\overline{j}}|+1}(1-R^\ell_{\overline{j}})^2] + \sigma^2}\right)^2
\end{equation}
and $\rho <1$.
\end{proof}

A direct consequence of Theorem~\ref{thm:single_parition:single_sample_fixed} is the following corollary.
\corsingleparitionnotfixed*

\subsection{Proof of Corollary~\ref{cor:single_parition:not_fixed:bound}}
\label{sec:proof:cor:single_parition:not_fixed:bound}

We restate Corollary~\ref{cor:single_parition:not_fixed:bound} and provide a proof. 

\corsingleparitionnotfixedbound*

\begin{proof}

We recall the proportions notation from Definition \ref{def:proportions_purity_correct},
\begin{equation}
R^\ell_j=\frac{|S_\ell^\star \cap C_j|}{|C_j|},\quad 
R^\ell=\frac{|S_\ell^\star|}{n} ~.
\end{equation}

\paragraph{Step 1: Choice of $C_j$ and $S_\ell^\star$.}
Since $\mathcal{P}$ is not equal to the ground-truth partition, at least one cluster is not pure. Hence, there exists $j\in\{1,2\}$ such that $0<R_j^1<1$ and $0<R_j^2<1$. Fix such an index $j$.
Moreover, because $R_j^1+R_j^2=1$ and $R^1+R^2=1$, it cannot be that $R_j^1>R^1$ and $R_j^2>R^2$ simultaneously.
Therefore, for this (mixed) cluster $C_j$ there exists an index $\ell\in\{1,2\}$ such that
\begin{equation}\label{eq:choose_ell_Rj_le_R}
    R_j^\ell \le R^\ell.
\end{equation}
We henceforth fix such an $\ell$ and take a sample $i\in C_j\cap S_\ell^\star$ accordingly, with $C_j\cap S_\ell^\star\neq\emptyset$ (which holds since $C_j$ contains points from both classes).

\paragraph{Step 2: Expression of $R_{\overline{j}}^{\ell}$.}

For convenience, set $n_j:=|C_j|$ and $n_{\overline{j}}:=|C_{\overline{j}}|$, so that $n=n_j+n_{\overline{j}}$. 

By Definition~\ref{def:proportions_purity_correct}, we have $|S_\ell^\star|=R^\ell n$ and $|C_j\cap S_\ell^\star|=R_j^\ell n_j$. Since
\begin{align}
    |C_{\overline{j}}\cap S_\ell^\star| = |S_\ell^\star|-|C_j\cap S_\ell^\star| = R^\ell n - R_j^\ell n_j,
\end{align}
it follows that
\begin{equation}
    \label{eq:Rbarj_identity}
    R^{\ell}_{\overline{j}} = \frac{|C_{\overline{j}}\cap S_\ell^\star|}{n_{\overline{j}}} =\frac{R^\ell n - R_j^\ell n_j}{n-n_j}.
\end{equation}
We note that by Equation~\eqref{eq:Rbarj_identity}, the choice of Equation~\eqref{eq:choose_ell_Rj_le_R} implies $R_{\overline{j}}^\ell \ge R^\ell$ (since $n-n_j>0$). This pair of inequalities, 
\begin{equation} \label{eqn:pairInequalities}
    R_j^\ell \le R^\ell \le R_{\overline{j}}^\ell,
\end{equation}
will be used below.

\paragraph{Step 3: Derivation of $\rho$.}
The sample $i$ satisfies the conditions of Theorem~\ref{thm:single_parition:single_sample_fixed} and Corollary~\ref{cor:single_parition:not_fixed}, and in particular satisfies Equation~\eqref{eq:single_parition:single_sample_fixed:1}  due to Equation~\eqref{eqn:pairInequalities}. Therefore, the probability that the partition is a fixed point also satisfies Equation~\eqref{eq:single_parition:not_fixed:bound} with 
\begin{equation}\label{eq:single_parition:not_fixed:bound:pf:rho}
\begin{aligned}
    \rho &= 1- \left(\frac{\tau^2 \left(\frac{|C_j|}{|C_j|-1}(1-R^{\ell} _j)^2 - \frac{|C_{\overline{j}}|}{|C_{\overline{j}}|+1}(1-R^{\ell} _{\overline{j}})^2 \right) }{ \tau^2 \left(\frac{|C_j|}{|C_j|-1}(1-R^{\ell} _j)^2 + \frac{|C_{\overline{j}}|}{|C_{\overline{j}}|+1}(1-R^\ell_{\overline{j}})^2 \right) + \sigma^2}\right)^2 ~\\
    &=1- \left(\frac{\tau^2 \left(\frac{n_j}{n_j-1}(1-R^\ell_j)^2 - \frac{n_{\overline{j}}}{n_{\overline{j}}+1}(1-\frac{R^{\ell} n-R^\ell_j n_j}{\,n-n_j\,})^2 \right) }{ \tau^2 \left(\frac{n_j}{n_j-1}(1-R^\ell_j)^2 + \frac{n_{\overline{j}}}{n_{\overline{j}}+1}(1-\frac{R^{\ell} n-R^\ell_j n_j}{n-n_j})^2 \right) + \sigma^2}\right)^2 ~\\
    &= 1- \left(\frac{\tau^2\!\left[
      \Big(\tfrac{n_j}{n_j-1}\Big)(1-R^\ell_j)^2
      - \tfrac{n-n_j}{n-n_j+1}\Big(1-\tfrac{R^{\ell} n - R^\ell_j n_j}{\,n-n_j\,}\Big)^{\!2}
    \right]}{\tau^2\!\left[
          \tfrac{n_j}{n_j-1}(1-R^\ell_j)^2
          + \tfrac{n-n_j}{\,n-n_j+1\,}\Big(1-\tfrac{R^{\ell} n - R^\ell_j n_j}{\,n-n_j\,}\Big)^{\!2}
        \right] + \sigma^2}\right)^2.
\end{aligned}
\end{equation}

\paragraph{Step 4: Upper bound for $\rho$.}
First, we observe that the denominator in Equation (\ref{eq:single_parition:not_fixed:bound:pf:rho}) satisfies
\begin{equation}
    \tau^2 \left(\frac{|C_j|}{|C_j|-1}(1-R^\ell_j)^2 + \frac{|C_{\overline{j}}|}{|C_{\overline{j}}|+1}(1-R^\ell_{\overline{j}})^2 \right) + \sigma^2 
    \leq 3 \tau^2 + \sigma^2 ~.
\end{equation}
Next, we consider the numerator in \eqref{eq:single_parition:not_fixed:bound:pf:rho}.

Then, together with the fact that $R_j^\ell \le R^\ell \le R_{\overline{j}}^\ell$ in Equation~\eqref{eqn:pairInequalities}, we obtain a lower bound for the numerator
\begin{equation}
\begin{aligned}
    \tau^2 \left(\frac{|C_j|}{|C_j|-1}(1-R^\ell_j)^2 - \frac{|C_{\overline{j}}|}{|C_{\overline{j}}|+1}(1-R^\ell_{\overline{j}})^2 \right) &\geq \tau^2 \left(\frac{|C_j|}{|C_j|-1}(1-R^{\ell})^2 - \frac{|C_{\overline{j}}|}{|C_{\overline{j}}|+1}(1-R^{\ell})^2 \right)\\
    &= \tau^2 \left(\frac{n_j}{n_j-1}(1-R^{\ell})^2 - \frac{n-n_j}{n-n_j+1}(1-R^{\ell})^2 \right)\\
    &= \tau^2(1-R^{\ell})^2\,\dfrac{n}{(n_j-1)(n-n_j+1)}\\
    &\geq \tau^2(1-R^{\ell})^2\,\frac{4n}{n^2}.
\end{aligned}
\end{equation}

To obtain the last inequality in the display above, we have used the bound
\begin{equation}\label{eq:prod_bound}
    (n_j-1)(n-n_j+1) \le n^2/4.
\end{equation}
 Indeed, For any non-trivial partition $1\le n_j\le n-1$, we set $a:=n_j-1\in[0,n-2]$ and note that $(n_j-1)(n-n_j+1)=a(n-a)$.
Since $f(a):=a(n-a)$ is concave, its maximum over $[0,n-2]$ is attained at $a=n/2$ if $n/2\le n-2$ or at the endpoint $a=n-2$. Owing to our assumption (recall the statement of the corollary) that $n\geq 4$, the maximum is thus attained at $n/2$ so that   $(n_j-1)(n-n_j+1) \le n/2 \ (n - n/2).$

\paragraph{Step 5: Uniform upper bound of $\rho$.}
The class $\ell$, as defined in this proof, may be different for different partitions. In order to obtain a bound that applies to all relevant partitions, we replace $1-R^{\ell}$ with $R^{\star}:=\min(R^1,R^2)$, which yields
\begin{equation*}
    \rho \leq 1-\left(\frac{4\tau^2{(R^{\star})^2}n^{-1}}{3\tau^2+\sigma^2}\right)^2 =: \rho_{h}.
\end{equation*}
We recall that $0 < R^{\star} <1$ for any non-trivial class assignment, and thus complete the proof.
\end{proof}

\subsection{Proof of Corollary~\ref{cor:all_partitions:not_fixed:bound:main}}
\label{sec:cor:all_partitions:not_fixed:bound:main}

We restate Corollary~\ref{cor:all_partitions:not_fixed:bound:main} and provide a proof.

\corallpartitionsnotfixedboundmain*

\begin{proof}
    We denote the set of all partitions that are not correct partitions and not empty by $\mathcal{Q}$.
    
    By the Union Bound in Equation~\eqref{eq:union}, the probability that at least one of the partitions is not a fixed point is
    \begin{align}
        \mathbb{P} \left( \exists \;\text{non-fixed point in} \; \mathcal{Q} \right) &= \mathbb{P}\big( \cup_{\mathcal{P}\in\mathcal{Q}} \left( \mathcal{P} \text{ is not a fixed point}\right) \big)   \\
        &\leq \sum_{\mathcal{P}\in\mathcal{Q}} \mathbb{P}\left(  \mathcal{P} \text{ is not a fixed point} \right)
    \end{align}

    The number of partitions that are not correct and not empty is smaller than the number of all possible partitions $2^n$ (ignoring symmetries), which completes the proof.
\end{proof}

\section{Additional Implemention Details}\label{sec:additional_implementation_details}

In this section, we provide additional details about the implementation and benchmarks.
The code is written in Python using JAX \citep{jax2018github} for Lloyd's $k$-means, Numba \citep{lam2015numba} for Hartigan's $k$-means, Python's scikit-learn \cite{scikit-learn} for spectral clustering, and CVXPY \citep{diamond2016cvxpy} for the SDP relaxation. 
We run all our experiments on a Quadro RTX 400 GPU with 8GB of RAM. 

\subsection{Clustering Algorithms}

Here, we provide additional details on the main benchmark algorithms we compared with Lloyd's and Hartigan's $k$-means algorithms.

\paragraph{``PCA + Lloyd''.} A popular heuristic for $k$-means in high-dimensions, where PCA is applied to the data as a preprocessing step before $k$-means \citep{ding2004k}. We note that when presenting a result involving the $k$-means loss, we use the partition produced by the algorithm to calculate the loss for the \textit{original} data, not the reduced-dimension data.

\paragraph{SDP relaxation of the $k$-means problem.} We implemented the SDP proposed in \citep{mixonClusteringSubgaussianMixtures2016}, which is one of a family of state-of-the-art SDPs. Formal optimality results have been obtained for some of the algorithms in this family \citep{peng2007approximating}. We note that SDPs are notoriously difficult to scale. 
 
\paragraph{Spectral clustering.}
A popular clustering algorithm that makes use of the spectral decomposition of a similarity matrix of the data \citep{shiNormalizedCutsImage2000, ngSpectralClusteringAnalysis}. We note that some spectral clustering methods do not aim to solve the $k$-means minimization problem as formulated in Equation~\eqref{eq:kmeans:loss1}. Scaling spectral clustering is also challenging. In our experiments, we use the spectral clustering algorithm implemented in Python's scikit-learn library \citep{scikit-learn} with default parameters, and use Lloyd's $k$-means for the clustering step.

\subsection{Olivetti Faces}
We use the Olivetti faces dataset \citep{samaria1994parameterisation}, available in Python's scikit-learn library \citep{scikit-learn}, and normalize each image to have unit Euclidean norm.

\subsection{20 newsgroups Datasets}\label{app:results:20ng}

Here, we describe the preprocessing steps applied to the 20 Newsgroups dataset \citep{twenty_newsgroups_113} to create the three datasets used for benchmarking clustering algorithms in Section~\ref{sec:results}. First, we removed the headers, footers, and quotes from the 20 newsgroups data. Subsequently, we extracted the documents (i.e., samples) corresponding to specific categories for each dataset:
\begin{itemize}
    \item \textbf{20NG-A}: ``alt.atheism'' and ``comp.graphics''.
    \item \textbf{20NG-B}: ``alt.atheism'', ``comp.graphics'', ``comp.windows.x'', ``misc.forsale'', and ``rec.autos''.
    \item \textbf{20NG-C}: ``alt.atheism'', ``comp.graphics'', ``comp.windows.x'', ``misc.forsale'', ``rec.autos'', ``rec.motorcycles'', ``rec.sport.baseball'', ``sci.crypt'', ``sci.med'', and ``sci.space''.
\end{itemize}
Next, we tokenized each sample using standard TF-IDF with 5000 features, following the implementation in Python's scikit-learn \citep{scikit-learn}. Finally, we randomly sampled $n$ samples with equal probability and without replacement for each dataset: 200 for 20NG-A, 500 for 20NG-B, and 1000 for 20NG-C.

\section{Additional Numerical Results}\label{sec:additional_results}

In this section, we present supplementary results to those described in Section~\ref{sec:results}. We begin by introducing the $k$-Means Win Rate, a metric for comparing partitions based on the $k$-means loss (Equation~\eqref{eq:kmeans:loss1}). 
Next, we provide additional comparisons for the experiments performed in Section~\ref{sec:results}.
Subsequently, we evaluate Lloyd's $k$-means against a simple algorithm based on Principal Component Analysis. We then benchmark the computing time of Lloyd's $k$-means and Hartigan's $k$-means. Finally, we conduct numerical experiments for Theorems~\ref{thm:maindiff}~and~\ref{thm:single_parition:single_sample_fixed}.

\subsection{$k$-Means Win Rate: Comparing partitions through the $k$-Means loss}\label{sec:additional_results:loss_metric}

In the following sections, we compare clustering partitions using an alternative metric to the Normalized Mutual Information, based on the $k$-means loss (Equation~\eqref{eq:kmeans:loss1}), which we denote as the $k$-Means Win Rate. Since the raw loss is difficult to interpret across different parameter settings (e.g., data dimension or noise variance), we defined a score that assesses whether each algorithm produces a partition better or worse than the ground-truth partition with respect to the $k$-means loss. If the loss for the algorithm’s output partition is close to the ground-truth loss (up to a relative difference of $10^{-6}$), the score is zero; if the output is better than the ground truth (lower loss), the score is one; if the ground truth is better, the score is negative one. By running multiple experiments for each parameter setting, we average these scores across all generated instances to obtain an overall performance measure.

\subsection{Additional Results for Synthetic GMM Experiments}\label{sec:additional_results_gmm}

In this section, we revisit the experiments conducted in Section \ref{sec:results:gmm} and present the results in terms of the metric based on the $k$-Means Win Rate (see Section~\ref{sec:additional_results:loss_metric}). We note that when dimensionality reduction is used (``PCA + $k$-means''), we use the partition produced by the algorithm to calculate the loss for the original data, not the reduced-dimension data. 
Additionally, we present complementary results to Figure~\ref{fig:nmi_vs_k}, where we compare the performance of Lloyd's and Hartigan's $k$-means in terms of the NMI for different initialization strategies.

Figure~\ref{fig:gmm_loss_vs_k} illustrates the average scores for each clustering algorithm, Lloyd's and Harigan's $k$-means, and those described in Section~\ref{sec:additional_implementation_details}, against the ground truth in terms of the $k$-Means Win Rate. These results are consistent with the NMI scores shown in Figure~\ref{fig:nmi_vs_k} and demonstrate that, at high dimensions, Lloyd's $k$-means algorithm fails to minimize Equation~\eqref{eq:kmeans:loss1}, and converges to suboptimal fixed points.

Figure~\ref{fig:gmm_loss_vs_init} and Figure~\ref{fig:nmi_vs_init} compare the performance of Lloyd's and Hartigan's $k$-means algorithm in terms of the $k$-Means Win Rate and NMI, respectively, for each initialization strategy. We observe that Hartigan's performance is relatively consistent across initialization methods, whereas Lloyd's performance is significantly reduced with the ``random partition'' initialization. We attribute this discrepancy in Lloyd's $k$-means results to the fact that, when using random centers or \textit{$k$-means++} for initialization, it is possible that the initial centers are associated with distinct ground-truth centers, with higher probability at lower values of $K$. In that case, assigning samples to the nearest cluster in a single iteration of Lloyd's algorithm can be effective.
However, when $K$ is larger, it becomes less likely to obtain centers from distinct ground-truth clusters.
When using a random balanced partition for initialization, there is a higher chance of the initial centers being a balanced mix of samples, which results in the initial partition being a fixed point of Lloyd's $k$-means, as stated in Theorem~\ref{thm:maindiff:typical}. This is illustrated further in the next figure.

Figure~\ref{fig:lloyd_vs_iterations} shows how many iterations, on average, Lloyd's $k$-means takes to reach convergence for each value of $d$ and $\sigma^2$. The figure shows that, at high dimensions, Lloyd's $k$-means converges in one iteration on average. This is consistent with our theory that argues that Lloyd terminates after the first iteration because the next iteration cannot update the partition. 
We observe that the phenomenon is even more pronounced when initialization is performed using random partitions.

\begin{figure}
    \centering
    \includegraphics[width=0.85\linewidth]{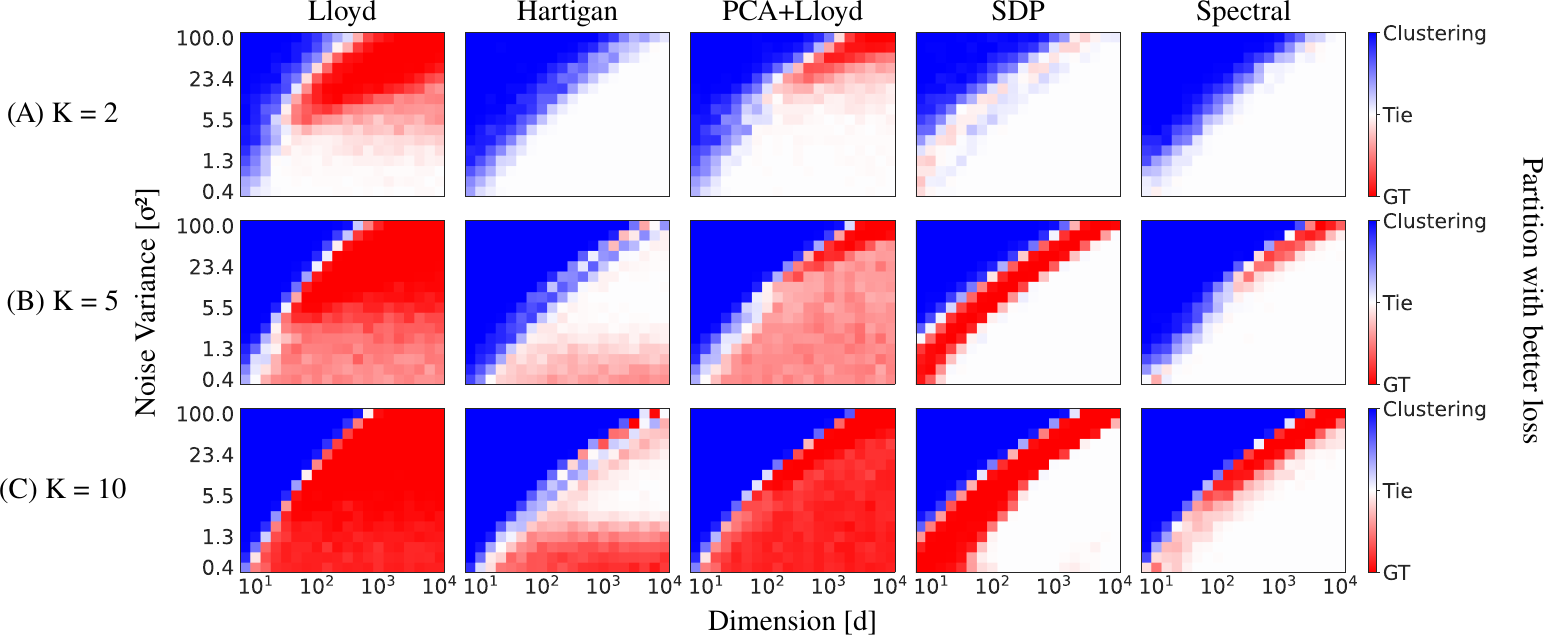}
    \caption{Comparison of the $k$-Means Win Rate (see Section~\ref{sec:additional_results:loss_metric}) obtained with each algorithm for the synthetic GMM dataset for different values of $K$. Each value corresponds to the average of 100 independent experiments, where, for each instance, we sample data from the Gaussian mixture model defined in Model~\ref{model:gmm2} (generalized to $ K\geq 2$) with $\tau^2=1$ and 20 samples per class, and run each algorithm until convergence.}
    \label{fig:gmm_loss_vs_k}
\end{figure}

\begin{figure*}
    \centering
    \includegraphics[width=0.9\linewidth]{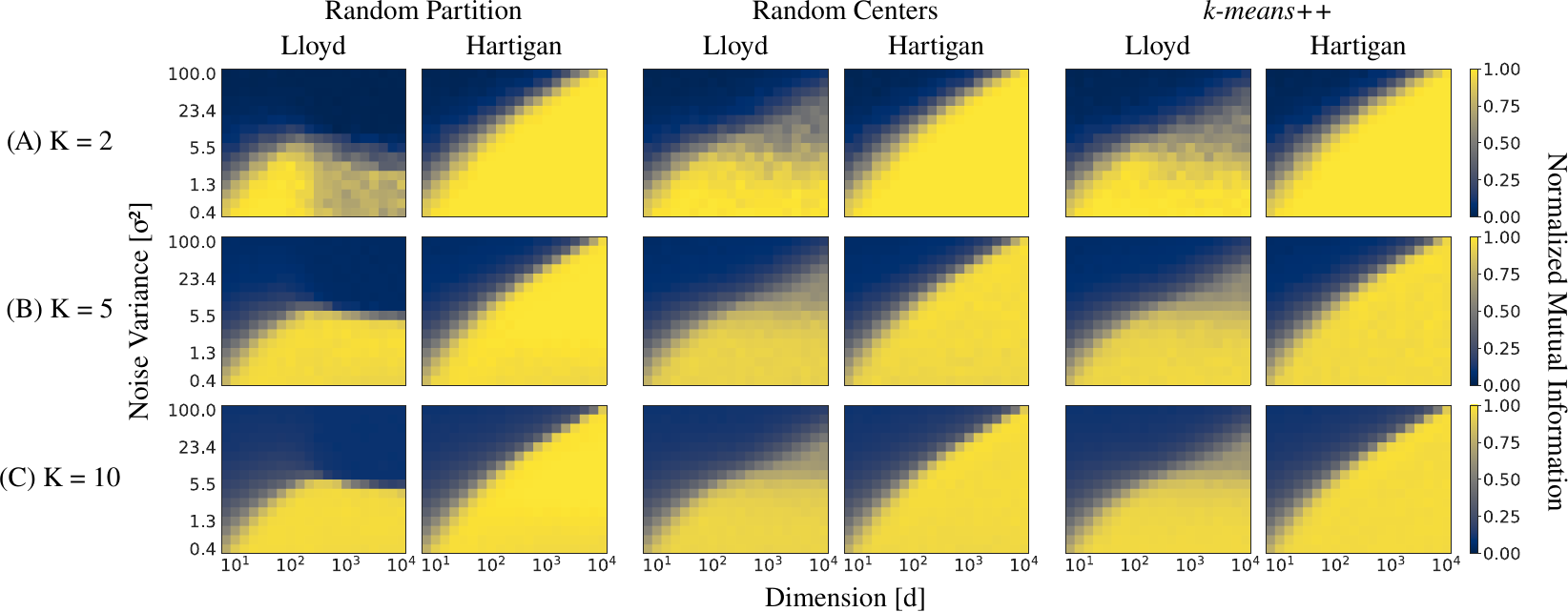}
    \caption{Normalized Mutual Information (NMI) between ground-truth clusters and the clusters obtained from Lloyd's and Hartigan's $k$-means. An NMI value of 1 indicates perfect correlation, while a value of 0 signifies no mutual information between two assignments. Each value corresponds to the average of 100 independent experiments, where, for each instance, we sample 40 samples from the GMM defined in Model~\ref{model:gmm2} (generalized to $ K\geq 2$) with $\tau^2 = 1$ and equally sized clusters. This Figure shows that Lloyd's $k$-means is more sensitive to the initial centers as the data dimension increases, whereas Hartigan's $k$-means is less sensitive. Details for each initialization strategy are available in Section~\ref{sec:results}.
    }
    \label{fig:nmi_vs_init}
\end{figure*}

\begin{figure}
    \centering
    \includegraphics[width=0.85\linewidth]{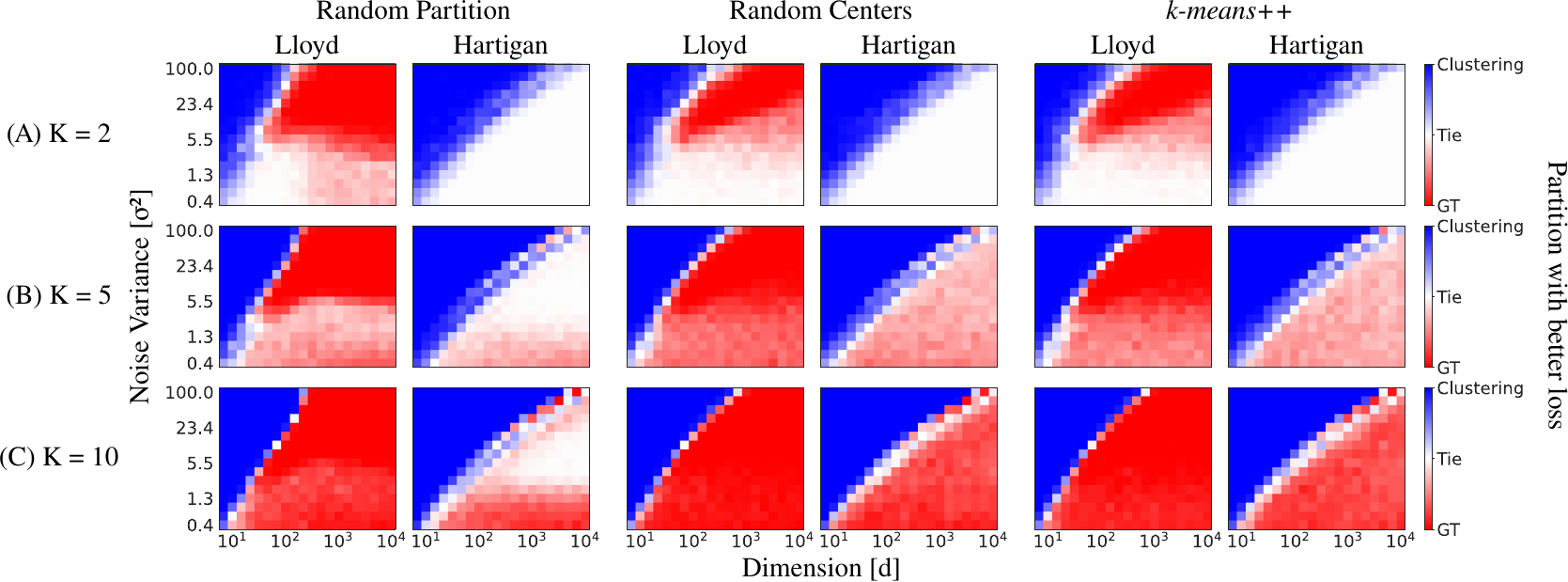}
    \caption{$k$-Means Win Rate (see Section~\ref{sec:additional_results:loss_metric}) comparison of Lloyd's and Hartigan's $k$-means for different initialization strategies and number of classes, $K$. Each value corresponds to the average of 100 independent experiments where, for each instance, we sample data from the Gaussian mixture model defined in Model~\ref{model:gmm2} (generalized to $K\geq 2$) with $\tau^2=1$ and 20 samples per class, and run each algorithm until convergence.}
    \label{fig:gmm_loss_vs_init}
\end{figure}

\begin{figure}
    \centering
    \includegraphics[width=0.75\linewidth]{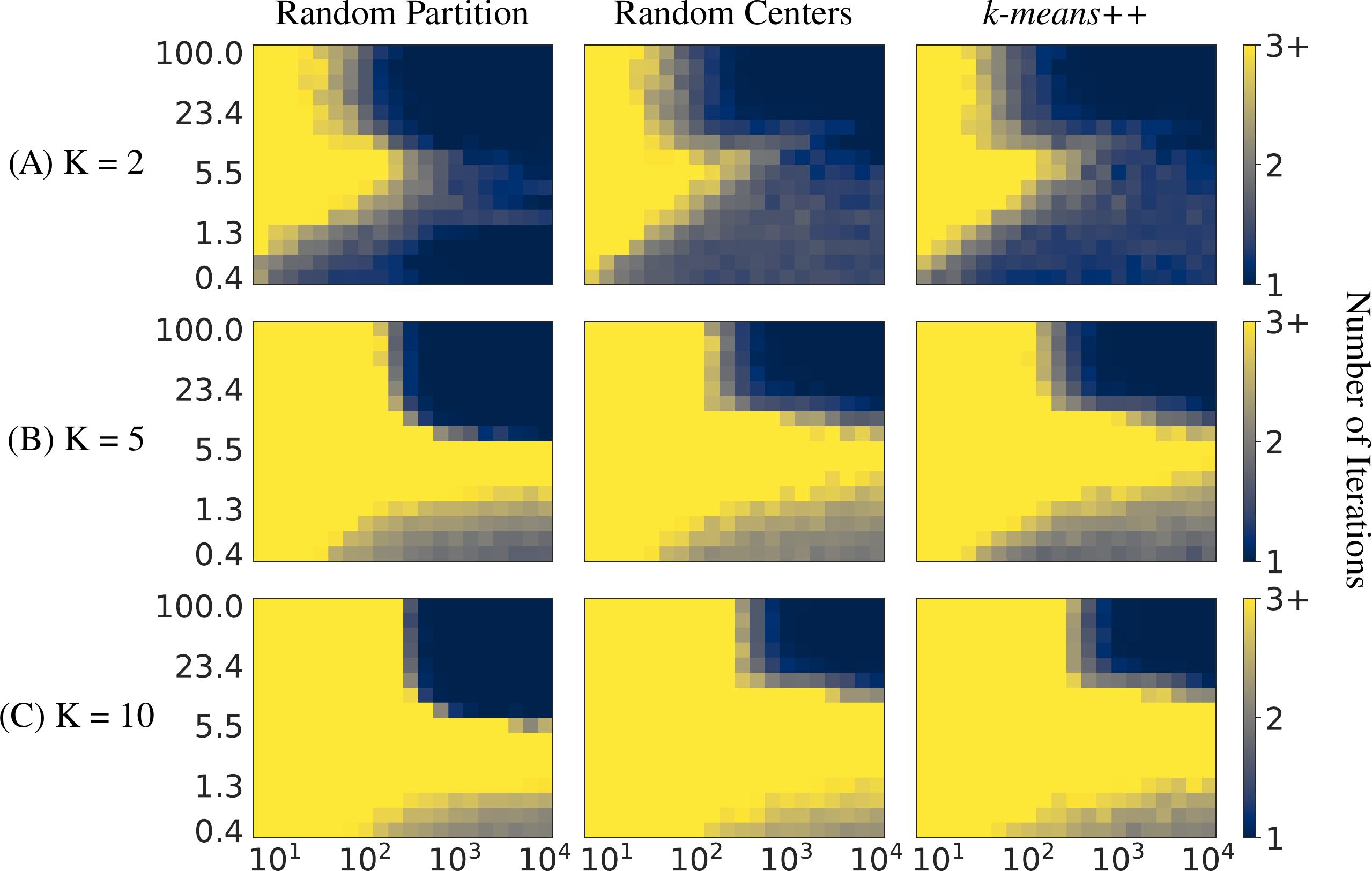}
    \caption{Number of iterations performed by Lloyd's $k$-means for different initialization strategies and number of classes, $K$. Each value corresponds to the average of 100 independent experiments, where, for each instance, we sample data from the Gaussian mixture model defined in Model~\ref{model:gmm2} (generalized to $ K\geq 2$) with $\tau^2=1$ and 20 samples per class, and run Lloyd's $k$-means until convergence. We observe that, in the case of random partition initialization, we exclude the scenario in which the initialization itself might constitute a fixed point. Consequently, the number of iterations will be 1 even if the partition remains unchanged after the first iteration. }
    \label{fig:lloyd_vs_iterations}
\end{figure}

Figure~\ref{fig:nmi_comparison_methods} compares the NMI obtained by Lloyd's $k$-means, Hartigan's $k$-means, SDP, and spectral clustering for the numerical experiments performed in Section~\ref{sec:results}. 
In low-noise regimes, spectral clustering typically achieves slightly higher NMI than other methods, with the gap rarely exceeding $0.1$ when compared to Hartigan's $k$-means. 
For $K=2$ and $K=5$, spectral clustering also appears more robust at the highest noise levels; however, this advantage weakens as $K$ increases, and Hartigan's method becomes competitive and superior in the case of $K=10$. 
We emphasize that spectral clustering is generally less scalable and, depending on the graph construction and normalization, need not optimize the same $k$-means objective as Hartigan's algorithm.

\begin{figure}
    \centering
    \includegraphics[width=1.0\linewidth]{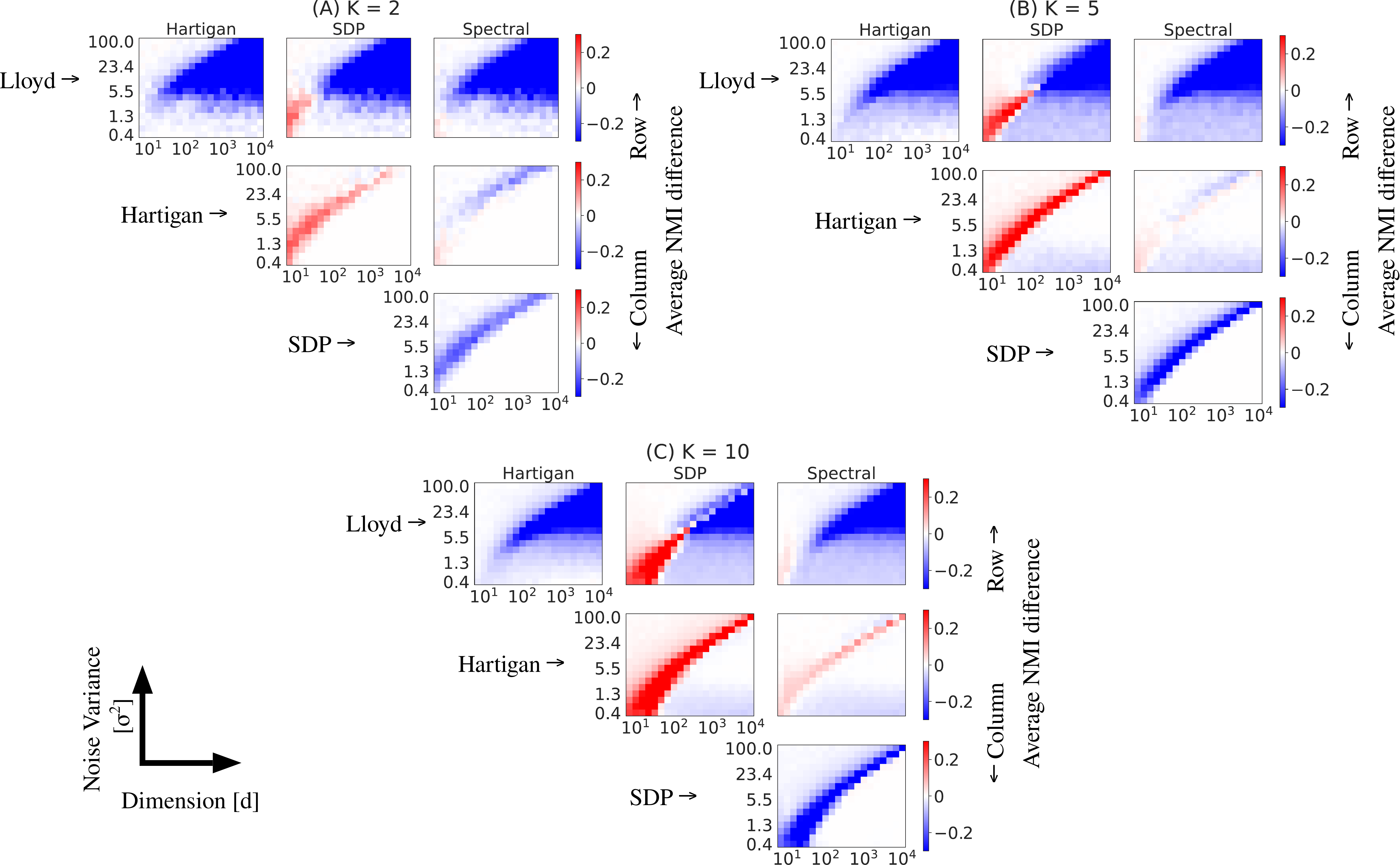}
    \caption{Comparison of Normalized Mutual Information (NMI) values between different clustering approaches. Each value represents the average of 100 independent experiments, where, for each experiment, data is sampled from the Gaussian Mixture Model defined in Section~\ref{sec:formal_setup} with $\tau^2=1$ and 20 samples per class. For each pair of clustering methods, we calculate the difference in NMI and average it across all experiments. The colorbars include arrows indicating which color represents better performance. The rows correspond to the methods in the colorbar that match the arrow pointing upwards, while the columns correspond to the arrow pointing downwards.}
    \label{fig:nmi_comparison_methods}
\end{figure}

\subsection{Lloyd's $k$-Means Algorithm Fails on ``Easy Problems''}

To illustrate that the clustering problem can become ``easy" in certain regimes, we introduce a simplified clustering algorithm based on Principal Component Analysis (PCA), to which we refer as ``PCA + Split". This algorithm is specifically designed for the simple case of two clusters, which is the focus of this paper. In the ``PCA + Split" algorithm, we compute the first principal component of the data and partition the samples based on the sign of their principal component coefficient. Figure \ref{fig:lloyd_vs_pca_split} compares the partition obtained by Lloyd's $k$-means and ``PCA + Split'' and the ground-truth partition. In Figure~\ref{fig:lloyd_vs_pca_split}~(A), the partitions are compared through the $k$-Means Win Rate (see Section~\ref{sec:additional_results:loss_metric}); while in Figure~\ref{fig:lloyd_vs_pca_split}~(B), the partitions are compared through the Normalized Mutual Information (see Definition~\ref{def:preliminaries:nmi}).

\begin{figure}[!h]
    \centering
    \includegraphics[width=0.5\linewidth]{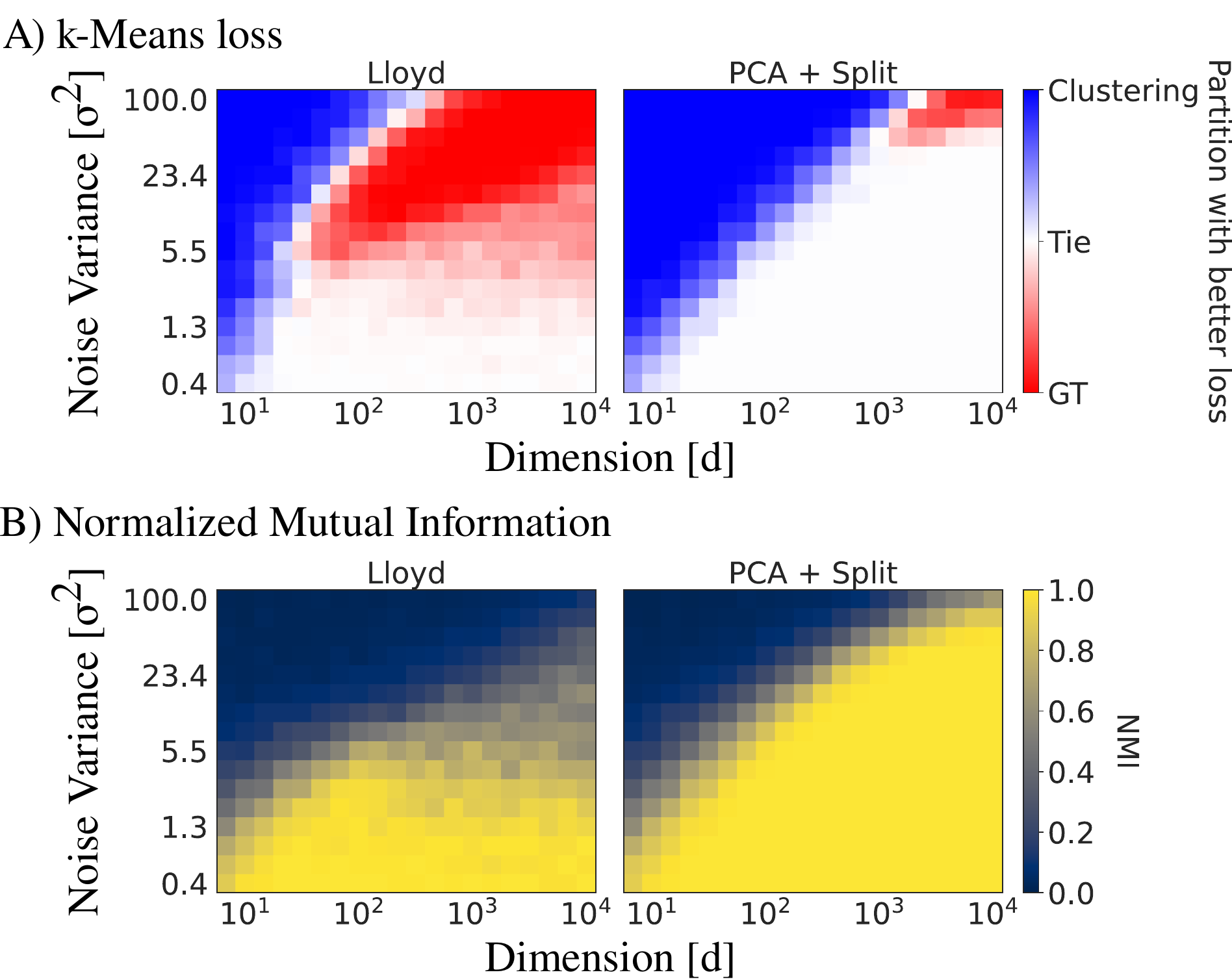}
    \caption{Comparison of the partitions obtained by Lloyd's $k$-means and ``PCA + Split'' and the ground-truth partition via (A) the $k$-Means Win Rate (see Section~\ref{sec:additional_results:loss_metric}) and (B) the Normalize Mutual Information (NMI, see Definition~\ref{def:preliminaries:nmi}). Each value corresponds to the average of $100$ independent experiments, where for each instance we sample data from the GMM defined in Model~\ref{model:gmm2} with $\tau^2 = 1.0$ and $20$ samples per class. We sample the data such that the true clusters are balanced. Lloyd's $k$-means is initialized with \textit{$k$-means++} and is run until convergence.}
    \label{fig:lloyd_vs_pca_split}
\end{figure}

\subsection{Computational performance comparison of Lloyd's and Hartigan's $k$-Means}

Although the timing analysis is beyond the primary scope of this paper, we present a limited set of comparisons to illustrate that Hartigan's algorithm performs similarly to Lloyd's algorithm in terms of execution time. We chose not to include spectral and SDP algorithms in this analysis, as they are often impractical for the scale of problems considered here.
For this comparison, both algorithms were implemented in Numba~\citep{lam2015numba}, with relatively standard improvements over the simplified pseudocode presented in Algorithm~\ref{alg:Lloyd} and Algorithm~\ref{alg:Hartigan}. All experiments were conducted in a single CPU thread, with no GPU acceleration.
We evaluated the algorithms across different dimensions (d), sample sizes (n), and numbers of clusters (K). Data was sampled from the Gaussian Mixture Model (GMM) specified in Model\ref{model:gmm2} (generalized to $ K\geq 2$) with $\tau^2=1$ and $\sigma^2=10$. In this regime, Lloyd's algorithm typically runs for a few iterations before terminating.
For each experiment, a new dataset was generated, and each clustering algorithm was initialized with random samples as centers and executed once. The experiments were repeated $10$ times for each combination of $d$, $n$, and $K$ values. Figure~\ref{fig:lloyd_v_hartigan_compute}~(A) and Figure~\ref{fig:lloyd_v_hartigan_compute}~(B) show the computation times for running each algorithm for a single iteration and until convergence, respectively. We observe that Lloyd's and Hartigan's algorithms exhibit comparable computational costs in this implementation, with Hartigan's method demonstrating better performance at high values of $n$ due to faster convergence. It is important to note that these results may vary with different implementations, especially when utilizing parallel computing resources such as GPUs.

\begin{figure}
    \centering
    \includegraphics[width=0.85\linewidth]{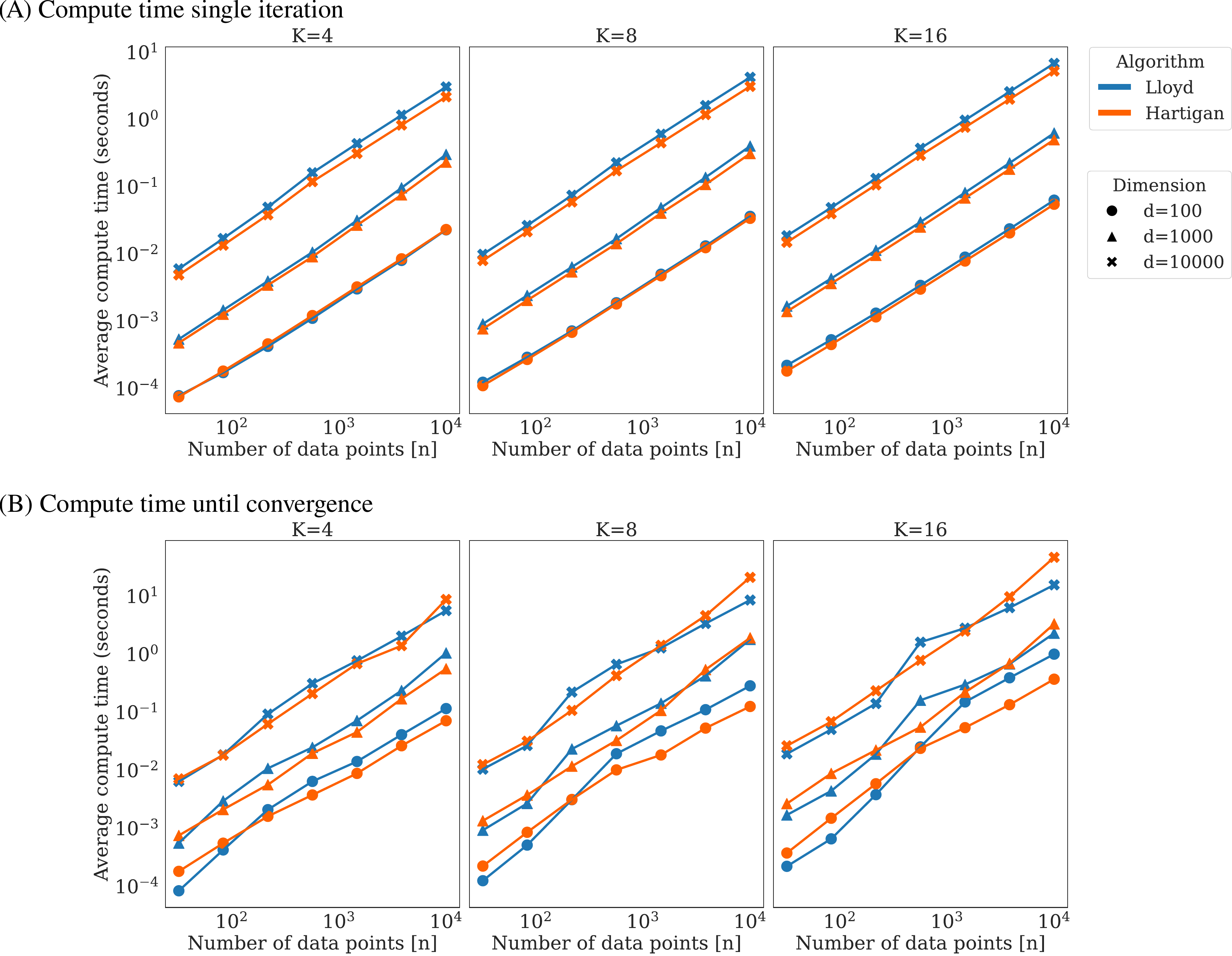}
    \caption{\textbf{Computation Time for Lloyd's and Hartigan's $k$-Means Algorithms}. Each value represents the mean over 10 independent trials, where data is sampled from the Gaussian Mixture Model (GMM) described in Model~\ref{model:gmm2} (generalized for $K\geq2$) with $\tau^2=1$ and $\sigma^2=10$. Panel (A) shows the average computation time for a single iteration of each algorithm, while Panel (B) depicts the average computation time required for each algorithm to reach convergence. The results indicate that Lloyd's and Hartigan's algorithms have comparable computational costs in this implementation, with Hartigan's method occasionally converging faster and, consequently, achieving better performance.}
    \label{fig:lloyd_v_hartigan_compute}
\end{figure}

\subsection{Divergent Behaviors of Fixed Points of the Algorithm}

In this section, we present results for numerical experiments demonstrating Theorems~\ref{thm:maindiff}~and~\ref{thm:single_parition:single_sample_fixed}. Each instance is an independent experiment where the data (centroids and samples) are sampled from the GMM defined in Model~\ref{model:gmm2}, with \(n=40\) and \(\tau^2 = 1\). 
To substitute concrete values into the expression in Theorems~\ref{thm:maindiff}~and~\ref{thm:single_parition:single_sample_fixed} and conform to their assumptions, we consider the case when both the ground-truth clusters and the current clusters are balanced ($R^\ell = R^{\bar{\ell}}=0.5$ and $|C_1|=|C_2|$), but the current clusters are defined such that \(R_{j}^\ell = 0.25\) and \(R_{\overline{j}}^\ell = 0.75\). That is, we define the current clusters such that switching the sample from \(C_j\) to \(C_{\overline{j}}\) would improve the $k$-means loss (Equation \eqref{eq:kmeans:loss1}). In each instance, we examine whether Lloyd's $k$-means or Hartigan's $k$-means would move sample $i$ to the other cluster.

We repeat the experiment \(10^4\) times for each of several values of \(\sigma^2\) and \(d\), and plot in Figure~\ref{fig:divergent_bound} the ratio of instances where the sample $i$ remains in its current cluster under the criteria of each clustering algorithm. The error interval, which is difficult to see, is Wilson’s interval (See Definition \ref{def:Wilson_interval}). The values of \(\sigma^2\) are defined such that \(\sigma^2 = \beta\ \sigma^2_0\), where \(\sigma^2_0\) is the value required for the assumptions of Theorem~\ref{thm:maindiff} to hold (Equation~\eqref{eq:maindiff:req0}).

\begin{figure}
    \centering
    \includegraphics[width=\linewidth]{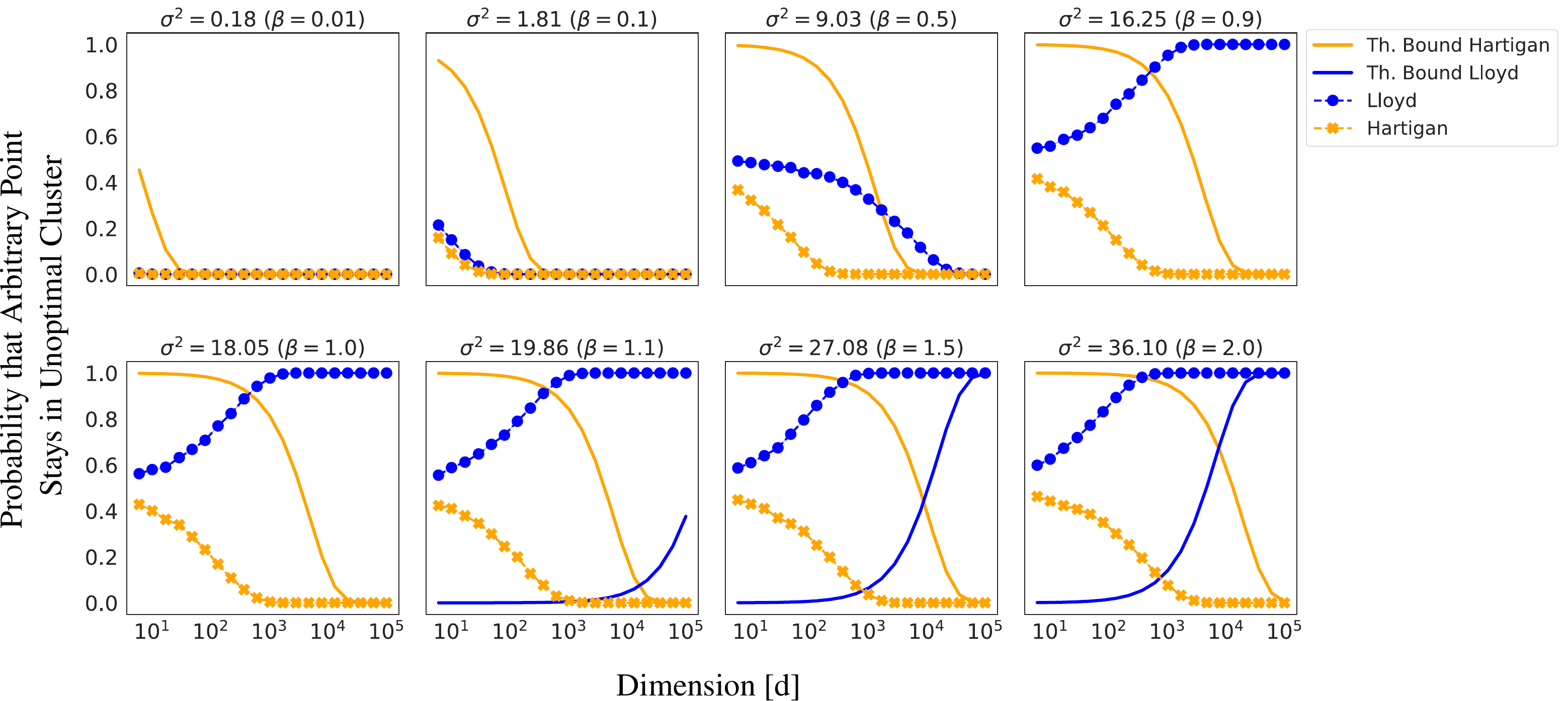}
    \caption{Numerical experiments for Theorems~\ref{thm:maindiff}~and~\ref{thm:single_parition:single_sample_fixed}. For each experiment, data (centers and samples) are sampled from Model~\ref{model:gmm2} for the special case where the ground-truth partition is balanced, \(n=40\) and \(\tau^2=1\). Additionally, we fix the ``current clusters'' such that, given an arbitrary sample \(i\), the purity coefficients are \(R_{j}^\ell = 0.25\) and \(R_{\overline{j}}^\ell = 0.75\); where the sample has true assignment $i \in S_{\ell}^{\star}$ and current assignment \(i \in C_j\). In the plots, we show the ratio of instances in which sample \(i\) remains in cluster \(C_j\) after a step of Lloyd's $k$-means (blue, circles) or Hartigan's $k$-means (orange, crosses). The theoretical bounds for this ratio corresponding to each algorithm are plotted as solid lines. The conditions for Theorem~\ref{thm:maindiff} are satisfied by $\sigma^2 > 18.05$. We observe that as the dimension increases, Lloyd's algorithm rarely moves the sample to the other cluster, in sharp contrast with Hartigan's algorithm.}
    \label{fig:divergent_bound}
\end{figure}

\end{document}